\newenvironment{breakablealgorithm}
  {
   \begin{center}
    ~\refstepcounter{algorithm}
     \hrule height.8pt depth0pt \kern2pt
     \renewcommand{\caption}[2][\relax]{
       {\raggedright\textbf{\fname@algorithm~\thealgorithm} ##2\par}%
       \ifx\relax##1\relax 
         \addcontentsline{loa}{algorithm}{\protect\numberline{\thealgorithm}##2}%
       \else 
         \addcontentsline{loa}{algorithm}{\protect\numberline{\thealgorithm}##1}%
       \fi
       \kern2pt\hrule\kern2pt
     }
  }{
     \kern2pt\hrule\relax
   \end{center}
  }
\DeclareMathOperator*{\argmax}{arg\,max}
\DeclareMathOperator*{\argmin}{arg\,min}
\newcommand{\AlignFootnote}[1]{%
  \ifmeasuring@
  \else
    \iffirstchoice@
      \footnote{#1}%
    \fi
  \fi}
\newtheorem{theorem}{Theorem}
\newtheorem{lemma}{Lemma}
\newtheorem{corollary}{Corollary}
\newtheorem{claim}{Claim}
\newtheorem{definition}{Definition}
\newcommand\Prb[1]{\mathbb{P}\left[#1\right]}
\newcommand\E[1]{\mathbb{E}\left[#1\right]}
\newcommand\Ee[2]{\mathbb{E}_{#1}\left[#2\right]}
\newcommand\paren[1]{\left( #1 \right)}
\newcommand\quotes[1]{``#1''}
\newcommand{\indep}{\perp \!\!\! \perp}
\newcommand\Ind[1]{\mathbbm{1}_{#1}}
\newcommand\N[2]{\mathcal{N}\left(#1,\s #2\right)}
\newcommand\set[1]{\left\{#1\right\}}
\global\long\def\s{\text{ }}
\global\long\def\ddd{,\ldots,}%
\global\long\def\s{\text{ }}
\newcommand{\YY}[2][NULL]{\ifthenelse{\equal{#1}{NULL}}{\Vec{Y}_{#2}}{\paren{\Vec{Y}_{#2}}_{#1}}}
\newcommand{\ee}[2][NULL]{\ifthenelse{\equal{#1}{NULL}}{\Vec{\varepsilon}_{#2}}{\paren{\Vec{\varepsilon}_{#2}}_{#1}}}
\newcommand{\eex}[2][NULL]{\ifthenelse{\equal{#1}{NULL}}{\Vec{\varepsilon}_{#2}^{(r)}}{\paren{\Vec{\varepsilon}_{#2}^{(r)}}_{#1}}}
\newcommand{\px}[2][NULL]{\ifthenelse{\equal{#1}{NULL}}{P_{#2}^{[r]}}{P_{#2}^{[#1]}}}
\newcommand{\deltax}[2][NULL]{\ifthenelse{\equal{#1}{NULL}}{\delta_{#2}^{(r)}}{\delta_{#2}^{(#1)}}}
\newcommand{\epsilonx}[2][NULL]{\ifthenelse{\equal{#1}{NULL}}{\epsilon_{#2}^{<r>}}{\epsilon_{#2}^{<#1>}}}
\global\long\def\poGAMBITTS{\texttt{poGAMBITTS}}
\global\long\def\enspoGAMBITTS{\texttt{ens-poGAMBITTS}}
\global\long\def\foGAMBITTS{\texttt{foGAMBITTS}}
\global\long\def\responseDB{\texttt{response\_db}}
\title{Generator-Mediated Bandits: Thompson Sampling for GenAI-Powered Adaptive Interventions}
\author{
  Marc Brooks\textsuperscript{1}\thanks{Equal contribution.}, \quad
  Gabriel Durham\textsuperscript{1}\footnotemark[1], \quad
  Kihyuk Hong\textsuperscript{1}\footnotemark[1], \quad
  Ambuj Tewari\textsuperscript{1} \\
  \textsuperscript{1}Department of Statistics, University of Michigan, Ann Arbor, MI (USA) \\
  \texttt{\{marcbr, gjdurham, kihyukh, tewaria\}@umich.edu}
}
\begin{document}
\maketitle

\begin{abstract}
    Recent advances in generative artificial intelligence (GenAI) models have enabled the generation of personalized content that adapts to up-to-date user context. While personalized decision systems are often modeled using bandit formulations, the integration of GenAI introduces new structure into otherwise classical sequential learning problems. In GenAI-powered interventions, the agent selects a query, but the environment experiences a stochastic response drawn from the generative model. Standard bandit methods do not explicitly account for this structure, where actions influence rewards only through stochastic, observed treatments. We introduce \textit{generator-mediated bandit--Thompson sampling} (GAMBITTS), a bandit approach designed for this action/treatment split, using mobile health interventions with large language model-generated text as a motivating case study. GAMBITTS explicitly models both the treatment and reward generation processes, using information in the delivered treatment to accelerate policy learning relative to standard methods. We establish regret bounds for GAMBITTS by decomposing sources of uncertainty in treatment and reward, identifying conditions where it achieves stronger guarantees than standard bandit approaches. In simulation studies, GAMBITTS consistently outperforms conventional algorithms by leveraging observed treatments to more accurately estimate expected rewards.
\end{abstract}

\section{Introduction}\label{sec:intro}

Bandit algorithms are ubiquitous in the design of personalized interventions as they provide a flexible framework for learning from context and outcomes over time. Recently, researchers have begun integrating generative artificial intelligence (GenAI) models into these systems to enable richer forms of personalization in areas such as healthcare, e.g., patient care and mobile health (mHealth) \cite{Baig2024, James2024}, personalized education \cite{Jauhiainen2024}, music \cite{Endel2022}, and marketing \cite{Kshetri2024}. By generating rich, tailored content on-the-fly, GenAI integration expands the scope of personalization beyond what fixed libraries can offer, since predefined intervention options are unable to accommodate the full range of user contexts encountered in practice. While this real-time generative capability opens new possibilities for intervention design, it disrupts the action/reward feedback loop that underpins traditional bandit learning. Rather than directly acting on the environment, the agent selects a query---e.g., a prompt to a large language model (LLM)---and the environment receives the stochastic output generated in response to that query, which we refer to as the \quotes{treatment.}
This separation between the agent's action and the delivered treatment introduces additional stochasticity into the intervention process and motivates the need for algorithms that explicitly account for the generative mechanism.

Broadly, we focus on online decision-making problems where the treatment delivered to the environment is a random function of the agent's action. As a motivating case study, we consider the use of LLMs in sequential mHealth interventions. Traditionally, mHealth systems deliver text-based support using data collected from smartphones and wearables, adapting to rapid fluctuations in user context, physiology, and behavior \cite{NahumShani2017, Suffoletto2024}. Bandit algorithms have become a workhorse for this sequential personalization, with personalized Just-in-Time Adaptive Interventions (pJITAIs) serving as a popular setting in which such methods are developed and evaluated \cite{Tewari2017, pHeartSteps}. 

This framework for mHealth intervention has inspired countless recent advances in bandit algorithms; e.g., allowing partial pooling of information across users \citep{Hu2020, Tomkins2021, Huch2024}, accommodating non-stationarity to adapt to shifting user behavior patterns \citep{pHeartSteps}, and integrating high-dimensional covariates \citep{Bastani2020}. While these advancements further optimize bandit learning within this intervention framework, such approaches are fundamentally limited by reliance on a fixed library of text intervention options.

Dynamic message generation via LLMs offers a path forward. In this setting, the agent selects a query based on user context, submits both query and user context to the LLM, and delivers the generated text response as the treatment. The user then produces a reward signal (e.g., behavior or engagement). Crucially, while the agent selects the query, it cannot control the LLM’s response. 

This setup motivates a bandit framework where the agent’s action affects the environment only through a high-dimensional, stochastic treatment. Standard bandit algorithms are not designed to exploit this indirect treatment mechanism, leading to inefficient learning by discarding valuable information contained in the treatment distribution. Section~\ref{sec:prob_setup} more precisely defines this \textit{generator-mediated bandit} (GAMBIT) framework.

We develop a Thompson sampling-based framework tailored to this action/treatment split, incorporating explicit models of the treatment and reward generation processes. This framework serves as the backbone of our approach due to its central role in pJITAI designs, which often rely on Thompson sampling both for its theoretical guarantees and for its compatibility with downstream causal analyses using logged data, a key consideration in mobile health research \citep{pHeartSteps}. We call the proposed approach \textit{generator-mediated bandit--Thompson sampling} (GAMBITTS).

GAMBITTS selects actions by reasoning over both the distribution of treatments induced by each query and the reward those treatments generate, using a two-stage sampling procedure that captures uncertainty across the intervention process. In settings where the agent has simulation access to the treatment-generating mechanism, this structure enables offline estimation of the treatment model, accelerating online adaptation. To address the high dimensionality of treatments in generative settings (e.g., LLM-generated text), GAMBITTS projects each observed treatment into a fixed low-dimensional representation, enabling scalable and sample-efficient learning.

This work responds to the core challenge of learning when actions yield stochastic, observed treatments. The primary contributions are: $(a)$ the GAMBITTS framework for bandit learning with stochastic treatments, $(b)$ algorithmic instantiations of GAMBITTS that vary in their approach to modeling the treatment-generation process, ranging from fully online updates to approaches that pretrain the treatment model, as well as ensemble-based variants that support nonlinear reward modeling, $(c)$ theoretical regret bounds for GAMBITTS, including decompositions that isolate contributions from treatment and reward uncertainty, showing when modeling treatment generation leads to improved learning efficiency, and extending guarantees to flexible, nonlinear reward models, $(d)$ empirical results demonstrating the limitations of standard methods and the performance gains achieved by GAMBITTS-based approaches.

\section{Related Work}\label{sec:rel_work}

In the bandit literature, researchers have illustrated how leveraging the causal structure underlying sequential interventions can improve policy learning \cite{Lattimore2016, Lu2019, Liu2024}. Subsequent work has supported this conclusion, developing bandit algorithms in a wide variety of causal structures \cite{Yabe2018, Xiong2023}. These approaches model a set of binary, interconnected random variables $\mathbf{X} = \set{X_1, \dots, X_N}$ with general causal pathways to a reward $Y$. Actions are framed as interventions on subsets of $\mathbf{X}$, and algorithms are aimed at optimizing reward in settings where the action space corresponds to modifying causally linked features that influence $Y$. In parallel, Sen et al. [2017] and Maiti et al. [2022] explore related environments where unobserved confounding can impact bandit learning \cite{Sen2017, Maiti2022}.

While the works discussed above emphasize general causal structures with limited treatment representations, our focus is the inverse: a specific mediation structure, where actions influence reward through an observed stochastic intermediate, but with rich, high-dimensional treatments (i.e., the intermediate generative output).
This designed mediation, where the effect of the agent’s action operates through the generator’s stochastic response, parallels the instrumental variable (IV) framework, in which an instrument affects outcomes only indirectly via a stochastic channel \cite{IV_background}. 
Recent work has connected IV estimation and bandit design in classical econometric settings \cite{Zhang2022, DellaVecchia2025}, while related research has explored similar structures under noncompliance, where the realized action may differ from the one selected by the agent \cite{Stirn2018, Kveton2023}. Kallus [2018] makes this connection explicit through the \textit{instrument-armed bandit} framework \cite{Kallus2018}. Most recently, Zou et al. [2025] study mediated bandit environments from a psychometric perspective, assuming linear reward models and proposing a modular learning approach \cite{Zou2025}.

Appendix~\ref{app:rel_work} discusses further connections and distinctions between our framework and the methods discussed above, along with related, application-focused, literature from the mHealth community.

\section{Problem Formulation and Notation}\label{sec:prob_setup}

We consider a sequential decision-making setting in which, at each time $t=1\ddd T$, an agent: 
$(i)$ observes a context $X_t\in\mathcal{X}\subseteq \mathbb{R}^{d_C}$, 
$(ii)$ selects an action $A_t\in\mathcal{A}$, where $|\mathcal{A}|=K\in\mathbb{N}$, 
$(iii)$ sends $(A_t, X_t)$ to a generator,
$(iv)$ receives response $G_t\in\mathcal{G}$ and delivers $G_t$ to the environment, and 
$(v)$ observes reward $Y_t\in\mathbb{R}$. The goal of the agent is to learn a policy for choosing actions that maximizes cumulative expected reward over time.

In our motivating example, $G$ represents the output of an LLM, and the treatment space $\mathcal{G}$ consists of natural language text. In such GenAI settings, $\mathcal{G}$ is exceedingly high-dimensional, making it natural to posit the effect of $G$ on $Y$ occurs through a lower-dimensional embedding $Z^*\in\mathcal{Z}^*\subseteq\mathbb{R}^{d^*}$, which serves as a sufficient representation of the treatment for predicting reward. This assumption is supported by recent work in high-dimensional causal inference showing that such lower-dimensional embeddings can yield representations sufficient for treatment effect estimation (see, e.g., Veitch et al. [2019]) \cite{Veitch2019}. Because $Z^*$ is not observed in practice, the analyst must instead specify a working embedding $h(G)=Z\in\mathcal{Z}\subseteq\mathbb{R}^d$. We discuss strategies for constructing $h$ in Section~\ref{sec:meth}. Furthermore, in many practical applications, the agent may restrict which features of $X_t$ are passed to the generator, selecting a subset $X_t^G \subseteq X_t$ to include in the query.

We refer to the resulting learning setup as a \textit{generator-mediated bandit} (GAMBIT) problem. This causal structure is summarized in the directed acyclic graph (DAG) in Figure~\ref{fig:DAG}.
\vspace{-7pt}

\begin{figure}[ht]
    \centering
        \centering
        \includegraphics[width=6.5cm]{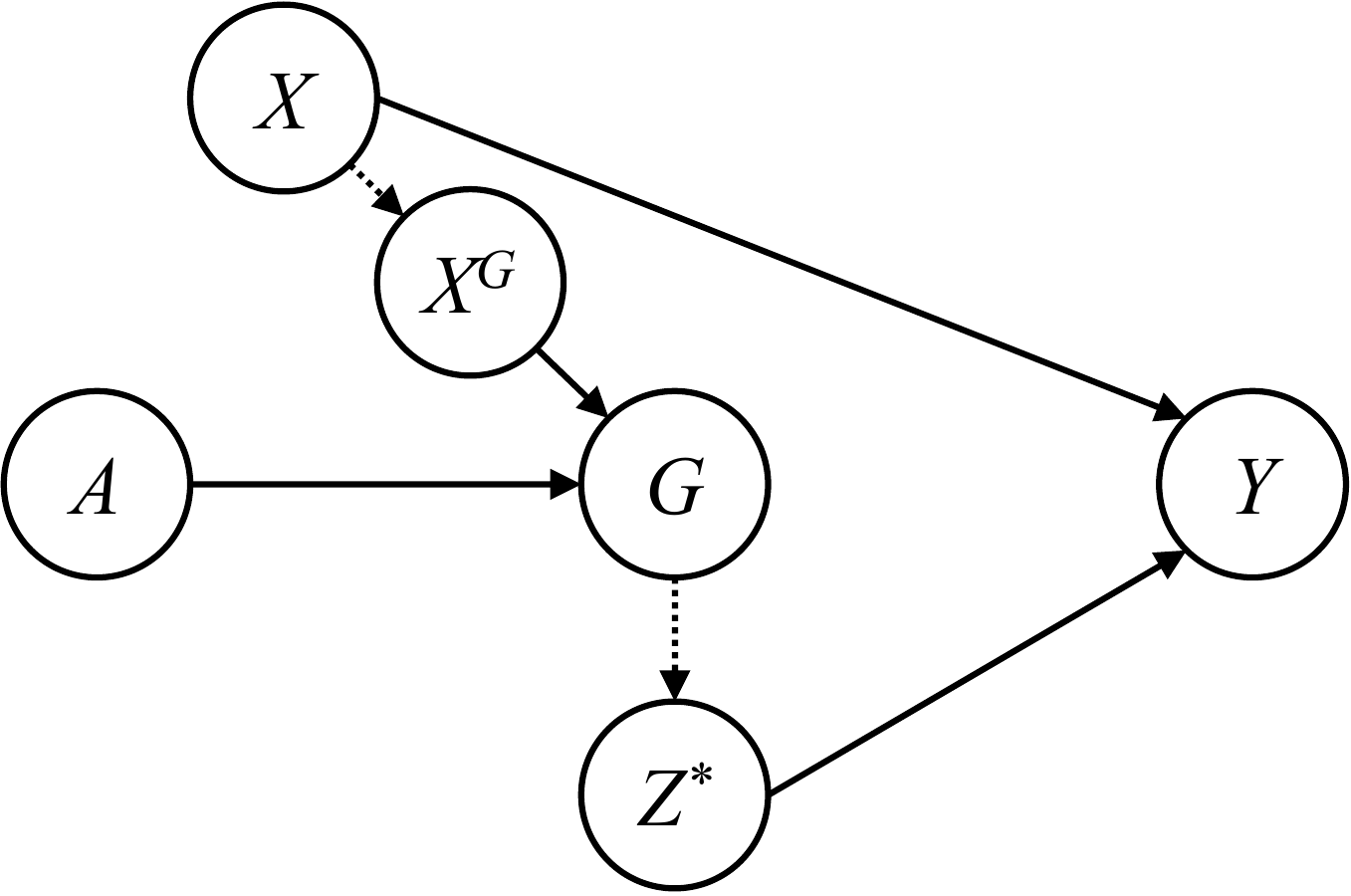}
        \caption{Generator-Mediated Bandit Causal Structure (\small{Dotted lines = deterministic relationships})}
        \label{fig:DAG}
            \small\textit{$A$: Action/Query; $X$: Full Context; $X^G$: Context Sent to Generator;}\\
            \small\textit{$G$:~Generated Response; $Z^*$: True Response Embedding; $Y$: Reward.}
\end{figure}

\vspace{-15pt}

\section{Methodology: Generator-Mediated Bandit--Thompson Sampling}\label{sec:meth}

As discussed above, we seek a Thompson sampling-based framework for our bandit agent, in line with its prevalence in mHealth interventions. We observe two key conditional structures to model:
$$Z\mid{A,X} \text{, and } \E{Y\mid Z, X},$$
which correspond to the treatment and reward models. Modeling these two components comprises the two stages of the proposed Thompson sampler and induces the following parametrized environment:
\begin{align*}
    \text{Treatment model: } Z_t \sim f_1\paren{z; A_t, X_t, \theta_1} & 
    \quad \text{ Treatment prior: }  \theta_1\sim \pi_1 \ ,\\
    \text{Reward model: } \E{Y_t\mid Z_t, X_t} = m_2\paren{Z_t, X_t; \theta_2} & \quad \text{ Reward prior: } \theta_2\sim \pi_2 \ .    
\end{align*}

The construction of working treatment representation $Z=h(G)$ depends on the application domain. In marketing applications, for instance, $G$ may represent AI-generated images, while in our motivating mHealth setting, $G$ corresponds to LLM-generated text, necessitating different approaches to representation. While the GAMBITTS framework is agnostic to the form of $h$, the choice can substantially influence empirical performance. The effectiveness of the GAMBITTS projection approach depends on how well $Z$ preserves reward-relevant information; when $Y\indep Z^*\mid Z,X$, modeling $Y\mid Z,X$ captures the full signal in $Z^*$ (and thus that in $G$ as well). As the conditional dependence $Y\mid Z, X$ diverges from $Y\mid Z^*,X$, performance may degrade. We explore this phenomenon empirically in Section~\ref{sec:sim}. In our motivating setting, many projection strategies have been proposed for natural language-based interventions; see, e.g., \cite{He2016, Veitch2019, Pryzant2021, Egami2022, Imai2024, Wang2024}.

We propose two general approaches for GAMBITTS implementation: a \textit{fully online} variant that updates both treatment and reward models sequentially, and a \textit{partially online} variant that pretrains the treatment model using offline data and updates only the reward model during deployment. To improve flexibility and accommodate nonlinear reward relationships, we extend both variants with the ensemble-based sampling strategies introduced in Lu and Van Roy [2017] \cite{Lu2017}.

The algorithms below focus on settings where the decision to intervene (e.g., to send the user a text message) has already been made. In practice, mHealth interventions often include an initial decision of whether to send a message at all; GAMBITTS can be trivially extended to accommodate this preliminary selection step, as discussed in Appendix~\ref{app:opt_prompt}.

\subsection{Fully Online Generator-Mediated Bandit--Thompson Sampling}
Let $\mathcal{D}_t=\set{X_s, A_s, G_s, Y_s}_{s=1}^{t-1}$ represent the data collected up to time $t$. 
Algorithm~\ref{alg:foGAMBITTS} below describes a fully online approach for selecting action $A_t$ based on $\mathcal{D}_t$. 

\begin{breakablealgorithm}
\caption{\texttt{Fully Online GAMBITTS (foGAMBITTS)}}\label{alg:foGAMBITTS}
\begin{algorithmic}[1]
\Statex\textbf{Inputs}: Data $\mathcal{D}_t$, priors $\pi_1$, $\pi_2$, models $f_1$, $m_2$, current context $x_t$.
\State\textbf{Derive} posterior distributions $P_1\paren{\theta_1\mid \mathcal{D}_t}$ and $P_2\paren{\theta_2\mid \mathcal{D}_t}$.
\State\textbf{Sample}\label{alg:foGAMBITTS:post_samp} $\theta_{1,t}\sim P_1\paren{\theta_1\mid \mathcal{D}_t}$ and $\theta_{2,t}\sim P_2\paren{\theta_2\mid \mathcal{D}_t}$.
\State\textbf{For}\label{alg:foGAMBITTS:mean_calc} each $a\in\mathcal{A}$, calculate $$\Ee{\theta_{1,t},\theta_{2,t}}{Y\mid a, x_t}=\int_{\mathcal{Z}} m_2\paren{z, x_t; \theta_{2,t}} f_1(z; a, x_t, \theta_{1,t})dz.$$
\State\textbf{Set} $A_t= \displaystyle\argmax_{a'\in\mathcal{A}}\Ee{\theta_{1,t},\theta_{2,t}}{Y\mid a', x_t}$.
\end{algorithmic}
\end{breakablealgorithm}

The integral in Step~\ref{alg:foGAMBITTS:mean_calc} can often be intractable in closed form, but can be efficiently approximated via Monte Carlo approaches. Furthermore, because GAMBITTS selects agents via posterior sampling, it induces well-defined randomization probabilities that can be used for downstream causal inference or offline evaluation. Specifically, the probability of selecting action $a$ at time $t$ is
$$ p^*_{t,a}=
    \int_{\Theta_2}\int_{\Theta_1}
    \Ind{\set{\displaystyle a = \displaystyle\argmax_{a'\in\mathcal{A}}\Ee{\theta_1,\theta_2}{Y\mid a', x_t}}}
P_1\paren{\theta_1\mid \mathcal{D}_t}P_2\paren{\theta_2\mid \mathcal{D}_t}
d\theta_1 d\theta_2.
$$
In settings where these probabilities are required for post hoc causal analyses (e.g., analysis of pJITAIs), the full set $\set{p^*_{t,a}}_{a\in\mathcal{A}, t=0\ddd T}$ can be approximated retrospectively, after the study period, when additional computational resources are available \citep{Boruvka2018, pHeartSteps}. Alternatively, if sufficient computing power is available at deployment time, the agent may compute these probabilities online and sample from them directly. As with the integral in Step~\ref{alg:foGAMBITTS:mean_calc}, the quantities $p^*_{t,a}$ can be estimated via Monte Carlo sampling from the posterior distributions over $\theta_1$ and $\theta_2$.

\subsection{Partially Online Generator-Mediated Bandit--Thompson Sampling}\label{sec:meth:poGAMBITTS}
In settings where the agent has simulation access to the generator, or access to additional data sources, it may be possible to learn the treatment model independently of user interactions. For example, in our motivating application, an mHealth intervention designer could query an LLM multiple times without sending any generated text to users. 

This capability motivates a \textit{partially online GAMBITTS} (\poGAMBITTS). The \poGAMBITTS{} framework exploits simulation access to the generator to offload learning of the first-stage distribution, $Z\mid A,X$ to an offline phase. Empirically, this decomposition can substantially improve online performance, as discussed in Section~\ref{sec:sim}. 

Here, we consider the generator $G$ to be a random function, which the agent can query using prompt-context pairs $(a,x)\in\mathcal{A}\times\mathcal{X}$. In the \poGAMBITTS{} 
 framework, the agent can obtain an offline approximation of the treatment distribution $f_1$ by: 
$(i)$ simulating $M_{off}$ prompt-context pairs $\set{\paren{a^1,x^1}\ddd \paren{a^{M_{off}},x^{M_{off}}}}$, 
$(ii)$ using the generator to simulate $G(a^i,x^i)$, with $g^i(a^i,x^i)$ denoting the observed output and $z^i$ denoting its observed $\mathcal{Z}$ embedding, for $i=1\ddd M_{off}$, and 
$(iii)$ taking $f^{off}_1(z; A_t, X_t)$ to be the empirical conditional distribution observed in $\mathcal{D}^{off}:=\set{\paren{a^i,x^i, z^i}}_{i=1}^{M_{off}}$.

In cases where $|\mathcal{X}|$ is finite, the agent can repeatedly prompt the generator for each prompt-context pair $(a,x)\in\mathcal{A}\times\mathcal{X}$ and directly use the empirical distribution of $Z\mid A, X$ in $\mathcal{D}^{off}$ as $f^{off}_1$. When $|\mathcal{X}|$ is infinite, the agent must instead fit a model to estimate the conditional distribution of $Z\mid A, X$ based on $\mathcal{D}^{off}$, and use this estimate as the treatment model.

\begin{breakablealgorithm}
\caption{\texttt{Partially Online GAMBITTS (poGAMBITTS)}}\label{alg:poGAMBITTS}
\begin{algorithmic}[1]
\Statex\textbf{Inputs}: Data $\mathcal{D}_t$, prior $\pi_2$, model $m_2$, current context $x_t$, pretrained model $f_1^{off}$.
\State\textbf{Derive} the posterior distribution $P_2\paren{\theta_2\mid \mathcal{D}_t}$.
\State\textbf{Sample}\label{alg:poGAMBITTS:post_samp} $\theta_{2,t}\sim P_2\paren{\theta_2\mid \mathcal{D}_t}$.
\State\textbf{For} each $a\in\mathcal{A}$, calculate
$$\Ee{\theta_{2,t}}{Y\mid a, x_t} = \int_{\mathcal{Z}} m_2\paren{z, x_t; \theta_{2,t}} f^{off}_1(z; a, x_t)dz.$$
\State\textbf{Set} $A_t=\displaystyle\argmax_{a'\in\mathcal{A}}\Ee{\theta_{2,t}}{Y\mid a', x_t}$.
\end{algorithmic}
\end{breakablealgorithm}

As in \foGAMBITTS, the probability of selecting action $a$ at time $t$ can be written explicitly, here as
$$\int_{\Theta_2}
    \Ind{\set{\displaystyle a = \displaystyle\argmax_{a'\in\mathcal{A}}\Ee{\theta_2}{Y\mid a', x_t}}}
P_2\paren{\theta_2\mid \mathcal{D}_t}
d\theta_2,$$
which can be computed or approximated retrospectively if randomization probabilities are needed for post hoc causal analyses. Again, we can approximate $\Ee{\theta_{2,t}}{Y\mid A_t=a, X_t=x_t}$ via Monte Carlo.

Moreover, one may wish to blend the approaches in Algorithms~\ref{alg:foGAMBITTS} and \ref{alg:poGAMBITTS} by using simulated offline data to form a posterior distribution over the treatment model parameters, $P\paren{\theta_1 \mid \mathcal{D}^{off}}$. During online deployment, the agent may either continue updating this posterior online, or fix it and sample throughout learning (avoiding the computational cost of full online $\theta_1$ posterior updates).

\subsection{Ensemble-Based GAMBITTS Approaches}
Algorithms~\ref{alg:foGAMBITTS} and~\ref{alg:poGAMBITTS} require explicit posterior updates for reward model parameters ($\theta_2$), restricting the class of models that can be feasibly used. This restriction can be particularly limiting for two reasons. First, flexible models are often desirable to avoid strong parametric assumptions on $\E{Y\mid Z, X}$. Second, when $Z$ is an imperfect proxy for $Z^*$, flexible models may help capture residual information and improve the approximation of $\E{Y\mid Z^*, X}$ through $m_2(Z, X)$.

To address these challenges, we propose an ensemble-based posterior approximation strategy, drawing on the ensemble sampling framework introduced in Lu and Van Roy [2017] \cite{Lu2017}. This approach maintains an ensemble of models to approximate the posterior distribution over parameters, making posterior sampling tractable even for complex or nonlinear models. We present ensemble-based variants of Algorithms~\ref{alg:foGAMBITTS} and \ref{alg:poGAMBITTS} (\texttt{ens-foGAMBITTS} and \texttt{ens-poGAMBITTS}, respectively) in Appendix~\ref{app:ens_algs}. In Section~\ref{sec:sim}, we evaluate the performance of \texttt{ens-poGAMBITTS} using a multilayer perceptron (MLP) model for $\E{Y\mid Z,X}$.


\section{Theoretical Guarantees}\label{sec:theory}

This section explores regret guarantees for the GAMBITTS algorithms introduced in Section~\ref{sec:meth}. 
We revisit the generator-mediated bandit learning problem from Section~\ref{sec:prob_setup}, now from a theoretical perspective. 
Throughout, we assume a finite context space (i.e., $|\mathcal{X}|=C<\infty$) and focus on the correctly specified setting where the working treatment projection matches the true underlying treatment embedding (i.e., $Z=Z^*$).

Under these assumptions, treatment and reward generation at time $t$ follows a two-stage stochastic process. First, given context $x_t$ and action $a_t$, the generator samples an intermediate treatment variable $z_t \in \mathbb{R}^d$ from a distribution $\xi^{\theta_1}_{x_t, a_t}$ (with density $f_1\paren{z;x_t,a_t,\theta_1}$) supported on the ball $\mathbb{B}_d(B) = \{ z \in \mathbb{R}^d : \Vert z \Vert_2 \leq B \}$. Second, the agent receives a noisy reward $y_t = m_2(z_t, x_t ; \theta_2) + \eta_t$, where $\eta_t$ is $\sigma_2$-subgaussian noise. Here, $\theta=\set{\theta_1,\theta_2}$ are unknown parameters. We also assume that the mean reward distribution $m_2\paren{Z,x;\theta_2}$, under $Z\sim\xi^{\theta_1}_{x, a}$, is  $\sigma_1$-subgaussian for all $(x,a)\in \mathcal{X} \times \mathcal{A}$.

The goal of the agent is to minimize the time-$T$ cumulative Bayesian regret, defined as
$$
\text{BR}_T \coloneqq \E{\sum_{t = 1}^T \paren{\Ee{\xi^{\theta_1}_{X_t, A_t^\ast}}{m_2(Z, X_t; \theta_2) \mid X_t, A_t^\ast, \theta} - \Ee{\xi^{\theta_1}_{X_t, A_t}}{m_2(Z, X_t ; \theta_2)\mid X_t, A_t, \theta}}},
$$
where the inner expectations are over $Z$ and the outer expectations over $X$, $\theta_1$, $\theta_2$, and agent actions $A$. The benchmark $a^*_t$ denotes the Bayes-optimal action in context $x_t$, defined by $A^*_t:=\displaystyle\argmax_{a \in \mathcal{A}} \Ee{\xi^{\theta_1}_{x_t, a}}{m_2(Z, x_t ; \theta_2)\mid \theta}$.

As a baseline, we consider a Thompson sampling agent that models the problem as a conventional multi-armed contextual bandit over $\mathcal{X}\times\mathcal{A}$, ignoring the generator-mediation structure. We refer to this as a \quotes{standard} Thompson sampling approach, as it captures a natural modeling choice that treats rewards as a direct consequence of actions. This serves as a point of comparison in both our theoretical and empirical results. As shown in Appendix~\ref{app:theory}, its Bayesian regret satisfies
\begin{equation}\label{eq:BRT_std}
    \text{BR}^{standard}_T \leq \widetilde{\mathcal{O}}\paren{\paren{\sigma_1 + \sigma_2}\sqrt{CKT}}
\end{equation}
While the regret bound for the standard Thompson sampling agent is tight (i.e., matching known lower bounds), it becomes unfavorable when either the context or action space is large. This motivates the GAMBITTS approach, which can exploit shared structure in the treatment space to support generalization across arms. We begin by analyzing \poGAMBITTS, as this approach aligns more closely with our motivating setting, then turn to the \foGAMBITTS{} variant. Appendix~\ref{app:theory} presents the proofs of all results discussed below, along with additional theoretical results.

\subsection{Regret Analysis for \poGAMBITTS}
We begin by analyzing the regret of \poGAMBITTS. This algorithm uses empirical conditional distributions, with densities $f_1^{off}\paren{z; x, a}$, that approximate the true treatment distributions $\xi^{\theta_1}_{x,a}$ and define the estimated mean reward
$$
\widehat{m}_2(x, a; \theta_2) = \int_{\mathcal{Z}} m_2(z, x ; \theta_2) f_1^{off}(z; x, a) dz.
$$

\poGAMBITTS{} can be viewed as a Thompson sampling algorithm operating on the estimated mean reward function $\widehat{m}_2$. More generally, we can consider other traditional bandit algorithms operating in the \quotes{partially online stochastic treatment} setting, where actions are selected based on an estimated reward model derived from an empirical treatment distribution. Although actions are chosen with respect to $\widehat{m}_2$, rewards are generated according to the true mean $m_2$, introducing a form of model misspecification. Nevertheless, we show that if the empirical treatment model is sufficiently close to the true distribution, then running a no-regret contextual bandit algorithm on $\widehat{m}_2$ still yields no-regret guarantees for the original problem.

In the following results, we let $\varepsilon>0$ and consider empirical treatment models $f_1^{off}$ satisfying
\begin{equation}\label{eq:foGAMBITTS_ass}
    KL\paren{f_1^{off}(\cdot ; x, a) \big\Vert f_1\paren{.; x, a, \theta_1}} \leq \varepsilon
    \quad \text{for all }
    (x, a) \in \mathcal{X} \times \mathcal{A}.
\end{equation}
As shown in Appendix~\ref{app:theory}, under regularity conditions on $\xi^{\theta_1}_{X,A}$, $f_1^{off}$ will satisfy \eqref{eq:foGAMBITTS_ass} given $\operatorname{poly}\paren{d,\varepsilon^{-1}}$ draws from the simulator for each $(x, a) \in \mathcal{X} \times \mathcal{A}$.

\begin{theorem}\label{thm:poGAMBITTS_red}
Let $f_1^{off}$ be an empirical model for the treatment distribution satisfying \eqref{eq:foGAMBITTS_ass}. Let \textnormal{Alg} be a contextual bandit algorithm that selects actions based on the estimated reward function $\widehat{m}_2$.
Then, running $\text{Alg}$ in the partially online stochastic treatment setting gives $T$-step Bayesian regret $BR^{Alg}_T$, with
$$
BR_T^{Alg} \leq \mathcal{O}\paren{\widehat{BR}^{\text{Alg}}_T + T\sqrt{\varepsilon T}},
$$
where $\widehat{BR}^{\text{Alg}}_T$ is the $T$-step Bayesian regret of \textnormal{Alg} with respect to $\widehat{m}_2$.
\end{theorem}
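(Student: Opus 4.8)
The plan is to treat \poGAMBITTS{} (and, more generally, any Alg that selects actions through $\widehat m_2$) as a bandit agent running in a \emph{misspecified} environment, and to control its true regret with two ingredients: (i) a pointwise bound on the gap between the true mean reward $\bar m_2(x,a;\theta):=\Ee{\xi^{\theta_1}_{x,a}}{m_2(Z,x;\theta_2)\mid\theta}$ and the estimated mean $\widehat m_2(x,a;\theta_2)$, and (ii) a coupling between the real interaction trajectory (in which rewards have conditional mean $\bar m_2$) and an \emph{idealized} trajectory in which rewards genuinely have mean $\widehat m_2$ and whose Bayesian regret is, by definition, $\widehat{BR}_T^{\mathrm{Alg}}$. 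I expect ingredient (i) to contribute only a lower-order $\mathcal{O}(T\sqrt{\varepsilon})$ term, while ingredient (ii) --- which accounts for the fact that Alg's observed rewards, and therefore its chosen actions, differ between the two environments --- produces the dominant $T\sqrt{\varepsilon T}$ term.

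For the pointwise gap I would use a change-of-measure argument. Since $m_2(Z,x;\theta_2)$ is $\sigma_1$-subgaussian under $Z\sim\xi^{\theta_1}_{x,a}$ and $KL(f_1^{off}(\cdot;x,a)\,\Vert\,f_1(\cdot;x,a,\theta_1))\le\varepsilon$, the Donsker--Varadhan inequality gives, for every $\lambda>0$,
$$
\lambda\paren{\widehat m_2(x,a;\theta_2)-\bar m_2(x,a;\theta)} \le \varepsilon + \tfrac{1}{2}\lambda^2\sigma_1^2 ,
$$
and optimizing over $\lambda$ (and repeating with $-\lambda$) yields the uniform estimate $\lvert \widehat m_2(x,a;\theta_2)-\bar m_2(x,a;\theta)\rvert \le \sigma_1\sqrt{2\varepsilon}$ for all $(x,a)$. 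Writing $\widehat A_t^\ast=\argmax_{a}\widehat m_2(X_t,a;\theta_2)$, I would then split each per-step true-regret term as
$$
\bar m_2(X_t,A_t^\ast;\theta)-\bar m_2(X_t,A_t;\theta)
\le \paren{\widehat m_2(X_t,\widehat A_t^\ast;\theta_2)-\widehat m_2(X_t,A_t;\theta_2)} + 2\sigma_1\sqrt{2\varepsilon},
$$
applying $\lvert\bar m_2-\widehat m_2\rvert\le\sigma_1\sqrt{2\varepsilon}$ twice together with optimality of $\widehat A_t^\ast$ under $\widehat m_2$. Summing over $t$ and taking expectations bounds $BR_T^{\mathrm{Alg}}$ by the expected $\widehat m_2$-regret \emph{evaluated along the real trajectory} plus $2T\sigma_1\sqrt{2\varepsilon}$.

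It remains to replace the $\widehat m_2$-regret along the real trajectory by $\widehat{BR}_T^{\mathrm{Alg}}$, the regret of Alg in the idealized environment (same context stream and same policy, but treatments drawn from $f_1^{off}$ so that rewards truly have mean $\widehat m_2$). Here I would invoke a simulation/coupling bound: the $\widehat m_2$-regret is a bounded functional of the trajectory $(X_{1:T},A_{1:T})$, so its expectations under the two trajectory laws differ by at most its range times their total-variation distance. The two laws share the context and action kernels and differ only in the reward channel; by the chain rule for KL and the data-processing inequality (rewards are obtained by applying the fixed stochastic map $z\mapsto m_2(z,x;\theta_2)+\eta$ to $z$), the per-step reward KL is at most $KL(f_1^{off}\Vert f_1)\le\varepsilon$, the trajectory KL is at most $T\varepsilon$, and Pinsker gives total variation $\le\sqrt{T\varepsilon/2}$. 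Multiplying by the $\mathcal{O}(T)$ range of the regret functional yields the $\mathcal{O}(T\sqrt{\varepsilon T})$ term, and combining with the previous paragraph gives $BR_T^{\mathrm{Alg}}\le \mathcal{O}(\widehat{BR}_T^{\mathrm{Alg}}+T\sqrt{\varepsilon T})$.

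The main obstacle I anticipate is this coupling step: specifying the idealized environment so that its regret is exactly $\widehat{BR}_T^{\mathrm{Alg}}$, and then legitimately bounding the range of the per-step $\widehat m_2$-gaps so the (total variation)$\times$(range) estimate applies. Because rewards are only subgaussian rather than bounded, I expect to need a truncation argument, or to exploit the treatment support $\mathbb{B}_d(B)$ to confine the gaps to a range of size $\mathcal{O}(1)$ with high probability. I would also be careful that both the Donsker--Varadhan step and the data-processing step use the assumed direction $KL(f_1^{off}\Vert f_1)\le\varepsilon$ rather than its reverse. The pointwise change-of-measure bound, by contrast, should be routine.
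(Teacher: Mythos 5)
Your proposal is correct and follows essentially the same route as the paper's proof: both bound the dominant $T\sqrt{\varepsilon T}$ term by comparing the true and idealized trajectory laws via the bandit KL-decomposition, data processing, and Pinsker's inequality, multiplied by the $\mathcal{O}(TB)$ range of the regret functional, and both handle the per-step discrepancy between the true and estimated mean rewards as a lower-order $\mathcal{O}(T\sqrt{\varepsilon})$ correction. The only cosmetic difference is that you control that pointwise gap with Donsker--Varadhan and the $\sigma_1$-subgaussianity of $m_2(Z,x;\theta_2)$, whereas the paper uses the boundedness $\lvert m_2\rvert\le B$ together with total variation and Pinsker; either works.
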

Theorem~\ref{thm:poGAMBITTS_red} bounds the agent’s true regret in terms of its regret under the estimated reward model induced by its misspecified treatment distribution. For \poGAMBITTS{} with a linear reward model, we get the following result.

\begin{corollary}\label{cor:poGAMBITTS_lin}
Let $f_1^{off}$ satisfy \eqref{eq:foGAMBITTS_ass}. If $m_2\paren{z, x_t; \theta}$ is linear in $z\in\mathbb{R}^d$, then the Bayesian regret of \poGAMBITTS{} (Algorithm~\ref{alg:poGAMBITTS}) satisfies
$
\text{BR}^{po:lin}_T \leq \widetilde{\mathcal{O}}\paren{\sigma_2 d \sqrt{T} + T\sqrt{\varepsilon T}}.
$
\end{corollary}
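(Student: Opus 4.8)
The plan is to invoke Theorem~\ref{thm:poGAMBITTS_red} with $\mathrm{Alg}$ taken to be the Thompson sampling procedure that \poGAMBITTS{} runs on the estimated reward function $\widehat m_2$ (the text already notes that \poGAMBITTS{} \emph{is} such a sampler). The theorem then immediately contributes the misspecification term $T\sqrt{\varepsilon T}$ and reduces the entire task to bounding $\widehat{BR}^{\mathrm{TS}}_T$, the Bayesian regret of Thompson sampling in the idealized world where $f_1^{off}$ is treated as the true treatment model. So it suffices to show $\widehat{BR}^{\mathrm{TS}}_T \leq \widetilde{\mathcal{O}}(\sigma_2 d\sqrt{T})$.

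First I would exploit linearity to expose this idealized problem as a $d$-dimensional linear contextual bandit. Writing $m_2(z,x;\theta_2) = \langle z, \beta\rangle + g(x)$, linearity of the integral defining $\widehat m_2$ gives $\widehat m_2(x,a;\theta_2) = \langle \bar z^{off}_{x,a}, \beta\rangle + g(x)$, where $\bar z^{off}_{x,a} = \int z\, f_1^{off}(z;x,a)\,dz$ is the mean embedding under the pretrained treatment model. Any additive context-only term $g(x)$ cancels in every per-step regret gap and does not affect the $\operatorname{argmax}$ over $a$, so the idealized problem is exactly a linear bandit with $d$-dimensional feature vectors $\{\bar z^{off}_{x,a}\}$, each of norm at most $B$, and unknown coefficient $\beta$.

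The crucial step --- and the source of the improved $\sigma_2$ (rather than $\sigma_1+\sigma_2$) leading constant --- is how the posterior over $\beta$ is formed. Because \poGAMBITTS{} observes the realized treatment $z_t$ and regresses $y_t = \langle z_t, \beta\rangle + \eta_t$ on the \emph{observed} $z_t$, the only noise entering the reward regression is the $\sigma_2$-subgaussian $\eta_t$; the treatment fluctuation that makes $m_2(Z,x)$ $\sigma_1$-subgaussian becomes observed covariate variation rather than unexplained noise. Consequently the posterior precision accumulates as $\hat V_t = \lambda I + \sigma_2^{-2}\sum_{s<t} z_s z_s^{\top}$, and a standard linear-Thompson-sampling argument (self-normalized concentration together with the elliptical potential lemma, or an information-theoretic decomposition) then yields $\widehat{BR}^{\mathrm{TS}}_T \leq \widetilde{\mathcal{O}}(\sigma_2 d\sqrt{T})$. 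Combining with the reduction gives the stated bound.

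The hard part will be reconciling the mismatch between the \emph{regression} features $z_t$ and the \emph{action-selection} features $\bar z^{off}_{x_t,a_t}$: per-step regret is measured in the metric induced by the $\bar z^{off}$'s, whereas concentration of $\beta$ accrues in the metric induced by the realized $z_t$'s, and the elliptical potential lemma controls $\sum_t \|z_t\|_{\hat V_t^{-1}}^2$ rather than $\sum_t \|\bar z^{off}_{x_t,a_t}\|_{\hat V_t^{-1}}^2$. I would bridge this by noting that, conditional on $(x_t,a_t)$ and the history, $\mathbb{E}[z_t z_t^{\top}] = \bar z^{off}_{x_t,a_t}(\bar z^{off}_{x_t,a_t})^{\top} + \mathrm{Cov}_{f_1^{off}}(z\mid x_t,a_t) \succeq \bar z^{off}_{x_t,a_t}(\bar z^{off}_{x_t,a_t})^{\top}$, so that in expectation the realized design dominates the virtual design built from the selection features; transferring this domination to the realized $\hat V_t$ with high probability requires a matrix concentration (Freedman/Azuma-type) argument. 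This is precisely where ``observing the treatment helps'' is formalized: the extra variation in $z_t$ only sharpens the posterior, so the regret incurred at the selection features is no larger than in a standard linear bandit, but with $\sigma_1$ purged from the effective noise.
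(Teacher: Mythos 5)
Your skeleton matches the paper's proof exactly: invoke Theorem~\ref{thm:poGAMBITTS_red} with Alg the Thompson sampler on $\widehat m_2$, which contributes the $T\sqrt{\varepsilon T}$ term, and reduce to showing that Thompson sampling in the idealized environment $\widetilde{\mathcal{E}}(\theta_2)$ (where $f_1^{off}$ is treated as the truth) has Bayesian regret $\widetilde{\mathcal{O}}(\sigma_2 d\sqrt{T})$ --- this is the paper's Lemma~\ref{lemma:regret-bound-linear-second-stage} --- and your observation that only the $\sigma_2$-subgaussian noise $\eta_t$ enters the regression because the realized $z_t$ is observed is exactly the mechanism the paper exploits. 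Where you diverge is on the feature-mismatch issue you flag at the end, and what you propose there is both incomplete and heavier than necessary. The paper never compares the realized design $\sum_s Z_sZ_s^\top$ with a virtual design built from the mean embeddings $\psi_{x,a}=\Ee{\xi^{off}_{x,a}}{Z}$. Instead it decomposes the per-step regret scalar-wise,
$$
\langle \psi_{X_t,A_t^\ast}-\psi_{X_t,A_t},\theta_2\rangle
=\langle Z_t^\ast-Z_t,\theta_2\rangle
+\langle \psi_{X_t,A_t^\ast}-Z_t^\ast,\theta_2\rangle
+\langle Z_t-\psi_{X_t,A_t},\theta_2\rangle ,
$$
where $Z_t^\ast\sim\xi^{off}_{X_t,A_t^\ast}$ is a counterfactual draw. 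The last two sums are bounded martingale difference sequences, so Azuma--Hoeffding controls them by $\mathcal{O}(B\sqrt{T\log T})$, and the first sum is a linear bandit regret expressed entirely in realized embeddings: the upper confidence bound $U_t(z)=\langle z,\widehat\theta_{t-1}\rangle+\beta\Vert z\Vert_{V_{t-1}^{-1}}$ with $V_t=I+\sum_{s\le t}Z_sZ_s^\top$, the Thompson-sampling exchangeability $\Prb{Z_t^\ast=\cdot\mid\mathcal{F}_{t-1}}=\Prb{Z_t=\cdot\mid\mathcal{F}_{t-1}}$, and the elliptical potential lemma applied to $\sum_t\paren{1\wedge\Vert Z_t\Vert^2_{V_{t-1}^{-1}}}$ close the argument with no matrix concentration at all. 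Your proposed bridge via $\E{z_tz_t^\top}\succeq\bar z^{off}_{x_t,a_t}\paren{\bar z^{off}_{x_t,a_t}}^\top$ plus a matrix Freedman-type transfer is plausible in spirit but is precisely the step you have not proved, and it is delicate for small $t$, where the deviation of the realized Gram matrix from its compensator can swamp the regularizer. Since the scalar decomposition above makes that machinery unnecessary, you should adopt it (add and subtract the realized and counterfactual embeddings, dispatch the corrections by Azuma--Hoeffding) rather than trying to complete the matrix-concentration route.
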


When $\varepsilon \leq \paren{\sigma_2 d / T}^2$, the regret simplifies to $\widetilde{\mathcal{O}}\paren{\sigma_2 d \sqrt{T}}$. This is sharper than the standard Thompson sampling bound whenever
$d < \paren{1+\frac{\sigma_1}{\sigma_2}}\sqrt{CK}$.
We now consider \poGAMBITTS{} with a nonlinear reward model. In Theorem~\ref{thm:poGAMBITTS_nonlin}, the complexity measures $\text{dim}_E(\mathcal{F}, T^{-2})$ and $\mathcal{N}(\mathcal{F}, T^{-2}, \smash{\Vert \cdot \Vert_\infty})$ denote the eluder dimension and the covering number of $\mathcal{F}$, respectively. Appendix~\ref{app:theory} provides formal definitions of these measures.
\begin{theorem}\label{thm:poGAMBITTS_nonlin}
Let $f_1^{off}$ satisfy \eqref{eq:foGAMBITTS_ass}. For any function class $\mathcal{F}$ with $\widehat{m}_2\in\mathcal{F}$, the Bayesian regret of \poGAMBITTS{} (Algorithm~\ref{alg:poGAMBITTS}) satisfies
$$
\text{BR}^{po:{\mathcal{F}}}_T \leq \widetilde{\mathcal{O}}(\sigma_2 \sqrt{\text{dim}_E(\mathcal{F}, T^{-2}) \log \mathcal{N}(\mathcal{F}, T^{-2}, \Vert \cdot \Vert_\infty) T} + T\sqrt{\varepsilon T}).
$$
\end{theorem}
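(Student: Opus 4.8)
The plan is to combine the misspecification reduction of Theorem~\ref{thm:poGAMBITTS_red} with the general function-class Bayesian regret analysis of Thompson sampling (via eluder dimension and covering number, in the style of Russo and Van Roy). First I would instantiate Theorem~\ref{thm:poGAMBITTS_red} with $\text{Alg}$ taken to be \poGAMBITTS{} itself, viewed as a Thompson sampler acting on the estimated reward function $\widehat{m}_2$. This immediately yields $\text{BR}^{po:\mathcal{F}}_T \leq \mathcal{O}(\widehat{BR}_T + T\sqrt{\varepsilon T})$, so the whole task reduces to controlling $\widehat{BR}_T$, the Bayesian regret of \poGAMBITTS{} in the idealized problem where $\widehat{m}_2(\cdot,\cdot;\theta_2)$ is treated as the true mean-reward function and rewards are generated accordingly; the additive term $T\sqrt{\varepsilon T}$ is then inherited verbatim from the reduction.

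To bound $\widehat{BR}_T$, I would verify the hypotheses of the general function-class bound. Realizability is exactly the assumption $\widehat{m}_2\in\mathcal{F}$: since $\theta_2$ is drawn from the prior and $\widehat{m}_2(\cdot,\cdot;\theta_2)$ is a deterministic functional of $\theta_2$ (integration of $m_2$ against the fixed offline density $f_1^{off}$), the realized action-value function lies in $\mathcal{F}$ almost surely. The remaining ingredients are a uniform bound on $\widehat{m}_2$, which follows from the support assumption $\mathrm{supp}\,\xi^{\theta_1}_{x,a}\subseteq \mathbb{B}_d(B)$ together with the $\sigma_1$-subgaussianity of $m_2(Z,x;\theta_2)$, and a subgaussian noise parameter for the posterior updates. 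The crux is that \poGAMBITTS{} updates $P_2(\theta_2\mid\mathcal{D}_t)$ from the observed triples $(z_t,x_t,y_t)$ with $y_t = m_2(z_t,x_t;\theta_2)+\eta_t$; because the realized treatment $z_t$ is observed rather than marginalized, the effective noise driving posterior concentration is the $\sigma_2$-subgaussian reward noise $\eta_t$ alone. Feeding these into the general bound gives
$$
\widehat{BR}_T \leq \widetilde{\mathcal{O}}\paren{\sigma_2\sqrt{\text{dim}_E(\mathcal{F},T^{-2})\,\log \mathcal{N}(\mathcal{F}, T^{-2},\Vert\cdot\Vert_\infty)\,T}},
$$
and combining with the reduction yields the claim.

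I expect the main obstacle to be rigorously justifying the $\sigma_2$ (rather than $\sqrt{\sigma_1^2+\sigma_2^2}$) effective noise, since this is precisely where \poGAMBITTS{} improves on the standard baseline in \eqref{eq:BRT_std}. The difficulty is a mismatch of domains: posterior concentration is driven by $\sigma_2$-noisy observations of the reward model $m_2$ over $\mathcal{Z}\times\mathcal{X}$, whereas regret is measured through the induced action-value $\widehat{m}_2$ over $\mathcal{X}\times\mathcal{A}$, which integrates $z$ out. I would bridge this by building the confidence sequence at the level of $m_2$ and transferring its width to $\widehat{m}_2$ through the identity $\widehat{m}_2(x,a;\theta_2)=\int m_2(z,x;\theta_2)f_1^{off}(z;x,a)\,dz$; because this map is an average (a bounded linear functional), the confidence width of $\widehat{m}_2$ is controlled by that of $m_2$, and observing $z_t$ can only sharpen concentration relative to observing a marginalized reward. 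The technical point to verify is that the eluder dimension and covering number of the induced class $\set{\widehat{m}_2(\cdot,\cdot;\theta_2)}$ are no larger than advertised under this averaging, i.e.\ that the standard eluder telescoping still applies when information accrues on the finer variable $z_t$. The linear case of Corollary~\ref{cor:poGAMBITTS_lin}, where $\widehat{m}_2(x,a;\theta_2)=\theta_2^\top\!\int z\,f_1^{off}(z;x,a)\,dz$ collapses to a $d$-dimensional linear model with $\text{dim}_E=\mathcal{O}(d)$ and $\log\mathcal{N}=\widetilde{\mathcal{O}}(d)$, serves as a consistency check that recovers the $\sigma_2 d\sqrt{T}$ rate.
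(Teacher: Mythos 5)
Your overall route is the paper's route: reduce to the approximate environment via Theorem~\ref{thm:poGAMBITTS_red} (instantiated with \poGAMBITTS{} as the algorithm), then bound the residual regret $\widehat{BR}_T$ with the Russo--Van Roy eluder-dimension machinery, with the crucial observation that because $z_t$ is observed the confidence sets for the reward model concentrate at rate governed by $\sigma_2$ alone. You also correctly isolate the one nontrivial technical issue, the domain mismatch between where information accrues ($\mathcal{Z}\times\mathcal{X}$) and where regret is measured ($\mathcal{X}\times\mathcal{A}$).

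Where you diverge is in how that mismatch is resolved, and here the paper's device is both different and safer than yours. You propose to push the confidence width of $m_2$ through the averaging map onto the induced class $\set{\widehat{m}_2(\cdot,\cdot;\theta_2)}$ over $\mathcal{X}\times\mathcal{A}$ and then to verify that the eluder dimension and covering number of that induced class are no larger. The paper never forms the induced class at all: its $\mathcal{F}$ is a class of functions on $\mathcal{Z}\times\mathcal{X}$ containing $m_2$ (the main-text phrasing ``$\widehat{m}_2\in\mathcal{F}$'' is reconciled in the appendix, where the complexity measures are explicitly those of the reward-model class). Instead, it introduces a counterfactual embedding $Z_t^\ast\sim\xi^{off}_{X_t,A_t^\ast}$, notes that Thompson sampling makes the conditional laws of $Z_t$ and $Z_t^\ast$ given the history identical, and decomposes the regret as $\sum_t\paren{m_2(Z_t^\ast,X_t;\theta_2)-m_2(Z_t,X_t;\theta_2)}$ plus two martingale-difference residuals controlled by Azuma--Hoeffding at cost $\mathcal{O}(B\sqrt{T\log T})$. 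The leading term then lives entirely on $\mathcal{Z}\times\mathcal{X}$, so the sum-of-widths lemma applies at exactly the points where data accrues and no transfer of eluder dimension is needed. Your bridge is the riskier one: the eluder telescoping bounds $\sum_t w_{\mathcal{C}_t}(Z_t,X_t)$ at the observed points, and there is no general reason the independence structure survives the averaging map, so ``the eluder dimension of the induced class is no larger'' is not something you should expect to verify in general (it happens to hold in the linear case, which is why your consistency check passes). If you adopt the paper's counterfactual-embedding decomposition in place of that step, the rest of your argument goes through as written.
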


\subsection{Regret Analysis for \foGAMBITTS}

We now present regret guarantees for \foGAMBITTS{} (Algorithm~\ref{alg:foGAMBITTS}) in the linear reward setting, which applies when the agent does not have simulator access to the distributions $\set{\xi_{x, a}}_{(x,a)\in\mathcal{X}\times\mathcal{A}}$.

\begin{theorem}\label{thm:BRT_foGAMBITTS}
If $m_2\paren{z, x_t; \theta}$ is linear in $z\in\mathbb{R}^d$, the Bayesian regret of \foGAMBITTS{} (Algorithm~\ref{alg:foGAMBITTS}) satisfies
$$
\text{BR}^{fo:{lin}}_T
\leq \widetilde{\mathcal{O}}\paren{\sigma_1 \sqrt{CKT} + 
\sigma_2 d \sqrt{T}}.
$$
\end{theorem}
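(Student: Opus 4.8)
The plan is to exploit the linearity of $m_2$ in $z$, which collapses the expected reward of action $a$ in context $x$ to a bilinear form. Writing $\mu_{x,a}(\theta_1) = \mathbb{E}_{\xi^{\theta_1}_{x,a}}[Z]$ for the mean treatment embedding (which lies in $\mathbb{B}_d(B)$ by convexity), linearity gives $\mathbb{E}_{\xi^{\theta_1}_{x,a}}[m_2(Z,x;\theta_2)] = \langle \mu_{x,a}(\theta_1), \theta_2\rangle$, so the Bayes-optimal action maximizes this inner product. The agent's two observation channels are then transparent: the realized treatment $Z_t$ is an unbiased draw whose projection $\langle Z,\theta_2\rangle$ is $\sigma_1$-subgaussian and informs $\mu_{X_t,A_t}$, while $y_t = \langle Z_t,\theta_2\rangle + \eta_t$ is a $d$-dimensional noisy linear observation of $\theta_2$ with $\sigma_2$-subgaussian noise. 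This motivates decomposing the regret into a treatment-estimation component (learning the $CK$ mean embeddings) and a reward-estimation component (learning $\theta_2$), which I expect to produce the two summands $\sigma_1\sqrt{CKT}$ and $\sigma_2 d\sqrt{T}$ respectively.

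First I would apply the standard Thompson-sampling posterior-matching identity: conditioning on the history $\mathcal{H}_t$, the chosen action $A_t$ and the optimal action $A_t^\ast$ are identically distributed, so for any $\mathcal{H}_t$-measurable upper-confidence function $U_t$ the ``middle'' term cancels and $\text{BR}_T = \mathbb{E}\sum_t [\,\langle \mu_{X_t,A_t}(\theta_{1,t}),\theta_{2,t}\rangle - U_t(X_t,A_t)\,] + \mathbb{E}\sum_t[\,U_t(X_t,A_t) - \langle\mu_{X_t,A_t},\theta_2\rangle\,]$, with $\theta_t=(\theta_{1,t},\theta_{2,t})$ the posterior sample. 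I would take $U_t$ to be the plug-in estimate $\langle\bar\mu_{t,x,a},\hat\theta_{2,t}\rangle$ inflated by a treatment width and a reward width, where $\bar\mu_{t,x,a}$ is the empirical mean of past treatments at $(x,a)$ and $\hat\theta_{2,t}$ the ridge estimate of $\theta_2$ from the regression of $y_s$ on $Z_s$. The error then splits as $\langle\mu_{x,a}-\bar\mu_{t,x,a},\theta_2\rangle + \langle\bar\mu_{t,x,a},\theta_2-\hat\theta_{2,t}\rangle$, the first attributable to treatment uncertainty and the second to reward uncertainty.

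For the treatment term I would project onto the \emph{true} $\theta_2$ so that $\langle\mu_{x,a}-\bar\mu_{t,x,a},\theta_2\rangle$ is a scalar average deviation of the $\sigma_1$-subgaussian quantity $\langle Z,\theta_2\rangle$; an anytime subgaussian bound (uniform over arm counts, with a union bound over the $CK$ pairs absorbed into $\widetilde{\mathcal{O}}$) controls it by $\sigma_1\sqrt{\log(\cdot)/N_{t,x,a}}$, and Cauchy--Schwarz over the arm-visit counts $\sum_{x,a}N_{T,x,a}=T$ yields $\widetilde{\mathcal{O}}(\sigma_1\sqrt{CKT})$. Projecting onto the true $\theta_2$ rather than onto $\hat\theta_{2,t}$ is essential: it keeps this term a scalar concentration and avoids a spurious $\sqrt{d}$ that would arise from bounding $\Vert\mu-\bar\mu\Vert_2$ coordinatewise. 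For the reward term I would use a self-normalized martingale (OFUL-style) confidence bound $\Vert\hat\theta_{2,t}-\theta_2\Vert_{V_t}\le\gamma_t=\widetilde{\mathcal{O}}(\sigma_2\sqrt{d})$ with design $V_t=\lambda I+\sum_{s<t}Z_sZ_s^\top$, together with the elliptical-potential lemma to sum $\sum_t\Vert Z_t\Vert_{V_t^{-1}}\le\sqrt{2dT\log(\cdot)}$, giving $\gamma_t\cdot\sqrt{dT}=\widetilde{\mathcal{O}}(\sigma_2 d\sqrt{T})$; this is exactly the reward analysis underlying Corollary~\ref{cor:poGAMBITTS_lin} with $\varepsilon=0$, so \foGAMBITTS{} simply adds the online treatment-learning cost to it.

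The hard part will be reconciling these two bounds within a single decomposition, because they pull in opposite directions: the treatment term wants the true-$\theta_2$ projection while the reward term wants the realized treatments $Z_t$ to serve as the query feature so that the confidence widths match the design $V_t$. As written the reward width lives at the \emph{estimated} mean feature $\bar\mu_{t,x,a}$, whereas $V_t$ is accumulated from the realized $Z_s$; I would close this gap using the unbiasedness $\mathbb{E}[Z_t\mid X_t,A_t,\theta_1]=\mu_{X_t,A_t}$, which makes the discrepancy $\langle Z_t-\mu_{X_t,A_t},\theta_2-\hat\theta_{2,t}\rangle$ a conditionally mean-zero martingale difference that vanishes in Bayesian expectation, allowing $Z_t$ to both drive the design and act as the query direction; alternatively a matrix Freedman argument giving $\sum_s Z_sZ_s^\top\succeq \tfrac12\sum_s\mu_s\mu_s^\top$ up to lower-order terms would justify replacing the mean-treatment design by the realized one. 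Routing each error term through the appropriate channel, and verifying that the Thompson posterior-sample term $\langle\mu_{X_t,A_t}(\theta_{1,t}),\theta_{2,t}\rangle - U_t$ is nonpositive up to the same widths (since $\theta_t$ obeys the same high-probability confidence sets as $\theta$), is where the main technical care lies; all adaptivity is handled by anytime martingale concentration and union bounds, contributing only logarithmic factors.
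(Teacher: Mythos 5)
Your proposal is correct in outline and follows essentially the same architecture as the paper's proof: reduce the regret to $\sum_t \langle \psi_{X_t,A_t^\ast}-\psi_{X_t,A_t},\theta_2\rangle$ with $\psi_{x,a}=\Ee{\xi^{\theta_1}_{x,a}}{Z}$, apply the Thompson-sampling posterior-matching identity against a confidence-bound function, control the treatment-uncertainty piece by scalar concentration of the $\sigma_1$-subgaussian quantity $\langle Z,\theta_2\rangle$ per $(x,a)$ pair followed by Cauchy--Schwarz over the $CK$ visit counts, and control the reward-uncertainty piece by the ridge confidence bound of Lemma~\ref{lemma:concentration-linear-parameter} plus the elliptical potential lemma. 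Your instinct to project the treatment error onto the true $\theta_2$ (avoiding a spurious $\sqrt{d}$) is exactly what the paper does. The one genuine divergence is where you anchor the upper-confidence function: you place it at the empirical mean $\bar\mu_{t,x,a}$ and then correctly identify the resulting mismatch between the query feature and the design $V_t=I+\sum_s Z_sZ_s^\top$ as ``the hard part,'' offering two candidate fixes. The paper dissolves this mismatch at the outset by adopting a random table model, drawing a fresh counterfactual $Z_t(x,a)\sim\xi^{\theta_1}_{x,a}$ for every pair at each round and defining $U_t(z)=\langle z,\widehat\theta_{t-1}\rangle+\beta\Vert z\Vert_{V_{t-1}^{-1}}$ evaluated at the realized $Z_t(x,a)$ itself; since $U_t(\cdot)$ is $\mathcal{F}_{t-1}$-measurable and the table randomness is fresh, $\Ee{t-1}{U_t(Z_t(X_t,A_t^\ast))}=\Ee{t-1}{U_t(Z_t(X_t,A_t))}$ still holds, the design and the query direction coincide by construction, and the leftover treatment error becomes the clean martingale difference $\langle\psi_{x,a}-Z_t(x,a),\theta_2\rangle$ handled by Freedman's inequality. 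Your first proposed fix --- that $\langle Z_t-\mu_{X_t,A_t},\theta_2-\widehat\theta_{2,t}\rangle$ is a conditionally mean-zero cross term --- is the same mechanism in disguise and would work, but you leave it as a sketch; the random table construction is the cleaner way to make it rigorous, and also spares you from having to define $\bar\mu_{t,x,a}$ for arms not yet pulled. Neither gap is fatal, but the reconciliation step you flag does need the explicit construction to close.
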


The bound nearly matches the minimax lower bound of $\Omega(\max\{ \sigma_1 \sqrt{CKT}, \sigma_2 \sqrt{d T} \})$, presented in Appendix~\ref{app:theory}.
Comparing Theorem~\ref{thm:BRT_foGAMBITTS} and Equation~\ref{eq:BRT_std} shows \foGAMBITTS{} achieves a sharper regret bound than standard Thompson sampling whenever $d \ll \sqrt{CK}$. When the working treatment representation is low-dimensional and well-specified, the benefits from generalizing across arms outweigh the cost of estimating the treatment distribution. Moreover, the advantage grows when $\sigma_2 \gg \sigma_1$, since generalizing across arms helps more when reward noise dominates. In addition, when $\sigma_1 < 1/\sqrt{CK}$, the regret of \foGAMBITTS{} becomes comparable to that of \poGAMBITTS. In such cases, where the generator is sufficiently concentrated, simulator access offers little improvement.

Having established regret guarantees for the GAMBITTS algorithms, we now turn to their empirical evaluation in Section~\ref{sec:sim}.

\section{Simulation Results}\label{sec:sim}
To evaluate GAMBITTS-based algorithms, we designed a simulation study modeled on the 2023 Intern Health Study (IHS) which was aimed at supporting mental health among medical interns \cite{NeCamp2020}. At each decision point, the agent $(i)$ observes user context (location, recent step count), $(ii)$ selects a prompt from a finite list to submit to an LLM (Llama 3.1 8.0B), and $(iii)$ delivers the generated text response as the intervention \cite{Grattafiori2024}. The environment then generates a reward based on select semantic dimensions of the generated text (optimism, severity, formality, clarity, and encouragement). 

While no deployed JITAIs currently integrate LLMs for real-time message generation (and thus no real-world dataset exists for this setting), we calibrate our simulation using empirical distributions from the 2023 IHS to model realistic conditional reward structures. Further details on the compute resources, generative model, prompt design, and reward specification are provided in Appendix~\ref{app:sim_des}. Additionally, Appendix~\ref{app:add_sims} includes further simulations exploring different first- and second-stage variance decompositions, varying the $\mathcal{Z}$ dimensionality $d$, altering the level of simulation access available to \poGAMBITTS, and evaluating performance under nonlinear data-generating mechanisms. All figures are based on 250 Monte Carlo runs per agent, with 95\% confidence intervals shown.

\subsection{Illustrative Example}\label{sec:sim:ill}
We begin with a simple example where the outcome at time $t$ is given by $Y_t = \beta Z_t^{\text{optimism}} + \varepsilon_t,$, with $Z_t^{\text{optimism}}$ denoting the optimism score of the generated message and $ \varepsilon_t \overset{\text{\tiny IID}}{\sim} \mathcal{N}(0, \sigma) $. As discussed above, parameters $\beta$ and $\sigma$ are derived from IHS data. This setup isolates a single semantic dimension, where we expect GAMBITTS to outperform standard Thompson sampling by leveraging shared reward structure. We compare GAMBITTS algorithms to both contextual and non-contextual standard Thompson sampling agents (\texttt{StdTS:Contextual} and \texttt{StdTS}, respectively); as shown in Figure~\ref{fig:sim:1d_correct}, \texttt{StdTS} performs better here and is used as the baseline in subsequent simulations. As expected, GAMBITTS methods achieve lower regret than standard Thompson sampling, with \poGAMBITTS{} performing especially well in this setting.

\begin{figure}[ht]
    \centering
    \begin{subfigure}[t]{0.4\textwidth}
        \centering
        \includegraphics[width=\textwidth]{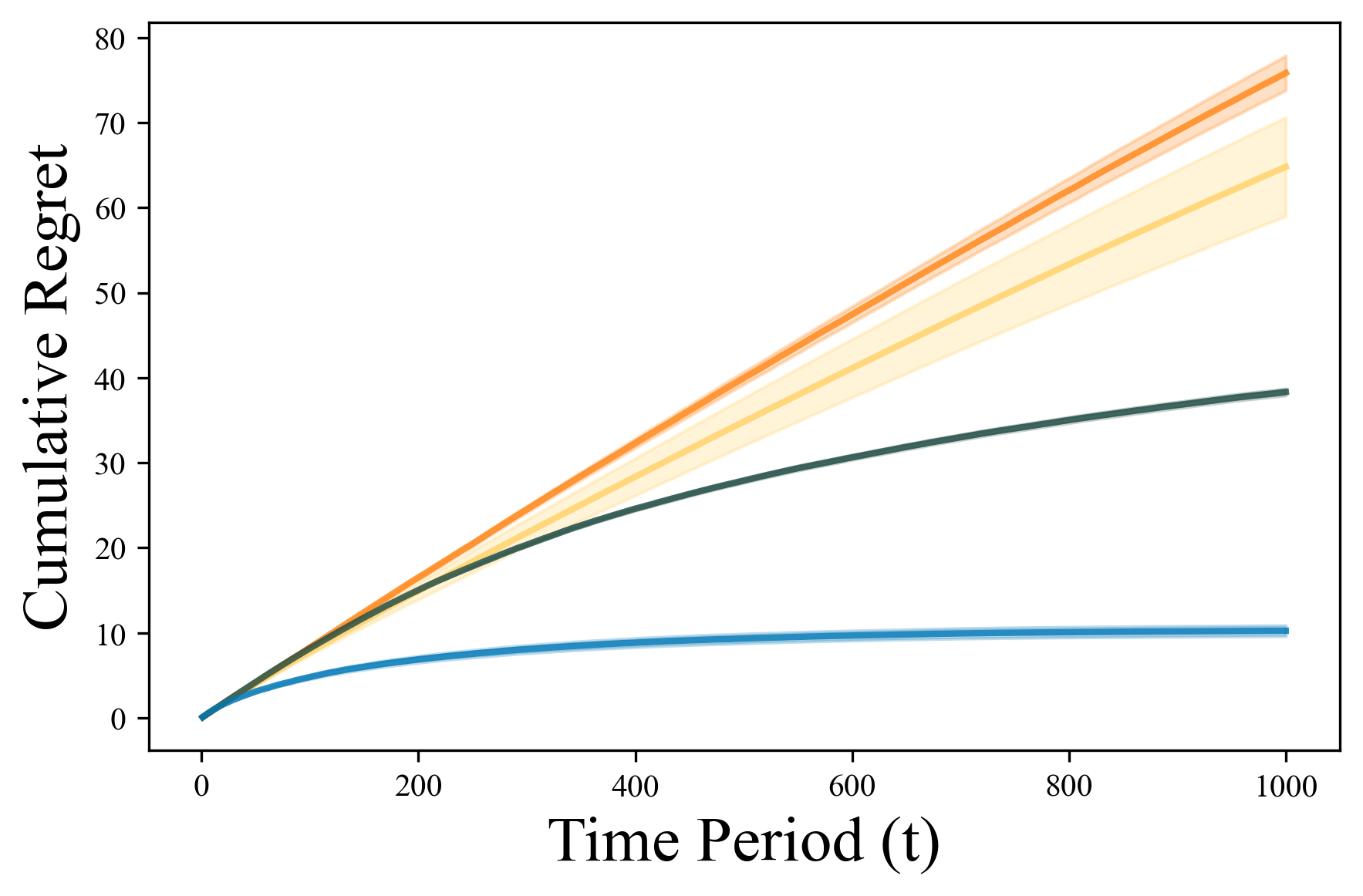}
    \end{subfigure}
    \raisebox{1.5em}{
    \begin{subfigure}[t]{0.2\textwidth}
        \centering
        \includegraphics[height=2.5cm]{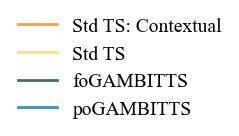}
    \end{subfigure}}
    \caption{Cumulative Regret Under Single-Dimension Reward Model}
    \label{fig:sim:1d_correct}
\end{figure}

\subsection{Embedding Misspecification}\label{sec:sim:mis}

The assumption that the agent has correctly identified how $G$ influences $Y$ (i.e., that $Z=Z^*$) is quite strong. To assess the impact of misspecifying this mechanism, we revisit the illustrative setting from Section~\ref{sec:sim:ill}, but evaluate performance when the agent uses a linear model with an alternative embedding. As discussed in Section~\ref{sec:meth}, we expect GAMBITTS performance to degrade as the working embedding departs from the true reward-relevant dimension (optimism).

\begin{figure}[ht]
    \centering
    \begin{subfigure}[t]{0.39\textwidth}
        \centering
        \includegraphics[width=\textwidth]{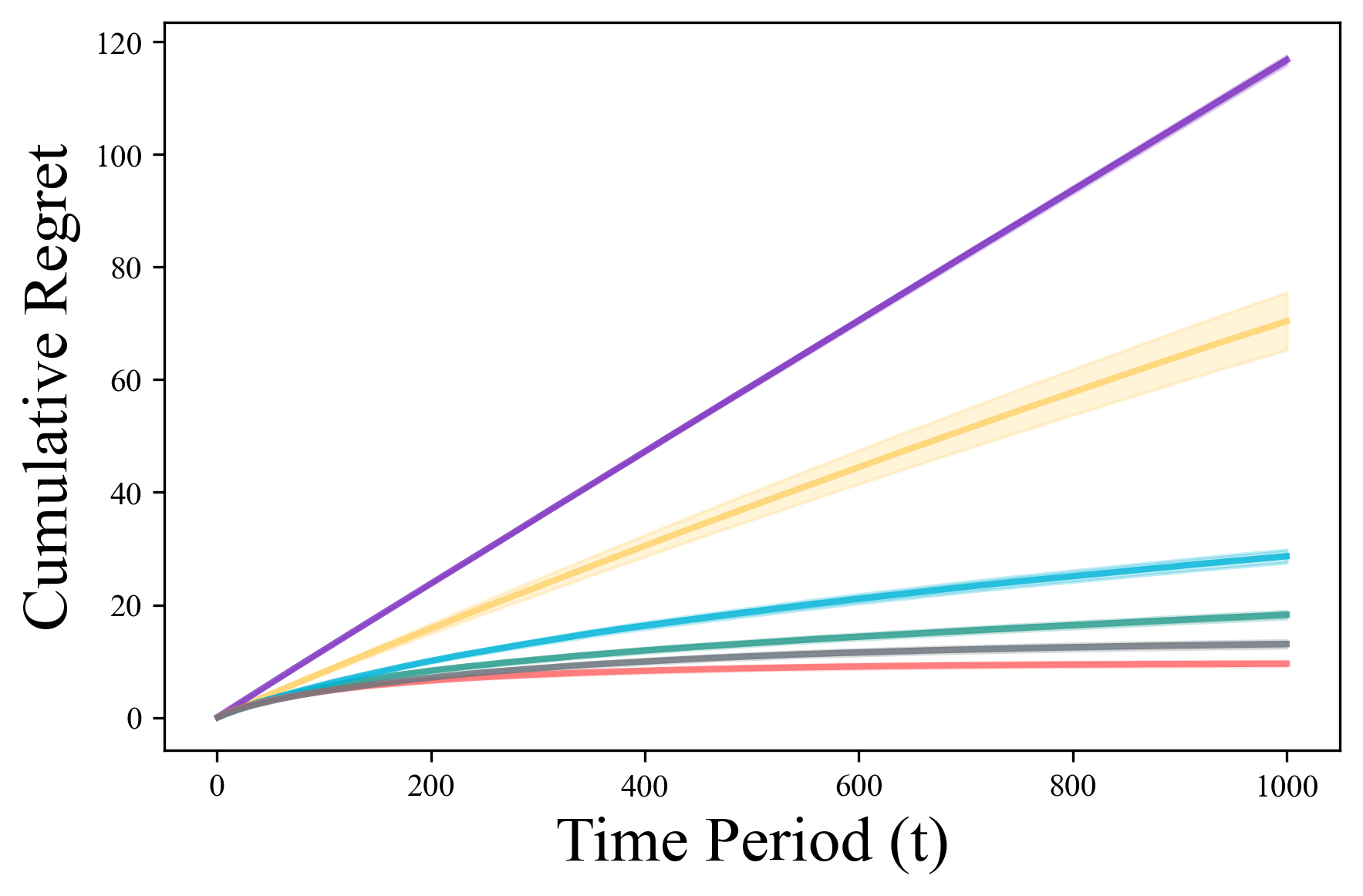}
        \caption{\poGAMBITTS}
    \end{subfigure}
    \hfill
    \begin{subfigure}[t]{0.39\textwidth}
        \centering
        \includegraphics[width=\textwidth]{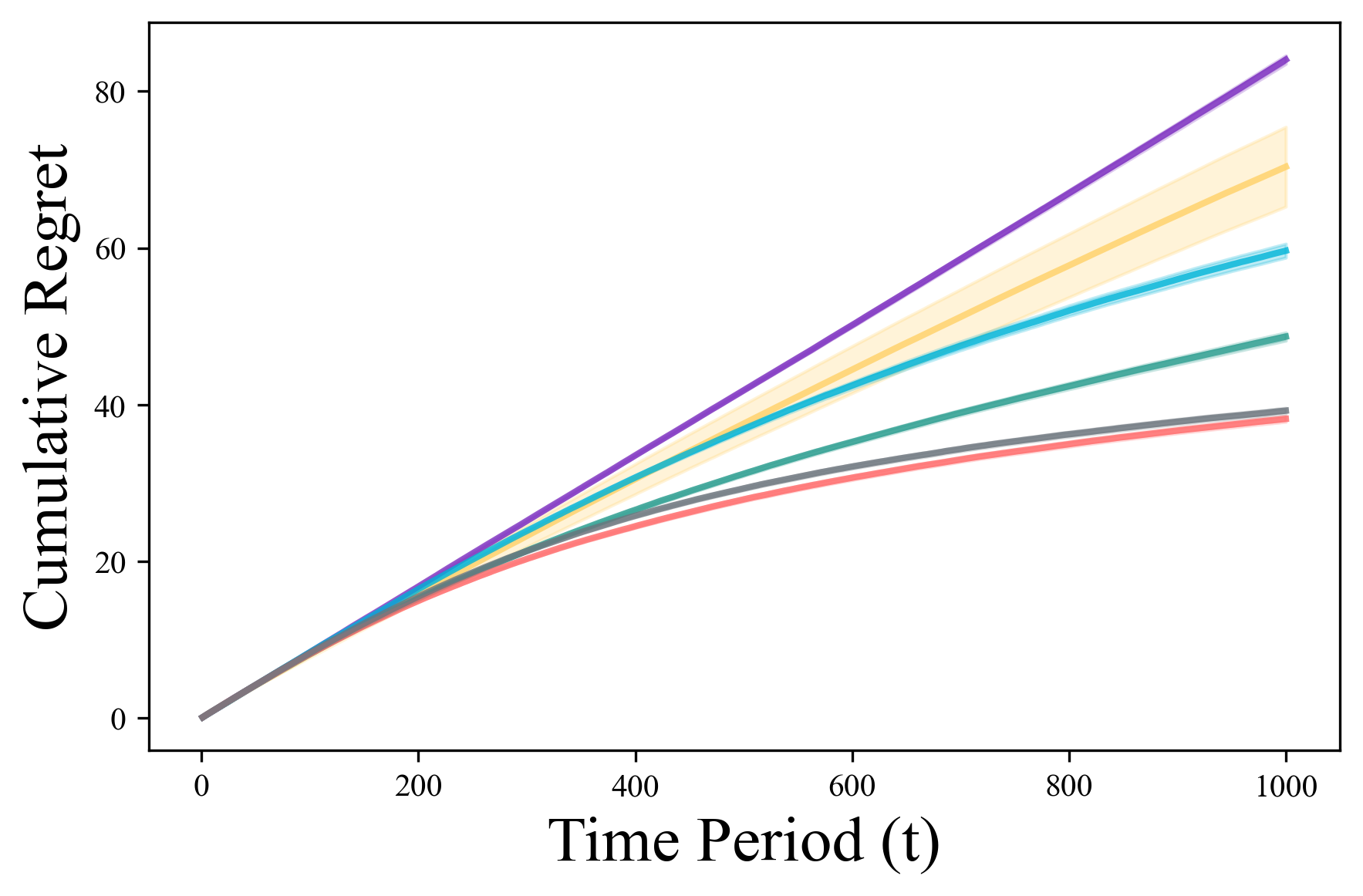}
        \caption{\foGAMBITTS}
    \end{subfigure}
    \raisebox{1.5em}{
    \begin{subfigure}[t]{0.2\textwidth}
        \centering
        \includegraphics[height=2.5cm]{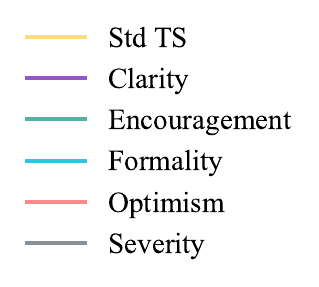}
    \end{subfigure}}
    \caption{Cumulative Regret Under Treatment Embedding Misspecification}
    \label{fig:sim:mis}
\end{figure}

Figure~\ref{fig:sim:mis} confirms this pattern: GAMBITTS performs well when the working embedding is correlated with the true one, but deteriorates as the correlation declines (see Table~\ref{tab:style_corr} in Appendix~\ref{app:sim_des}). In low-correlation settings, it can even underperform standard Thompson sampling, as policy learning fails when working treatment representations are misaligned with rewards.

\subsection{Scaling Number of Arms in a More Complex Reward Structure}\label{sec:sim:arms}
We next vary the number of treatment arms to test Corollary~\ref{cor:poGAMBITTS_lin}, which predicts \poGAMBITTS{} regret does not scale with $K$ in linear settings.
Furthermore, in this section and throughout Appendix~\ref{app:add_sims}, we turn to a more complex reward structure based on $Z^{\text{optimism}}$, $Z^{\text{formality}}$, and $Z^{\text{severity}}$. As we move to more complex data-generating environments, we introduce an \texttt{ens-poGAMBITTS} agent with an MLP reward model.

\begin{figure}[ht]
    \centering
    \begin{subfigure}[t]{0.31\textwidth}
        \centering
        \includegraphics[width=\textwidth]{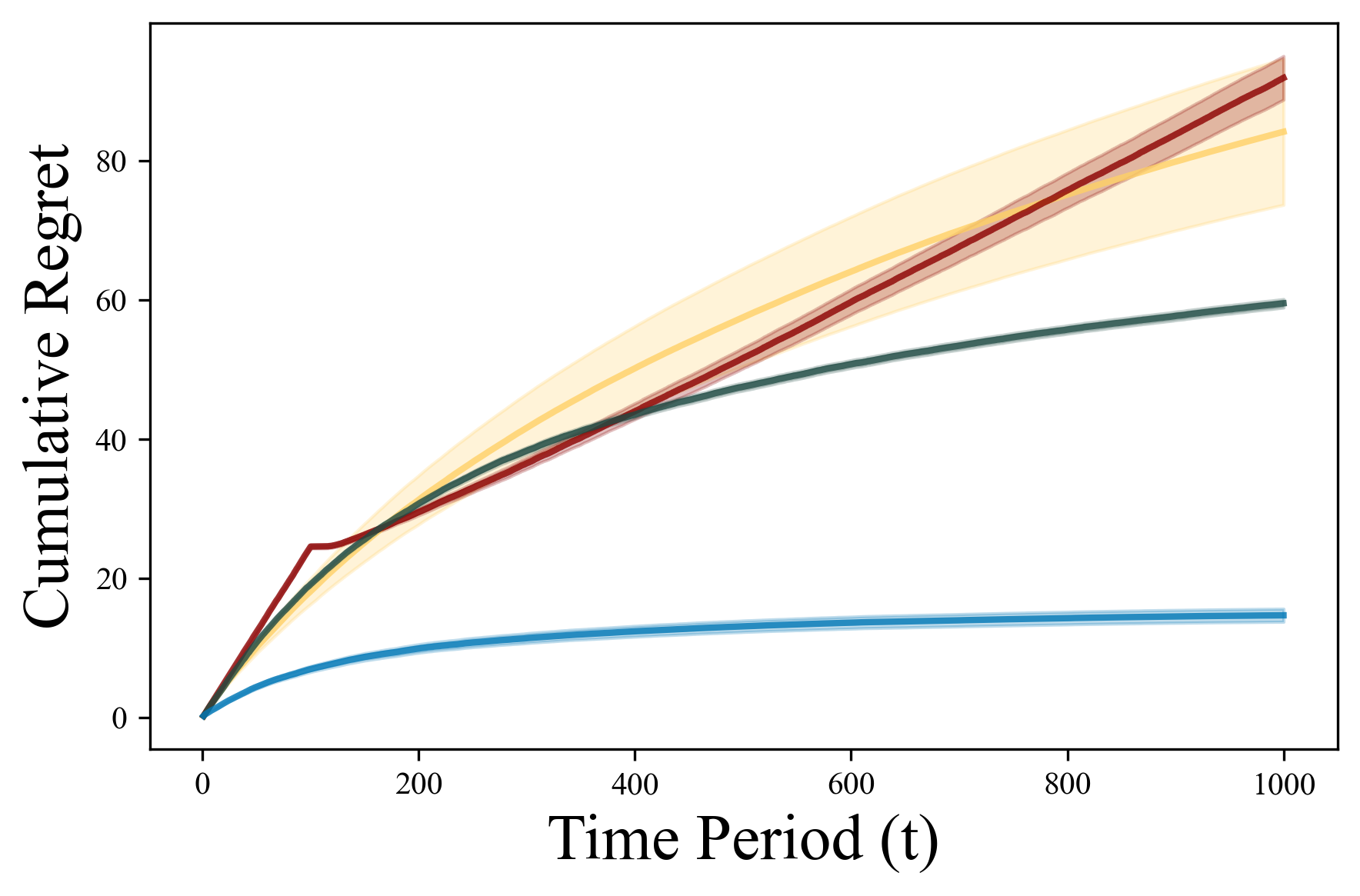}
        \caption{$K=3$}
    \end{subfigure}
    \hfill
    \begin{subfigure}[t]{0.31\textwidth}
        \centering
        \includegraphics[width=\textwidth]{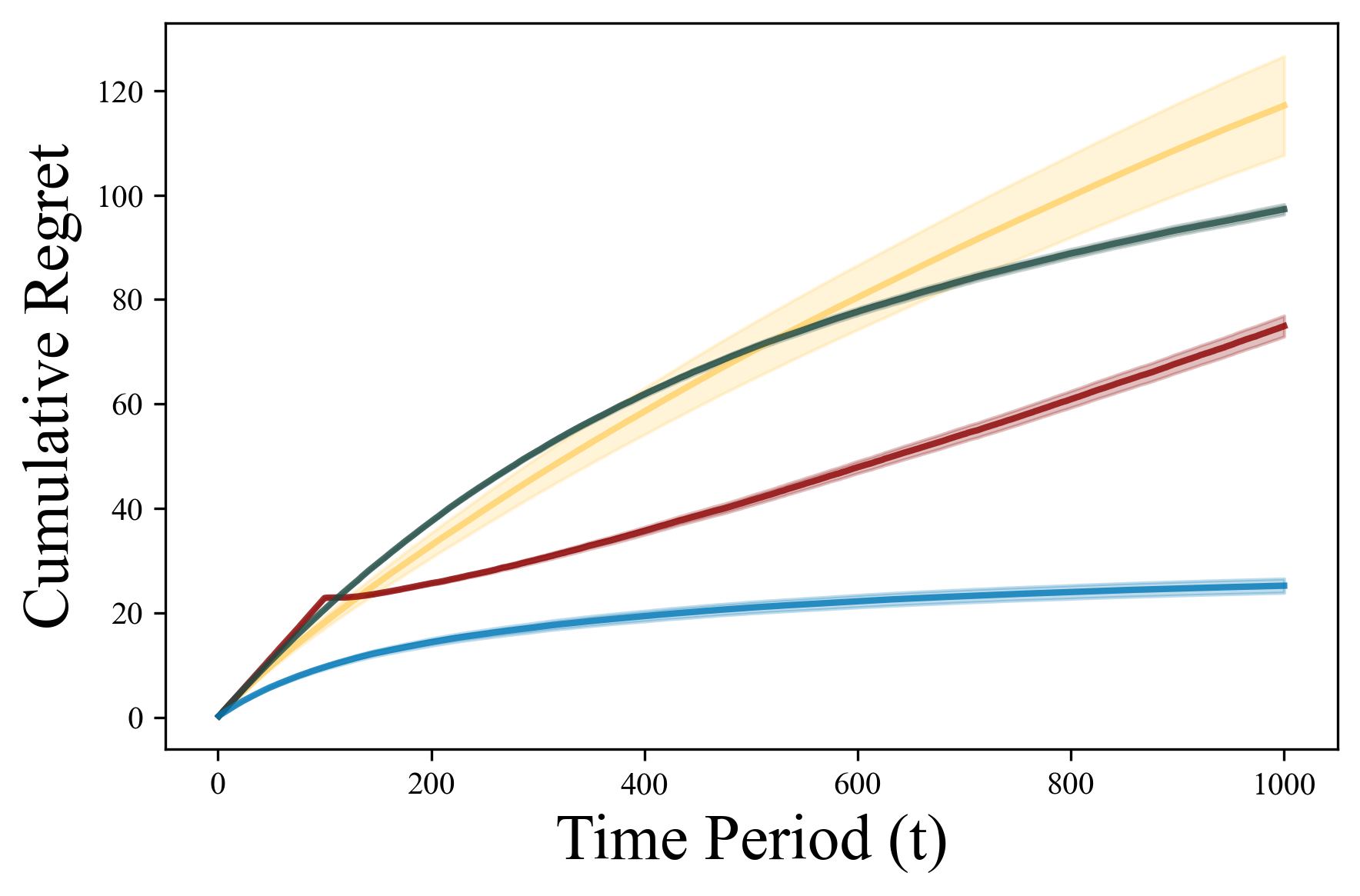}
        \caption{$K=5$}
    \end{subfigure}
    \hfill
    \begin{subfigure}[t]{0.31\textwidth}
        \centering
        \includegraphics[width=\textwidth]{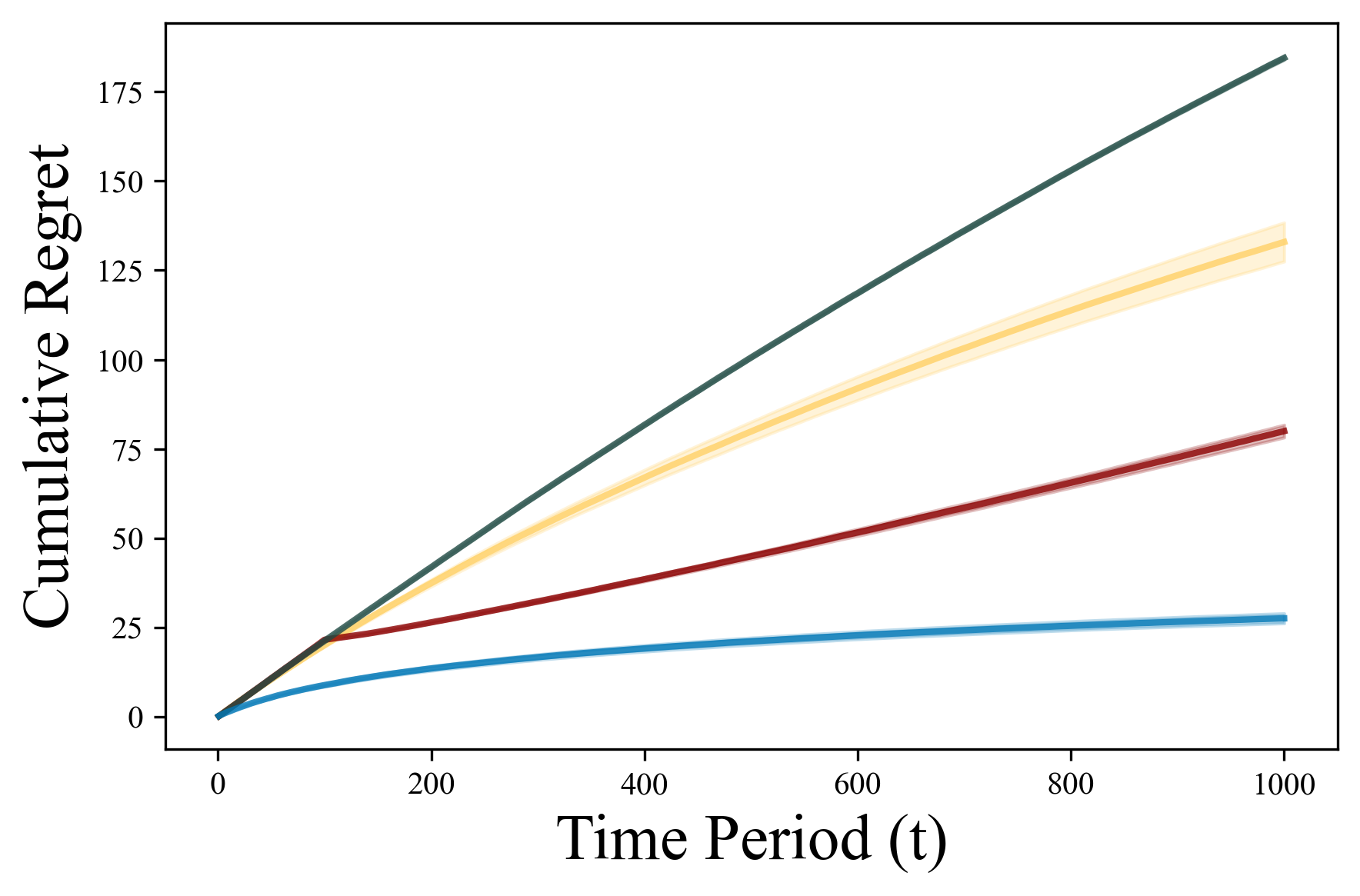}
        \caption{$K=15$}
    \end{subfigure}

    \vspace{1em}
    
    \begin{subfigure}[t]{0.31\textwidth}
        \centering
        \includegraphics[width=\textwidth]{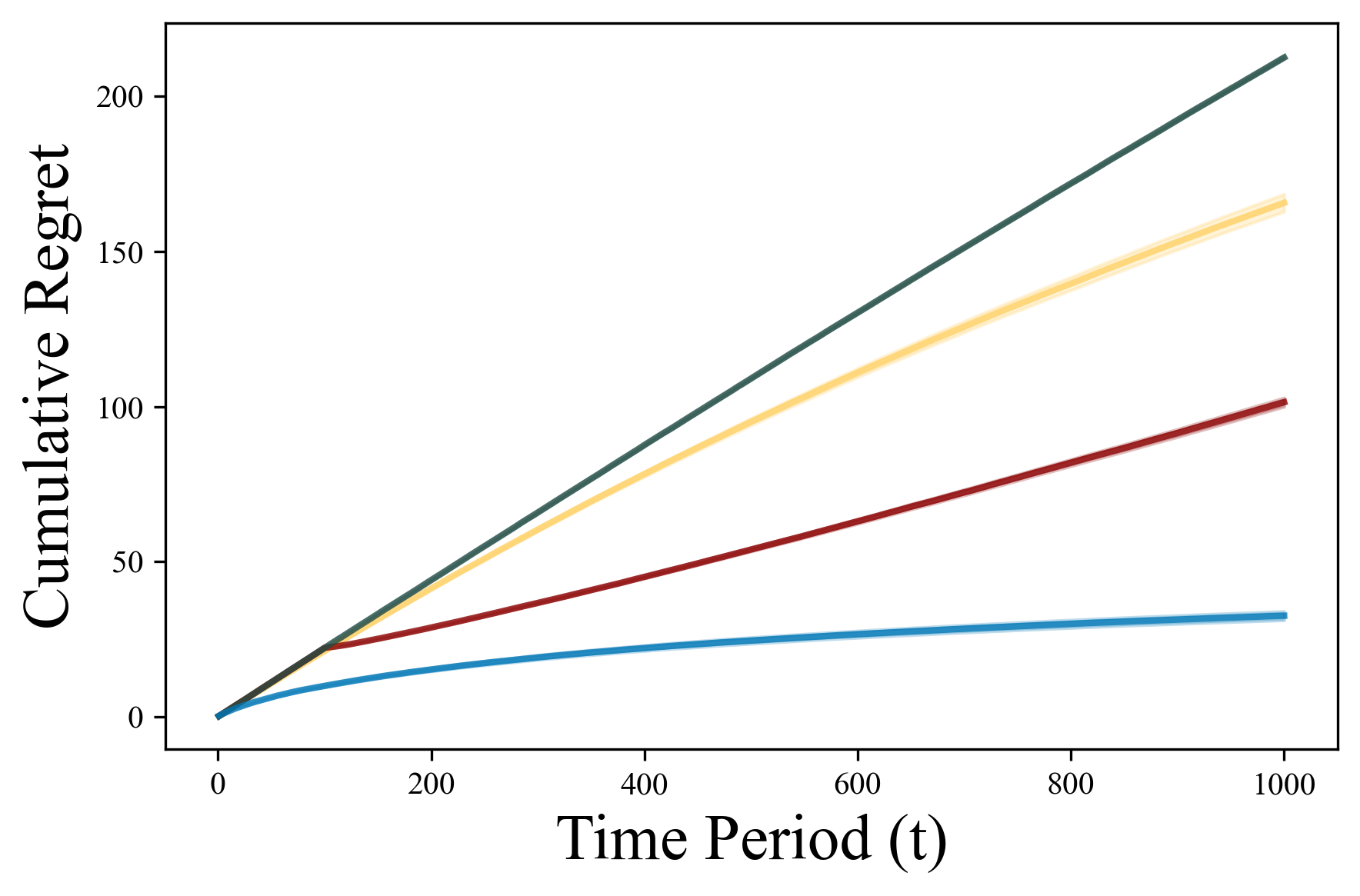}
        \caption{$K=30$}
    \end{subfigure}
    \hfill
    \begin{subfigure}[t]{0.31\textwidth}
        \centering
        \includegraphics[width=\textwidth]{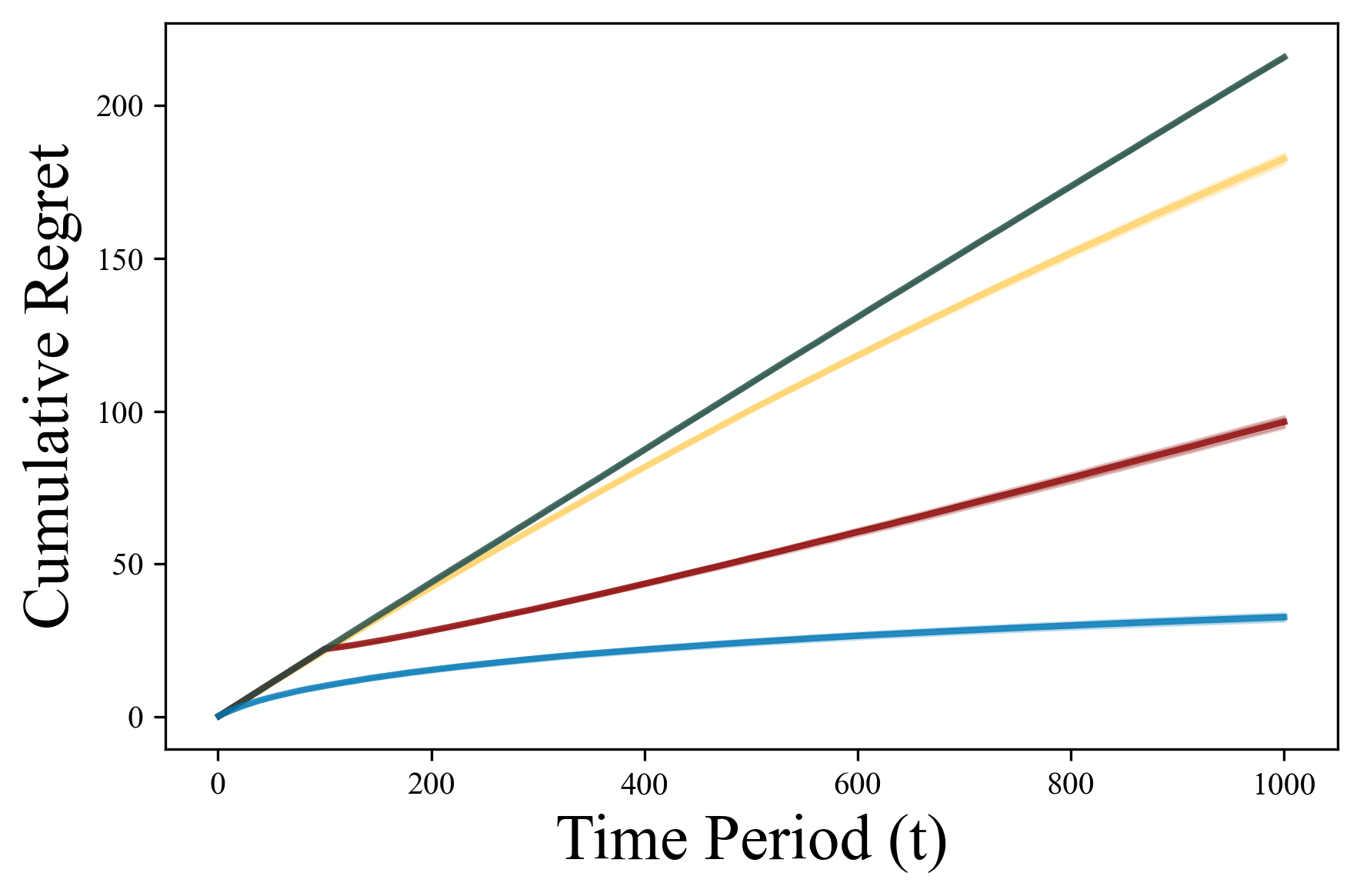}
        \caption{$K=40$}
    \end{subfigure}
    \hfill
    \raisebox{1em}{
    \begin{subfigure}[t]{0.31\textwidth}
        \centering
        \includegraphics[width=\textwidth]{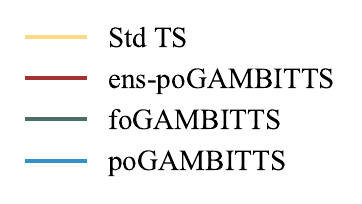}
    \end{subfigure}
    }
    \caption{Cumulative Regret for Varying Across Sizes of Action Space}
    \label{fig:sim:K}
\end{figure}

The simulations varying the size of the action set largely aligned with our expectations: the regret of the linear \poGAMBITTS{} algorithm was stable across values of \( K \), and the regret of \texttt{ens-poGAMBITTS} increased slightly with \( K \), but not substantially. \foGAMBITTS{} performed noticeably worse in this setting, which is expected given that it must estimate the distribution of \( Z \) for each \((x, a)\) pair. A more natural comparator for \foGAMBITTS{} is the contextual bandit, and we examine this comparison in Appendix~\ref{app:add_sims}.

\section{Discussion and Future Work}\label{sec:conc}
This paper introduces GAMBITTS for online learning in generator-mediated bandit environments, showing promise for a synergistic relationship between classical bandit methods and modern advances in generative modeling. By formalizing settings in which actions (e.g., prompts) produce stochastic treatments (e.g., generated responses), this framework opens new opportunities for personalization in domains such as marketing, mHealth, and education, where treatments cannot be easily enumerated and real-time generation can yield more deeply-tailored interventions.

This work opens up a wide range of directions for future research. One immediate area concerns the structure and specification of the working treatment embedding. As shown in Section~\ref{sec:sim}, GAMBITTS can still perform well under embedding misspecification, and flexible reward models can help mitigate this issue. Still, a natural extension would be to learn the embedding online, (e.g., through online sufficient dimensionality reduction). Another direction involves moving beyond a static generator. While this manuscript assumes a fixed model, fine-tuning the generator based on observed outcomes could offer new opportunities for intervention design. Finally, this manuscript assumes that the agent always delivers the generated response to the environment. While this may be appropriate in some applications, in others it may be essential to ensure that any content sent to users is safe. In such cases, intervention designers may wish to incorporate safety checks or content filters prior to delivery. Incorporating these constraints into the generator-mediated bandit framework may require new GAMBITTS variants and accompanying theoretical analysis.

\newpage
\bibliographystyle{plain}

\appendix

\newpage

\section{Optional Prompting Extension}\label{app:opt_prompt}
As discussed in Section \ref{sec:meth}, mHealth interventions frequently begin with a decision of whether to deliver any message at all. GAMBITTS can be naturally extended to incorporate this structure. In this variant, a primary agent first decides whether to act (e.g., to send a message). A GAMBITTS-based agent then operates only at the time points where the primary agent chooses to act. Algorithm \ref{alg:opt_prompt} outlines this extension.

\begin{breakablealgorithm}
\caption{\texttt{GAMBITTS with Optional Prompting}}\label{alg:opt_prompt}
\begin{algorithmic}[1]
\Statex \textbf{Inputs:} Full history $\mathcal{D}^{(1)}_t$, GAMBITTS history $\mathcal{D}^{(2)}_t$, prompting policy $\mu^{(1)}$, GAMBITTS policy $\mu^{(2)}$, context $x_t$.
\State\textbf{Sample} $R_t \sim \mu^{(1)}\paren{\mathcal{D}^{(1)}_t, x_t}$ according to Agent 1.
\State\textbf{If} {$R_t = 1$} then
    \State \hspace{1em} Select query $A_t \sim \mu^{(2)}\paren{\mathcal{D}^{(2)}_t, x_t}$ according to Agent 2.
    \State \hspace{1em} Update $\mathcal{D}^{(2)}_{t+1} \gets \mathcal{D}^{(2)}_t \cup \set{(x_t, A_t, Z_t, Y_t)}$.
    \State \hspace{1em} Update policy $\mu^{(2)}$.
\State\textbf{Update} $\mathcal{D}^{(1)}_{t+1} \gets \mathcal{D}^{(1)}_t \cup \set{(x_t, R_t, Y_t)}$.
\State\textbf{Update} policy $\mu^{(1)}$.
\end{algorithmic}
\end{breakablealgorithm}

\section{Further Related Work}\label{app:rel_work}

Eldowa et al. [2024] study \textit{bandits with mediator feedback} from an information-theoretic perspective, broadly considering settings where the effect of an action is funneled through a stochastic mediator \cite{Eldowa2024}.\footnote{Their analysis focuses on discrete mediators and introduces an \texttt{EXP4}-based algorithm.} Among works on bandits with mediator feedback, our approach is most closely related to those that draw connections to instrumental variables and noncompliance. Zhang et al. [2022] and Della Vecchia and Basu [2025] adopt an econometric perspective, developing \texttt{OFUL}-based algorithms for bandit problems with endogenous covariates. While both consider continuous treatments, they do not focus on high-dimensional treatment spaces or settings in which treatments can be generated offline without interacting with the environment \cite{Zhang2022, DellaVecchia2025}. Moreover, these works emphasize interpretable parameter estimation, particularly in the structural model of the reward, whereas our primary focus is on predictive modeling for policy learning. This distinction is important in generative settings, where treatments (e.g., LLM-generated text) are difficult to interpret directly, and causal inference is used to support adaptive decision-making rather than uncover underlying mechanisms. 

Other related efforts include Stirn and Jebara [2018], which introduces Thompson sampling in the context of noncompliance, assuming a shared, discrete action and treatment space \cite{Stirn2018}. Kveton et al. [2023] also study noncompliance and employ a Thompson sampling-based approach, aiming to identify actions with the highest compliance-weighted mean reward, rather than modeling the full stochastic treatment-generation process \cite{Kveton2023}. 

The generator-mediated bandit setup most closely resembles the noncompliance-inspired \textit{instrument-armed bandit} (IAB) framework of Kallus [2018], where each arm pull represents the choice of an instrument, and the mediated bandit environment of Zou et al. [2025] \cite{Kallus2018, Zou2025}. One key departure in our setting is the presence of a stochastic generator that produces the treatment, which leads us to focus on questions of treatment representation, generator access, and nonlinear outcome modeling, areas not emphasized in the IAB or other mediated bandit frameworks.

In contrast to Kallus' focus, our setting does not require the treatment space $\mathcal{G}$ to match the action space $\mathcal{A}$. While assuming a shared action and treatment space is natural for modeling noncompliance, where the intended and realized actions are ideally aligned, our setting explicitly separates the two. Here, the distinction is fundamental rather than incidental, reflecting the design goal of producing personalized responses rather than enforcing direct action execution. 

Like Zou et al. [2025], we consider a general framework for learning in mediated bandit environments. However, motivated by the complex nature of generative outputs, we focus on flexible, potentially nonlinear models for both the mediator and the reward. In contrast, Zou et al. frame their approach more explicitly as a surrogate-reward method,\footnote{We note that GAMBITTS can also be viewed in this way, as discussed in Section~\ref{sec:theory}.} emphasizing linear models and aiming for robustness in cases where the action may influence the reward directly, outside the mediation pathway.\footnote{In our motivating application, the action affects the outcome only through the generated treatment, making full mediation a reasonable assumption.}

The works discussed above present algorithms that can be broadly understood as instances of a general \textit{noisy action} framework, where the environment observes a stochastic transformation of the agent’s chosen action.\footnote{We use the term \textit{noisy action} to emphasize that, from the environment’s perspective, the observed treatment is a noisy representation of the agent’s intended action.} From this perspective, GAMBITTS represents a specific instantiation of a noisy-action Thompson sampler. While GAMBITTS is motivated by the GAMBIT framework, with high-dimensional and continuous treatment spaces, taking $h = I$ (as defined in Section~\ref{sec:prob_setup}) recovers algorithms suited to the econometric and psychometric mediated bandit environments discussed above. Further taking $\mathcal{G} = \mathcal{A}$ yields algorithms appropriate for noncompliance settings.


In the mHealth literature, researchers have recently begun exploring how LLMs can support intervention design, motivated by the the parallel rise of mobile interventions interventions and generative models. Haag et al. [2024] compared the quality of JITAI intervention text generated by GPT-4 to that authored by both domain experts and non-experts. Their findings showed that LLM-generated content performed favorably compared with human-authored text \cite{lastJITAI}. Additionally, James et al. [2024] conducted a randomized trial comparing LLM- and human-generated goals in a gamified mHealth application, evaluating their impact on participant engagement. The study found no significant difference in engagement between the two groups, suggesting that LLMs may offer a viable and scalable alternative for content generation in such settings \cite{James2024}. However, as the goals were pre-written and not personalized in real time; the study did not evaluate the use of LLMs for on-the-fly message generation or adaptive personalization. Lastly, the IHS-COMPASS studies used LLMs to generate JITAI intervention text in recent deployments. However, as in James et al. [2024], these messages were pre-written rather than generated on-the-fly \cite{Fritsche2024}.

\section{Ensemble-Based Algorithms}\label{app:ens_algs}

As in Lu and Van Roy [2017], given a second-stage reward model parametrized by $\theta_2$, ensemble-based GAMBITTS approaches require initially drawing $M_{ens}$ sets of parameters $\Tilde{\theta}^1_{2,1}\ddd \Tilde{\theta}^{M_{ens}}_{2,1}$ from a pre-determined initialization distribution. Algorithms~\ref{alg:ens_foGAMBITTS} and \ref{alg:ens_poGAMBITTS} then demonstrate how to select action $a_t$ for $t=1\ddd T$. As in Algorithms~\ref{alg:foGAMBITTS} and \ref{alg:poGAMBITTS}, an analyst may approximate the integrals below via Monte Carlo.

Steps~\ref{alg:ens_foGAMBITTS:update} and \ref{alg:ens_poGAMBITTS:update} in Algorithms~\ref{alg:ens_foGAMBITTS} and \ref{alg:ens_poGAMBITTS} (respectively) require updating model parameters as described in Lu and Van Roy [2017]. These updates involve perturbations of observed reward to promote exploration \cite{Lu2017}.

\begin{algorithm}[H]
\caption{\texttt{Ensemble-Based Fully Online GAMBITTS (ens-foGAMBITTS)}}\label{alg:ens_foGAMBITTS}
\begin{algorithmic}[1]
\Statex\textbf{Inputs}: Data $\mathcal{D}_t$, prior $\pi_1$, models $f_1$, $m_2$, current context $x_t$, reward model parameters $\set{\Tilde{\theta}^m_{2,t}}_{m=1}^{M_{ens}}$.
\State\textbf{Sample} $\theta_{1,t}\sim P_1\paren{\theta_1\mid \mathcal{D}_t}$
\State\textbf{Sample} $j\sim\mathcal{U}\paren{\set{1\ddd M_{ens}}}$.
\State\textbf{For} each $a\in\mathcal{A}$, calculate $$\Ee{\theta_{1,t},\Tilde{\theta}^j_{2,t}}{Y\mid a, x_t}=\int_{\mathcal{Z}} m_2\paren{z, x_t; \Tilde{\theta}^j_{2,t}} f_1(z; a, x_t, \theta_{1,t})dz.$$
\State\textbf{Set} $A_t= \displaystyle\argmax_{a\in\mathcal{A}}\Ee{\theta_{1,t},\Tilde{\theta}^j_{2,t}}{Y\mid a, x_t}$.
\State\textbf{Given}\label{alg:ens_foGAMBITTS:update} $(z_t, x_t, y_t)$, update $\Tilde{\theta}^1_{2,t+1}\ddd \Tilde{\theta}^{M_{ens}}_{2,t+1}$.
\end{algorithmic}
\end{algorithm}

\begin{algorithm}
\caption{\texttt{Ensemble-Based Partially Online GAMBITTS (ens-poGAMBITTS)}}\label{alg:ens_poGAMBITTS}
\begin{algorithmic}[1]
\Statex\textbf{Inputs}: Data $\mathcal{D}_t$, model $m_2$, current context $x_t$, pretrained model $f_1^{off}$, reward model parameters $\set{\Tilde{\theta}^m_{2,t}}_{m=1}^{M_{ens}}$.
\State\textbf{Sample} $j\sim\mathcal{U}\paren{\set{1\ddd M_{ens}}}$.
\State\textbf{For} each $a\in\mathcal{A}$, calculate
$$\Ee{\Tilde{\theta}^j_{2,t}}{Y\mid a, x_t} = \int_{\mathcal{Z}} m_2\paren{z, x_t; \Tilde{\theta}^j_{2,t}} f^{off}_1(z; a, x_t)dz.$$
\State\textbf{Set} $A_t:=\displaystyle\argmax_{a\in\mathcal{A}}\Ee{\Tilde{\theta}^j_{2,t}}{Y\mid a, x_t}$.
\State\textbf{Given}\label{alg:ens_poGAMBITTS:update} $(z_t, x_t, y_t)$, update $\Tilde{\theta}^1_{2,t+1}\ddd \Tilde{\theta}^{M_{ens}}_{2,t+1}$.
\end{algorithmic}
\end{algorithm}

\section{Further Theoretical Discussion}\label{app:theory}
In this appendix, we provide proofs for the results discussed in Section~\ref{sec:theory}, along with additional theoretical developments related to Thompson sampling in the GAMBITTS framework. The appendix is organized as follows:

\begin{itemize}
    \item Appendix~\ref{app:theory:std}: Proof of the regret upper bound presented in Section~\ref{sec:theory} for standard Thompson sampling in a GAMBIT environment.
    \item Appendix~\ref{app:theory:po}: Proofs of the regret upper bounds presented in Section~\ref{sec:theory} for \poGAMBITTS{}-based algorithms, as well as a bound on the number of samples needed for the offline (empirical) treatment distribution to approximate the true distribution.
    \item Appendix~\ref{app:theory:fo}: Proof of the regret upper bound presented in Section~\ref{sec:theory} for \foGAMBITTS.
    \item Appendix~\ref{app:theory:lower_bound}: Statement and proof of a regret lower bound for \foGAMBITTS.
\end{itemize}

\subsection{Upper Regret Bound for Standard Thompson Sampling}\label{app:theory:std}

In this section, we present a Bayesian regret bound of a standard Thompson sampling algorithm that models the problem as a conventional multi-armed contextual bandit over $\mathcal{X} \times \mathcal{A}$, ignoring the generator-mediation structure.
Our analysis adapts the proof from Chapter 36.1 of Lattimore and Szepesvári [2020] \cite{lattimore2020bandit}.

\begin{claim}
The Bayesian regret of the standard Thompson sampling agent that models the problem as a multi-armed contextual bandit over $\mathcal{X} \times \mathcal{A}$ is bounded by
$$
\text{BR}_T^{standard} \leq \mathcal{O}((\sigma_1 + \sigma_2)\sqrt{CKT\log T} + BCK)
$$
where $B$ is a uniform bound on the mean reward such that $\vert m_2(z, x ; \theta) \vert \leq B$.
\end{claim}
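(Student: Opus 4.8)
The plan is to reduce the problem, as the standard agent perceives it, to an ordinary finite-armed contextual bandit over the $CK$ composite arms $(x,a)\in\mathcal{X}\times\mathcal{A}$, and then to invoke the textbook Bayesian regret analysis of Thompson sampling. The key observation is that, from this agent's perspective, each composite arm $(x,a)$ carries a single unknown scalar mean, the mediated mean reward $\mu(x,a):=\Ee{\xi^{\theta_1}_{x,a}}{m_2(Z,x;\theta_2)}$, and the Bayesian regret in the statement is exactly the regret measured against these means. First I would pin down the effective observation noise. Writing the observed reward as $y_t=\mu(X_t,A_t)+\zeta_t$, the deviation splits as $\zeta_t=\paren{m_2(Z_t,X_t;\theta_2)-\mu(X_t,A_t)}+\eta_t$, the sum of the treatment-induced fluctuation of the mean reward (conditionally $\sigma_1$-subgaussian by the paper's assumption on $m_2(Z,x;\theta_2)$ under $Z\sim\xi^{\theta_1}_{x,a}$) and the exogenous reward noise $\eta_t$ (which is $\sigma_2$-subgaussian). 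By the triangle inequality for the subgaussian norm, $\zeta_t$ is $(\sigma_1+\sigma_2)$-subgaussian, so the standard agent faces a contextual bandit with $(\sigma_1+\sigma_2)$-subgaussian rewards and means lying in $[-B,B]$.

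Given this reduction, I would apply the confidence-interval form of the Bayesian regret analysis for Thompson sampling (Chapter 36.1 of Lattimore and Szepesvári). The engine of the argument is the posterior-matching identity: conditioned on the history $\mathcal{F}_{t-1}$ and the current context $X_t$, the sampled action $A_t$ and the Bayes-optimal action $A_t^\ast$ are identically distributed, so for any history-measurable upper confidence function $U_t$ one has $\E{U_t(X_t,A_t^\ast)}=\E{U_t(X_t,A_t)}$. Taking $U_t(x,a)=\widehat\mu_t(x,a)+(\sigma_1+\sigma_2)\sqrt{2\log(1/\delta)/N_t(x,a)}$, where $N_t(x,a)$ counts prior pulls and $\widehat\mu_t$ is the empirical mean, the per-round Bayesian regret decomposes as
$$\E{\mu(X_t,A_t^\ast)-\mu(X_t,A_t)}=\E{\mu(X_t,A_t^\ast)-U_t(X_t,A_t^\ast)}+\E{U_t(X_t,A_t)-\mu(X_t,A_t)}.$$
The first (optimism) term is nonpositive on the event that all confidence bounds hold; choosing $\delta\asymp 1/T^2$ keeps the complementary failure event's contribution lower order, while the one-time cost of the first pull of each of the $CK$ arms—before its confidence width is informative—accounts for the additive $\mathcal{O}(BCK)$ in the bound (bounded means give $\mathcal{O}(B)$ per arm).

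Finally I would bound the accumulated confidence widths. Summing the second term over $t$ gives $(\sigma_1+\sigma_2)\sqrt{2\log(1/\delta)}\sum_t N_t(X_t,A_t)^{-1/2}$, and the standard counting identity yields $\sum_t N_t(X_t,A_t)^{-1/2}=\sum_{(x,a)}\sum_{n=1}^{N_T(x,a)}n^{-1/2}\le 2\sum_{(x,a)}\sqrt{N_T(x,a)}$; a single Cauchy--Schwarz over the $CK$ arms together with $\sum_{(x,a)}N_T(x,a)=T$ gives $2\sqrt{CKT}$. Collecting terms and substituting $\delta\asymp 1/T^2$ produces the leading order $\mathcal{O}\paren{(\sigma_1+\sigma_2)\sqrt{CKT\log T}}$ plus the additive $BCK$. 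I expect the main obstacle to be bookkeeping rather than conceptual: carefully justifying the noise-combination step—so that regret against the mediated mean $\mu$ coincides with the Bayesian regret defined in the paper—and ensuring that the Cauchy--Schwarz step over all $CK$ arms produces $\sqrt{CK}$ rather than $CK$, after which the remaining steps follow the cited textbook argument essentially verbatim.
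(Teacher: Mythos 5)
Your proposal is correct and follows essentially the same route as the paper's proof: the same reduction to a $CK$-armed contextual bandit with $(\sigma_1+\sigma_2)$-subgaussian composite noise, the same posterior-matching decomposition with a confidence-width function $U_t$, and the same Cauchy--Schwarz step yielding $(\sigma_1+\sigma_2)\sqrt{CKT\log T}$ with $\delta\asymp 1/T^2$. The only (immaterial) difference is in the bookkeeping of the additive $BCK$ term, which the paper charges to the confidence-bound failure event via $2BT\cdot\Prb{E^c}\leq 4BCK$ rather than to the initial uninformative pulls.
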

\begin{proof}
Recall that the reward received when the agent selects action $a \in \mathcal{A}$ under context $x \in \mathcal{X}$ is given by $m_2(Z, x ; \theta_2) + \eta$, where $\eta$ is $\sigma_2$-subgaussian and $Z \in \mathbb{R}^d$ is drawn from the distribution $\xi_{x, a}^{\theta_1}$.\footnote{Recall $\xi_{x, a}^{\theta_1}$ has density $f_1(z;x,a, \theta_1)$.} As discussed in Section~\ref{sec:theory}, we assume that $m_2(Z, x ; \theta_2)$ is $\sigma_1$-subgaussian. It follows that the overall reward for a given context-action pair $(x, a)$ is $(\sigma_1 + \sigma_2)$-subgaussian.

We use $m(x, a)$ to denote the mean reward for the context-action pair $(x, a)$; i.e.,
$$
m(x, a) := \int_{\mathcal{Z}} m_2(z, x ; \theta_2) f_1(z;x,a, \theta_1)dz.
$$
Since we assume that the mean reward is bounded by $\vert m_2(z, x ; \theta_2) \vert \leq B$, it follows that $\vert m(x, a) \vert \leq B$.

Now, we adapt the Bayesian regret analysis for a standard Thompson sampling algorithm for multi-armed bandit settings to the \textit{contextual} multi-armed bandit setting where the context is assumed to be drawn i.i.d. every round.

Well, for each $a \in \mathcal{A}$, $x \in \mathcal{X}$ and $t = 1, \dots, T$, define
$$
U_t(x, a) := \text{clip}_{[-B, B]} \left(\widehat{m}_{x, a}(t - 1) + (\sigma_1 + \sigma_2) \sqrt{\frac{2 \log(1 / \delta)}{1 \vee N_{x, a}(t - 1)}} \right),
$$
where $N_{x, a}(t - 1) = \sum_{s = 1}^{t - 1} \Ind{\set{ X_t = x, A_t = a }}$ is the count of the pair $(x, a)$ up to time $t - 1$ and $\widehat{m}_{x, a}(t - 1)$ denotes the empirical estimate of the reward of action $a$ under context $x$ based on observations up to time $t - 1$. If $N_{x, a}(t - 1) = 0$, we set $\widehat{m}_{x, a}(t - 1) = 0$.

Let $E$ be the event that, for all $t = 1, \dots, T$ and $(x, a) \in \mathcal{X} \times \mathcal{A}$, the empirical estimate satisfies
$$
\vert \widehat{m}_{t - 1}(x, a) - m(x, a) \vert \leq (\sigma_1 + \sigma_2) \sqrt{\frac{2 \log(1 / \delta)}{1 \vee N_{x, a}(t - 1)}}.
$$
Under this setup, the standard Hoeffding concentration bound gives $\Prb{E^c} \leq 2 CK T \delta$ where $C = \vert \mathcal{X} \vert$ and $K = \vert \mathcal{A} \vert$.

Let $\mathcal{F}_t = \sigma(X_1, A_1, Y_1, \dots, X_t, A_t, Y_t, X_{t + 1})$ be the $\sigma$-algebra generated by the interaction sequence by the end of time $t$ (and the following context at time $t + 1$).
Note that $U_t(x, a)$ is $\mathcal{F}_{t - 1}$-measurable.
Using the fact that, under Thompson sampling, the conditional distributions of $A_t^\ast = \argmax_a m(X_t, a)$ and $A_t$ given $\mathcal{F}_{t - 1}$ are identical, we obtain
\begin{align*}
\text{BR}_T
&= \E{\sum_{t = 1}^T (m(X_t, A_t^\ast) - m(X_t, A_t)) } \\
&= \E{ \sum_{t = 1}^T \E{ m(X_t, A_t^\ast) - m(X_t, A_t) \mid \mathcal{F}_{t - 1} } } \\
&= \E{ \sum_{t = 1}^T \paren{m(X_t, A_t^\ast) - U_t(X_t, A_t^\ast)} + \sum_{t = 1}^T (U_t(X_t, A_t) - m(X_t, A_t)) }.
\end{align*}
On the event $E^c$, the terms inside the expectation are bounded by $2BT$.
On the event $E$, the first sum is negative and the second sum is bounded by
\begin{align*}
\Ind{\set{E}} \sum_{t = 1}^T (&U_t(X_t, A_t) - m(X_t, A_t)) \\
&= \Ind{\set{E}} \sum_{t = 1}^T \sum_{x \in \mathcal{X}} \sum_{a \in \mathcal{A}} \Ind{\set{X_t = x, A_t = a }} (U_t(x, a) - m(x, a)) \\
&\leq (\sigma_1 + \sigma_2) \sum_{x \in \mathcal{X}} \sum_{a \in \mathcal{A}} \sum_{t = 1}^T \Ind{\set{X_t = x, A_t = a }} \sqrt{\frac{8 \log(1 / \delta)}{ 1 \vee N_{x, a}(t - 1)}} \\
&\leq (\sigma_1 + \sigma_2) \sum_{x \in \mathcal{X}} \sum_{a \in \mathcal{A}} \int_0^{N_{x, a}(T)} \sqrt{\frac{8 \log(1 / \delta)}{s}} ds \\
&= (\sigma_1 + \sigma_2) \sum_{x \in \mathcal{X}} \sum_{a \in \mathcal{A}} \sqrt{32 N_{x, a}(T) \log(1 / \delta)}
\leq (\sigma_1 + \sigma_2) \sqrt{32 CK T \log (1 / \delta)}.
\end{align*}
The result follows from choosing $\delta = \frac{1}{T^2}$ and applying the bound $\Prb{E^c} \leq 2 CK T \delta$.
\end{proof}

\subsection{Upper Regret Bounds for \poGAMBITTS}\label{app:theory:po}
The contextual bandit environment of interest with the two-stage reward generating process is fully specified by
$$
\mathcal{E}(\theta_1, \theta_2) = \paren{P_\mathcal{X}, \set{ \xi_{x, a}^{\theta_1}}_{x \in \mathcal{X}, a \in \mathcal{A}}, \set{ m_2(z,x;\theta_2)}_{x \in \mathcal{X}, a \in \mathcal{A}}, P_\eta},
$$
where $P_\mathcal{X}$ is the context distribution, $\xi_{x, a}^{\theta_1}$ is the distribution of the representation $Z \in \mathbb{R}^d$ given the context-action pair $(x, a)$, $\theta_2$ is the parameter that governs the mean reward function $m_2(z, x ; \theta_2)$, and $P_\eta$ is the $\sigma_2$-subgaussian noise distribution.

In the analysis of \poGAMBITTS, we consider the Bayesian regret with $\theta_1$ fixed to $\theta_1^\ast$ and $\theta_2$ is integrated over the distribution $\pi_2$.
We denote the environment with $\theta_1$ fixed to the true parameter $\theta_1^\ast$ by 
$$
\mathcal{E}(\theta_2) = \mathcal{E}(\theta_1^\ast, \theta_2) = \paren{P_\mathcal{X}, \set{ \xi_{x, a}^{\theta_1^\ast} }_{x \in \mathcal{X}, a \in \mathcal{A}}, \set{ m_2(z,x;\theta_2)}_{x \in \mathcal{X}, a \in \mathcal{A}}, P_\eta}.
$$
Theorem~\ref{thm:poGAMBITTS_red} assumes access to an Alg with the Bayesian regret bound of $\text{BR}_T^{Alg}$ under the approximate environment
$$
\widetilde{\mathcal{E}}(\theta_2) = \paren{P_\mathcal{X}, \set{ \xi_{x, a}^{off}}_{x \in \mathcal{X}, a \in \mathcal{A}}, \set{ m_2(z,x;\theta_2)}_{x \in \mathcal{X}, a \in \mathcal{A}}, P_\eta},
$$
where $\xi_{x, a}^{\theta_1^\ast}$ in the definition of $\mathcal{E}(\theta_2)$ is replaced with its offline empirical estimate $\xi_{x, a}^{off}$ (with density $f_1^{off}(\cdot | x, a)$), and the same mean reward function $m_2(z, x ; \theta_2)$ parameterized by $\theta_2$ is used.
To be more precise, the Bayesian regret of the algorithm Alg under the approximate environment $\widetilde{\mathcal{E}}(\theta_2)$ is defined as
$$
\text{BR}_T^{Alg} = \Ee{\theta_2 \sim \pi_2}{ \Ee{\widetilde{\mathcal{E}}(\theta_2)}{\sum_{t = 1}^T \paren{ \Ee{\xi_{X_t, \widetilde{A}_t^\ast}^{off}}{m_2(Z, X_t)\mid X_t} - \Ee{\xi_{X_t, A_t}^{off}}{m_2(Z, X_t)\mid X_t}}}
},
$$
where $\widetilde{A}_t^\ast = \displaystyle\argmax_a \Ee{\xi_{X_t, a}^{off}}{m_2(Z, X_t)\mid X_t, \theta_2}$; i.e., the optimal action in context $X_t$ under the environment $\widetilde{\mathcal{E}}(\theta_2)$.
Theorem~\ref{thm:poGAMBITTS_red} shows that running the algorithm Alg under the true environment $\mathcal{E}(\theta_2)$ achieves a similar regret bound as long as the approximation of $\mathcal{E}(\theta_2)$ by $\widetilde{\mathcal{E}}(\theta_2)$ is sufficiently close.

\begingroup
\renewcommand{\thetheorem}{\ref{thm:poGAMBITTS_red}}
\begin{theorem}[Restated]
\label{thm:poGAMBITTS_red_restated}
Let $f_1^{off}$ be an empirical model for the treatment distribution satisfying \eqref{eq:foGAMBITTS_ass}. Let \textnormal{Alg} be a contextual bandit algorithm that selects actions based on the estimated reward function $\widehat{m}_2$.
Then, running $\text{Alg}$ in the partially online stochastic treatment setting gives $T$-step Bayesian regret $BR^{Alg}_T$, with
$$
BR_T^{Alg} \leq \mathcal{O}\paren{\widehat{BR}^{Alg}_T + T\sqrt{\varepsilon T}},
$$
where $\widehat{BR}^{Alg}_T$ is the $T$-step Bayesian regret of \textnormal{Alg} with respect to $\widehat{m}_2$.
\end{theorem}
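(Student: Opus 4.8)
The plan is to prove the bound through a regret decomposition coupled with a change-of-measure argument that isolates two distinct sources of error: the gap between the true marginalized reward $m_2(x,a;\theta_2) = \int m_2(z,x;\theta_2) f_1(z;x,a,\theta_1^\ast)\,dz$ and its offline surrogate $\widehat{m}_2$, and the gap between the trajectory law induced by the true environment $\mathcal{E}(\theta_2)$ and that induced by the approximate environment $\widetilde{\mathcal{E}}(\theta_2)$ under which $\widehat{BR}_T^{Alg}$ is defined. I would work conditionally on $\theta_2$ throughout and integrate against $\pi_2$ at the end, since every bound below is uniform in $\theta_2$.

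First I would establish a uniform per-step reward-approximation bound. Writing $m_2(x,a;\theta_2) - \widehat{m}_2(x,a;\theta_2) = \mathbb{E}_{\xi_{x,a}^{\theta_1^\ast}}[m_2(Z,x;\theta_2)] - \mathbb{E}_{\xi_{x,a}^{off}}[m_2(Z,x;\theta_2)]$ exhibits this as a difference of expectations of the same ($\sigma_1$-subgaussian, and uniformly bounded by $B$) function under two measures whose KL divergence is at most $\varepsilon$ by assumption~\eqref{eq:foGAMBITTS_ass}. Pinsker's inequality (or the subgaussian transportation inequality) then gives $\sup_{x,a}\lvert m_2(x,a;\theta_2) - \widehat{m}_2(x,a;\theta_2)\rvert = \mathcal{O}(\sqrt{\varepsilon})$.

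Next I would decompose the true per-step regret by inserting $\widehat{m}_2$ and the surrogate-optimal action $\widetilde{A}_t^\ast = \argmax_a \widehat{m}_2(X_t,a;\theta_2)$. Adding and subtracting terms produces four brackets: two ``reward-swap'' brackets of the form $m_2 - \widehat{m}_2$, each controlled by the $\mathcal{O}(\sqrt{\varepsilon})$ bound above and hence contributing $\mathcal{O}(T\sqrt{\varepsilon})$ in aggregate; the bracket $\widehat{m}_2(X_t, A_t^\ast) - \widehat{m}_2(X_t,\widetilde{A}_t^\ast)$, which is nonpositive by optimality of $\widetilde{A}_t^\ast$ and is therefore dropped; and the surrogate-regret bracket $\widehat{m}_2(X_t,\widetilde{A}_t^\ast) - \widehat{m}_2(X_t,A_t)$. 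The expectation of this last bracket is taken under the true environment $\mathcal{E}(\theta_2)$, whereas $\widehat{BR}_T^{Alg}$ is defined under $\widetilde{\mathcal{E}}(\theta_2)$, so the two differ only through the trajectory law.

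The main obstacle, and the closing step, is bounding this last discrepancy by a change of measure on the full $T$-step trajectory. The surrogate-regret sum is a trajectory functional taking values in $[0, 2BT]$, so its expectations under $\mathcal{E}(\theta_2)$ and $\widetilde{\mathcal{E}}(\theta_2)$ differ by at most $2BT \cdot \lVert P_{\mathcal{E}} - P_{\widetilde{\mathcal{E}}} \rVert_1$. The two laws share the same context distribution, the same history-dependent action kernel (algorithm and priors are identical), and the same reward-noise kernel given $(z_t,x_t)$; they differ only in the conditional treatment law $\xi_{x_t,a_t}$. The chain rule for KL divergence therefore tensorizes into per-step conditional terms in which the context, action, and noise factors vanish because their kernels are shared, leaving only $\mathbb{E}[KL(\xi^{off}_{X_t,A_t} \,\|\, \xi^{\theta_1^\ast}_{X_t,A_t})] \le \varepsilon$; the delicate point is that $(X_t,A_t)$ has a different marginal law in each environment, but this is harmless since the per-pair treatment KL is bounded by $\varepsilon$ uniformly, so the expectation is $\le \varepsilon$ regardless. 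Hence $KL(P_{\widetilde{\mathcal{E}}} \,\|\, P_{\mathcal{E}}) \le T\varepsilon$, Pinsker gives $\lVert P_{\mathcal{E}} - P_{\widetilde{\mathcal{E}}} \rVert_1 = \mathcal{O}(\sqrt{T\varepsilon})$, and the discrepancy is $\mathcal{O}(T\sqrt{T\varepsilon}) = \mathcal{O}(T\sqrt{\varepsilon T})$. Combining the three contributions and absorbing the dominated $\mathcal{O}(T\sqrt{\varepsilon})$ term into $\mathcal{O}(T\sqrt{\varepsilon T})$ yields $BR_T^{Alg} \le \mathcal{O}(\widehat{BR}_T^{Alg} + T\sqrt{\varepsilon T})$, as required.
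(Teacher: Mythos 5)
Your proposal is correct and follows essentially the same route as the paper's proof: both decompose the regret into per-step reward-swap terms controlled by Pinsker's inequality applied to the $\varepsilon$-close treatment densities, plus a full-trajectory change of measure whose KL tensorizes to $\varepsilon T$ via the standard divergence decomposition, yielding the $2BT\cdot\mathcal{O}(\sqrt{\varepsilon T})$ term. Your explicit handling of the surrogate-optimal action $\widetilde{A}_t^\ast$ and of the direction of the KL divergence is, if anything, slightly more careful than the paper's presentation, but the argument is the same.
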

\addtocounter{theorem}{-1}
\endgroup

We provide a proof of Theorem~\ref{thm:poGAMBITTS_red} below, which relies on Pinsker's inequality.

\begin{lemma}[Pinsker's inequality] \label{lemma:pinsker}
For any pair of measures $P$ and $Q$ on the same probability space $\paren{\Omega, \mathcal{F}}$, we have
$$
\delta(P, Q) = \sup_{A \in \mathcal{F}} P(A) - Q(A) \leq \sqrt{\frac{1}{2} D_{KL}\paren{P\Vert Q}},
$$
where $D_{KL}$ denotes KL divergence in nats.
\end{lemma}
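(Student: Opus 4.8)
The quantity $\delta(P,Q)=\sup_{A\in\mathcal F}\paren{P(A)-Q(A)}$ is the total variation distance between $P$ and $Q$ (replacing $A$ by $A^c$ shows the supremum of the signed difference equals that of its absolute value). The plan is to fix an arbitrary event $A$, bound $P(A)-Q(A)$ by $\sqrt{\tfrac12 D_{KL}(P\Vert Q)}$, and then take the supremum over $A$ at the very end. So fix $A$ and write $p\coloneqq P(A)$, $q\coloneqq Q(A)$. If $q\in\{0,1\}$ with $p\neq q$, then $D_{KL}(P\Vert Q)=\infty$ and the bound is trivial, so we may assume $p,q\in(0,1)$.

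First I would reduce to a two-point problem. Coarse-graining $\Omega$ through the measurable indicator map $\omega\mapsto\I{\set{\omega\in A}}$ pushes $P$ and $Q$ forward to the Bernoulli laws $\mathrm{Ber}(p)$ and $\mathrm{Ber}(q)$. Since KL divergence cannot increase under a measurable map---the data-processing inequality, itself a consequence of the log-sum inequality applied to the partition $\set{A,A^c}$---we obtain
$$
d(p\Vert q)\coloneqq p\log\frac{p}{q}+(1-p)\log\frac{1-p}{1-q}\;\le\;D_{KL}(P\Vert Q).
$$
Hence it suffices to prove the scalar inequality $\paren{p-q}^2\le\tfrac12\,d(p\Vert q)$: combining it with the display and taking square roots gives $P(A)-Q(A)=p-q\le\sqrt{\tfrac12 D_{KL}(P\Vert Q)}$.

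To establish the scalar inequality, fix $p\in(0,1)$ and set $f(q)\coloneqq d(p\Vert q)-2\paren{p-q}^2$ on $q\in(0,1)$. Then $f(p)=0$, and a direct differentiation, combining the first two terms over the common denominator $q(1-q)$, gives
$$
f'(q)=-\frac{p}{q}+\frac{1-p}{1-q}+4(p-q)=\paren{q-p}\paren{\frac{1}{q(1-q)}-4}.
$$
Because $q(1-q)\le\tfrac14$ on $(0,1)$, the bracketed factor is nonnegative, so $f'(q)$ carries the sign of $q-p$: the function $f$ is nonincreasing on $(0,p]$ and nondecreasing on $[p,1)$. Thus $f$ attains its minimum at $q=p$, where $f(p)=0$, and therefore $f(q)\ge0$, i.e.\ $\paren{p-q}^2\le\tfrac12\,d(p\Vert q)$, for every $q\in(0,1)$.

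Taking the supremum over $A\in\mathcal F$ then completes the proof. The main delicacy is the two-point reduction: one must justify that passing from the full space to the binary partition $\set{A,A^c}$ can only decrease KL divergence---rather than attempt to manipulate the possibly non-dominated densities of $P$ and $Q$ directly---and must dispatch the degenerate cases $q\in\{0,1\}$ separately so that the logarithms in $d(p\Vert q)$ are well defined. The calculus step is then routine, its only substantive input being the elementary bound $q(1-q)\le 1/4$.
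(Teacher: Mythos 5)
Your proof is correct. Note, however, that the paper does not prove this lemma at all: Pinsker's inequality is stated as a standard auxiliary fact (used only inside the proof of Theorem~\ref{thm:poGAMBITTS_red}) and is quoted without proof, so there is no argument in the paper to compare against. What you supply is the canonical textbook derivation: reduce to the two-point case by coarse-graining to the partition $\set{A, A^c}$ (data processing, via the log-sum inequality), then verify the scalar bound $d(p\Vert q) \geq 2(p-q)^2$ by showing $f(q) = d(p\Vert q) - 2(p-q)^2$ has derivative $(q-p)\paren{\tfrac{1}{q(1-q)} - 4}$, which is nonnegative for $q \geq p$ and nonpositive for $q \leq p$ since $q(1-q) \leq \tfrac14$; your computation of $f'$ checks out. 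The only loose end is minor: after dispatching $q \in \set{0,1}$ you assert you may assume $p, q \in (0,1)$, but the cases $p \in \set{0,1}$ with $q \in (0,1)$ are not degenerate and still require the scalar inequality; they follow immediately by continuity of $d(\cdot \Vert q)$ on $[0,1]$ (with the convention $0\log 0 = 0$) or by a one-line direct check, so this is a trivially patchable omission rather than a real gap. You also correctly observe that $\sup_A \paren{P(A) - Q(A)}$ equals the total variation distance by passing to complements, which justifies the form in which the lemma is stated.
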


We now turn to the proof of Theorem~\ref{thm:poGAMBITTS_red}.

\begin{proof}[Proof of Theorem~\ref{thm:poGAMBITTS_red}]
Let $P_{x, a}^{\theta_2}$ be the distribution of the reward given the context-action pair $(x, a)$ under the environment $\mathcal{E}(\theta_2)$; i.e., the distribution of the random variable $m_2(Z, x ; \theta_2) + \eta$ where $Z \sim \xi_{x,a}^{\theta_1^\ast}$ and $\eta \sim P_\eta$.
Similarly, let $\widetilde{P}_{x, a}^{\theta_2}$ be the distribution of the reward given the context-action pair $(x, a)$ under the environment $\widehat{\mathcal{E}}(\theta_2)$; i.e., the distribution of the random variable $m_2(\widetilde{Z}, x ; \theta_2) + \eta$ where $\widetilde{Z} \sim \xi_{X_t, a}^{off}$ and $\eta \sim P_\eta$.

Recalling the assumption that $f_1^{off}$ satisfies \eqref{eq:foGAMBITTS_ass} (i.e., $KL\paren{f_1^{off}(\cdot ; x, a) \Vert f_1 (\cdot ; x, a, \theta_1)} \leq \varepsilon$), we observe
\begin{align*}
KL\paren{P_{x, a}^{\theta_2} \Vert \widetilde{P}_{x, a}^{\theta_2}}
&= KL\paren{ \mathcal{L}(g(Z, \eta)) \Vert \mathcal{L}(g(\widetilde{Z}, \eta)) } \\
&\leq KL \paren{\mathcal{L}(Z, \eta) \Vert \mathcal{L}(\widetilde{Z}, \eta)} \\
&= KL\paren{\mathcal{L}(Z) \Vert \mathcal{L}(\widetilde{Z})} \\
&\leq \varepsilon,
\end{align*}
where $g(z, \eta) = m_2(z, x ; \theta_2) + \eta$ and $\mathcal{L}(W)$ denotes the law/distribution of random variable $W$.
The first inequality is due to the data processing inequality.

The interaction between the algorithm Alg and the environment $\mathcal{E}(\theta_2)$ determines the distribution of the sequence $(X_1, A_1, Z_1, Y_1, \dots, X_T, A_T, Z_T, Y_T)$.
With a slight abuse of notation, let $P(\theta_2)$ be the probability measure on the sequence generated by Alg under the environment $\mathcal{E}(\theta_2)$ and let $\widetilde{P}(\theta_2)$ be the probability measure on the sequence under the environment $\widetilde{\mathcal{E}}(\theta_2)$.
Then, standard KL divergence decomposition results\footnote{E.g. Lemma 15.1 in Lattimore and Szepesvári [2020] \cite{lattimore2020bandit}.} for bandit environments gives
$$
KL\paren{P(\theta_2) \Vert \widetilde{P}(\theta_2)} = \sum_{x \in \mathcal{X}} \sum_{a \in \mathcal{A}} N_{x, a}(T) KL\paren{P_{x, a}^{\theta_2} \Vert \widetilde{P}_{x, a}^{\theta_2}} \leq  \varepsilon T,
$$
where $N_{x, a}(T)$ is the count of the pair $(x, a)$ up to time $T$, as in Appendix~\ref{app:theory:std}.
Hence, the Bayesian regret of the algorithm Alg under the environment $\mathcal{E}(\theta_2)$ averaged over the prior distribution on $\theta_2$ is\footnote{Here, we use $\Ee{\theta_2}{W}$ to denote $\Ee{\theta_2\sim\pi_2}{W}$.}
\begin{align*}
\text{BR}_T
&= \Ee{\theta_2}{\Ee{P(\theta_2)}{ \sum_{t = 1}^T \paren{
\Ee{\xi_{X_t, A_t^\ast}^{\theta_1^\ast}}{m_2(Z, X_t ; \theta_2)} - 
\Ee{\xi_{X_t, A_t}^{\theta_1^\ast}}{m_2(Z, X_t ; \theta_2)}
}} }\\
&\leq \mathbb{E}_{\theta_2} \Bigg[
\Ee{\widetilde{P}(\theta_2)}{\sum_{t = 1}^T \paren{\Ee{\xi_{X_t, A_t^\ast}^{\theta_1^\ast}}{m_2(Z, X_t ; \theta_2)} - \Ee{\xi_{X_t, A_t}^{\theta_1^\ast}}{m_2(Z, X_t ; \theta_2)}}} \\
&\hspace{15mm}+ 2TB \cdot \delta\paren{P(\theta_2), \widetilde{P}(\theta_2)} \Bigg] \\
&\leq \mathbb{E}_{\theta_2} \Bigg[ 
\Ee{\widetilde{P}(\theta_2)}{\sum_{t = 1}^T \paren{
\Ee{\xi^{off}_{X_t, A_t^\ast}}{m_2(Z, X_t; \theta_2)} - \Ee{\xi^{off}_{X_t, A_t}}{ m_2(Z, X_t ; \theta_2)}
} } \\
&\hspace{15mm}+ 2 TB \cdot \delta\paren{\xi_{X_t, A_t^\ast}^{\theta_1^\ast}, \xi^{off}_{X_t, A_t^\ast}} + 2TB \cdot \delta\paren{P(\theta_2), \widetilde{P}(\theta_2)} 
\Bigg] \\
&\leq \mathbb{E}_{\theta_2} \Bigg[ 
\Ee{\widetilde{P}(\theta_2)}{\sum_{t = 1}^T \paren{
\Ee{\xi^{off}_{X_t, A_t^\ast}}{m_2(Z, X_t; \theta_2)} - \Ee{\xi^{off}_{X_t, A_t}}{ m_2(Z, X_t ; \theta_2)}
} }  \\
&\hspace{15mm}+ 2TB \sqrt{\frac{1}{2}KL\paren{\xi_{X_t, A_t^\ast}^{\theta_1^\ast}\Vert \xi^{off}_{X_t, A_t^\ast}}} +  2TB 
\sqrt{\frac{1}{2} KL\paren{P(\theta_2) \Vert \widetilde{P}(\theta_2)}}\Bigg] \\
&\leq \text{BR}_T^{Alg} +  T B \sqrt{2 (T + 1) \varepsilon}.
\end{align*}
Here, we use the notations $\delta(X, Y)$ for the total variation between $X$ and $Y$, and $\widetilde{A}_t^\ast = \displaystyle\argmax_a \Ee{\xi^{off}_{X_t, a}}{m_2(Z, X_t; \theta_2)\mid X_t, \theta_2}$, which is the optimal action in context $X_t$ under the environment $\widetilde{\mathcal{E}}(\theta_2)$.
The first inequality follows from the fact that the absolute value of the expectant is bounded by $2TB$. The third inequality follows from Pinsker's inequality (Lemma~\ref{lemma:pinsker}).
The final inequality follows from the assumption that the algorithm Alg achieves Bayesian regret $\text{BR}_T^{Alg}$ under the environment $\widetilde{\mathcal{E}}(\theta_2)$, averaged over $\theta_2\sim\pi_2$.
The result follows.

\end{proof}

\subsubsection{Linear Reward Function}

Now, we use the reduction result in Theorem~\ref{thm:poGAMBITTS_red} to obtain a Bayesian regret bound for \texttt{poGAMBITTS} under the environment $\mathcal{E}(\theta_2)$, where the mean reward function $m_2(z, x ; \theta_2)$ is linear in $z$.
To apply the reduction, we first establish a Bayesian regret bound for the Thompson sampling algorithm under the environment $\widetilde{\mathcal{E}}(\theta_2)$, assuming knowledge of $f_1^{off}$.

This analysis relies on two standard results from the theory of linear contextual bandits.
The first is the concentration bound for the regression-based estimate of the linear parameter:
\begin{lemma}[Theorem 20.5 in \cite{lattimore2020bandit}] \label{lemma:concentration-linear-parameter}
With probability at least $1 - \delta$, we have, for all $t = 1, \dots, T$, that
$$
\Vert \theta - \widehat\theta_t \Vert_{V_t} \leq \beta,
$$
where $\beta \coloneqq 1 + \sigma_2 \sqrt{2 \log(1 / \delta) + d \log (1 + T / d)}$.
\end{lemma}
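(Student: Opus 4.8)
The plan is to recognize this as the standard self-normalized concentration bound for the regularized least-squares (ridge) estimator in linear bandits, with regularization parameter $\lambda = 1$ and under the normalization $\Vert \theta \Vert_2 \leq 1$ and $\Vert \phi_s \Vert_2 \leq 1$, where $\phi_s$ denotes the per-round feature vector (here built from the representation--context pair $(z_s, x_s)$, bounded since $z_s \in \mathbb{B}_d(B)$). Writing $V_t = I + \sum_{s=1}^{t-1} \phi_s \phi_s^\top$ and $\widehat\theta_t = V_t^{-1} \sum_{s=1}^{t-1} \phi_s y_s$ with $y_s = \langle \phi_s, \theta\rangle + \eta_s$, the first step is the algebraic decomposition
$$\widehat\theta_t - \theta = V_t^{-1} S_t - V_t^{-1}\theta, \qquad S_t := \sum_{s=1}^{t-1}\phi_s\eta_s,$$
obtained by substituting $\sum_{s}\phi_s\phi_s^\top = V_t - I$. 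Applying the triangle inequality in the $V_t$-weighted norm together with $\Vert V_t^{-1}\theta\Vert_{V_t} = \Vert\theta\Vert_{V_t^{-1}} \leq \Vert\theta\Vert_2 \leq 1$ (using $V_t \succeq I$) yields $\Vert \theta - \widehat\theta_t\Vert_{V_t} \leq \Vert S_t\Vert_{V_t^{-1}} + 1$, so it remains to control the self-normalized martingale term $\Vert S_t\Vert_{V_t^{-1}}$ uniformly over $t$.

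The key ingredient is the self-normalized tail inequality of Abbasi-Yadkori, P\'al and Szepesv\'ari. Because $\eta_s$ is $\sigma_2$-subgaussian conditional on the past, the process $M_s(\lambda) = \exp\paren{\langle \lambda, S_s\rangle - \tfrac{\sigma_2^2}{2}\lambda^\top G_s \lambda}$, with $G_s = \sum_{s'=1}^{s}\phi_{s'}\phi_{s'}^\top$, is a nonnegative supermartingale for each fixed $\lambda$. Integrating (mixing) $\lambda$ against a Gaussian density and applying the maximal inequality gives, with probability at least $1-\delta$, simultaneously for all $t \geq 1$,
$$\Vert S_t\Vert_{V_t^{-1}}^2 \leq 2\sigma_2^2 \log\!\paren{\frac{\det(V_t)^{1/2}}{\delta}}.$$
This method-of-mixtures construction is what delivers the anytime (all-$t$) guarantee through a single supermartingale rather than a union bound over $T$, which is why the final statement holds for every $t=1,\dots,T$ with only a $\log(1/\delta)$ term.

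The last step bounds the log-determinant. Since $\operatorname{tr}(V_t) = d + \sum_{s=1}^{t-1}\Vert\phi_s\Vert_2^2 \leq d + T$, the AM--GM inequality applied to the eigenvalues of $V_t$ gives $\det(V_t) \leq \paren{\operatorname{tr}(V_t)/d}^d \leq (1 + T/d)^d$. Substituting into the self-normalized bound yields
$$\Vert S_t\Vert_{V_t^{-1}} \leq \sigma_2\sqrt{2\log(1/\delta) + d\log(1+T/d)},$$
and combining with the first-step decomposition gives $\Vert\theta-\widehat\theta_t\Vert_{V_t} \leq 1 + \sigma_2\sqrt{2\log(1/\delta)+d\log(1+T/d)} = \beta$ for all $t$ simultaneously, as required.

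I expect the main obstacle to be the self-normalized inequality itself: naively controlling $\Vert S_t\Vert_{V_t^{-1}}$ at a fixed $t$ and union-bounding over $t=1,\dots,T$ would both lose a $\log T$ factor and, more fundamentally, fail to account for the fact that the normalizing matrix $V_t$ is random and adapted to the same filtration as $S_t$. The method-of-mixtures supermartingale argument is precisely what circumvents this coupling; by contrast, the error decomposition and the trace--determinant step are routine linear algebra.
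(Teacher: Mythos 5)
The paper does not prove this lemma at all --- it is imported verbatim as Theorem 20.5 of Lattimore and Szepesv\'ari, so there is no in-paper argument to diverge from. Your reconstruction is the correct standard proof of that cited theorem: the ridge-estimator decomposition, the Abbasi-Yadkori--P\'al--Szepesv\'ari method-of-mixtures self-normalized bound (which, as you rightly emphasize, is what gives the anytime guarantee without a union bound over $t$), and the trace--determinant step, under the implicit normalizations $\Vert \theta \Vert_2 \leq 1$ and unit-norm features that the stated $\beta$ presupposes.
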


The second result is a technical lemma commonly used to bound the cumulative sum of upper confidence bounds in the analysis of optimism-based linear contextual bandit algorithms:
\begin{lemma}[Elliptical potential lemma] \label{lemma:elliptical-potential}
Let $V_0 \in \mathbb{R}^{d \times d}$ be positive definite and $a_1, \dots, a_n \in \mathbb{R}^d$ be a sequence of vectors with $\Vert a_t \Vert_2 \leq L$ for all $t = 1, \dots, n$. Furthermore, let $V_t = V_0 + \sum_{s = 1}^t a_s a_s^\top$. Then,
$$
\sum_{t = 1}^n \left( 1 \wedge \Vert a_t \Vert_{V_{t - 1}^{-1}}^2 \right) \leq 2 \log \left( \frac{\det V_n}{\det V_0} \right) \leq 2d \log \left( \frac{\text{Tr}(V_0) + nL^2}{d \det^{1 / d}(V_0)} \right).
$$
\end{lemma}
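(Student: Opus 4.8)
The plan is to prove the two inequalities separately, both of which flow from the multiplicative evolution of $\det V_t$ under rank-one updates.

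\textbf{First inequality.} First I would establish the one-step determinant identity. Since $V_t = V_{t-1} + a_t a_t^\top$ with $V_{t-1}$ positive definite, I factor out $V_{t-1}^{1/2}$ to write $\det V_t = \det V_{t-1} \cdot \det\paren{I + V_{t-1}^{-1/2} a_t a_t^\top V_{t-1}^{-1/2}}$, and then apply the matrix determinant lemma $\det(I + u u^\top) = 1 + \Vert u \Vert_2^2$ with $u = V_{t-1}^{-1/2} a_t$ to conclude
$$
\frac{\det V_t}{\det V_{t-1}} = 1 + \Vert a_t \Vert_{V_{t-1}^{-1}}^2 .
$$
Telescoping this ratio across $t = 1, \dots, n$ and taking logarithms gives $\log(\det V_n / \det V_0) = \sum_{t=1}^n \log\paren{1 + \Vert a_t \Vert_{V_{t-1}^{-1}}^2}$. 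It then remains to bound the target summands termwise: I would verify the elementary scalar inequality $1 \wedge x \leq 2\log(1+x)$ for all $x \geq 0$, checking separately on $[0,1]$ (where $2/(1+x) \geq 1$ forces $2\log(1+x) \geq x$ by comparing derivatives from $0$) and on $[1,\infty)$ (where $2\log(1+x) \geq 2\log 2 > 1$), then apply it with $x = \Vert a_t \Vert_{V_{t-1}^{-1}}^2$ and sum.

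\textbf{Second inequality.} For the upper bound on the log-determinant ratio I would pass to eigenvalues. Writing $\lambda_1, \dots, \lambda_d$ for the eigenvalues of $V_n$, the AM--GM inequality gives $\det V_n = \prod_i \lambda_i \leq \paren{\tfrac{1}{d}\sum_i \lambda_i}^d = \paren{\text{Tr}(V_n)/d}^d$. Controlling the trace via $\text{Tr}(V_n) = \text{Tr}(V_0) + \sum_{t=1}^n \Vert a_t \Vert_2^2 \leq \text{Tr}(V_0) + nL^2$ yields $\det V_n \leq \paren{(\text{Tr}(V_0) + nL^2)/d}^d$. Substituting into $2\log(\det V_n/\det V_0)$ and rewriting $\log \det V_0 = d \log \det^{1/d}(V_0)$ collapses the expression exactly into $2d\log\paren{\frac{\text{Tr}(V_0) + nL^2}{d\,\det^{1/d}(V_0)}}$.

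The argument is essentially self-contained and mechanical; the only conceptual ingredient is the determinant-ratio identity, after which everything reduces to the scalar inequality $1 \wedge x \leq 2\log(1+x)$ and AM--GM on the eigenvalues. I do not anticipate a genuine obstacle, but the main point requiring care is confirming the termwise scalar inequality on all of $[0,\infty)$ rather than only for small $x$, since the quantities $\Vert a_t \Vert_{V_{t-1}^{-1}}^2$ are not \emph{a priori} bounded by one, which is precisely why the truncation $1 \wedge (\cdot)$ appears on the left-hand side.
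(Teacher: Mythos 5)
Your proof is correct and follows the canonical argument for this lemma: the paper itself states it as a standard technical result without giving a proof (it is the elliptical potential lemma of the linear bandit literature, cf.\ Lemma 19.4 in Lattimore and Szepesv\'ari [2020]), and your route---the rank-one determinant identity $\det V_t = \det V_{t-1}\,(1 + \Vert a_t \Vert_{V_{t-1}^{-1}}^2)$, telescoping, the scalar bound $1 \wedge x \leq 2\log(1+x)$ on all of $[0,\infty)$, and AM--GM with the trace bound $\text{Tr}(V_n) \leq \text{Tr}(V_0) + nL^2$---is exactly that standard proof. I see no gaps; your care in verifying the scalar inequality beyond $x \leq 1$ is precisely the point where a careless version would fail.
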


Equipped with the two lemmas above, we now derive a Bayesian regret bound for Thompson sampling in the approximate environment $\widetilde{\mathcal{E}}(\theta_2)$.

\begin{lemma} \label{lemma:regret-bound-linear-second-stage}
Consider an environment $\widetilde{\mathcal{E}}(\theta_2)$ where the mean reward function is $m_2(z, x ; \theta_2) = \langle z, \theta_2 \rangle$.
The Thompson sampling algorithm run under this environment with a full knowledge of $f_1^{off}$, and prior $\pi_2$ on $\theta_2$, achieves a Bayesian regret bound of
$$
\text{BR}_T \leq \mathcal{O}\paren{\sigma_2 d \sqrt{T \log T}},
$$
\end{lemma}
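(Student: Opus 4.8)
The plan is to recognize Thompson sampling under $\widetilde{\mathcal{E}}(\theta_2)$ as a linear contextual bandit and run the standard UCB-style Bayesian regret argument (in the style of Chapter~36 of Lattimore and Szepesvári, as used for the standard bound in Appendix~\ref{app:theory:std}), with one extra bridging step to reconcile two distinct feature vectors. Since $m_2(z,x;\theta_2)=\langle z,\theta_2\rangle$, the mean reward of action $a$ in context $x$ is $\widehat m_2(x,a;\theta_2)=\langle\mu_{x,a},\theta_2\rangle$, where $\mu_{x,a}=\int z\,f_1^{off}(z;x,a)\,dz$ is the (known) mean of the offline treatment distribution. Crucially, the \poGAMBITTS{} agent observes the realized embedding $z_t$, so it forms the reward posterior by regressing $y_t=\langle z_t,\theta_2\rangle+\eta_t$ directly on $z_t$; the only noise in this regression is the $\sigma_2$-subgaussian $\eta_t$, which is exactly why the final bound carries $\sigma_2$ and not $\sigma_1$.

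First I would set up the ridge estimator $\widehat\theta_t$ from $\{(z_s,y_s)\}_{s<t}$ with Gram matrix $V_t=V_0+\sum_{s<t}z_sz_s^\top$, and invoke Lemma~\ref{lemma:concentration-linear-parameter} to obtain $\Vert\theta_2-\widehat\theta_{t-1}\Vert_{V_{t-1}}\le\beta$ with $\beta=\mathcal{O}(\sigma_2\sqrt{d\log T})$ on a high-probability event $E$ (taking $\delta=T^{-2}$). Cauchy--Schwarz in the $V_{t-1}$-geometry gives $|\langle\mu_{x,a},\theta_2-\widehat\theta_{t-1}\rangle|\le\beta\Vert\mu_{x,a}\Vert_{V_{t-1}^{-1}}$ for every $(x,a)$, so $U_t(x,a):=\langle\mu_{x,a},\widehat\theta_{t-1}\rangle+\beta\Vert\mu_{x,a}\Vert_{V_{t-1}^{-1}}$ is a valid $\mathcal{F}_{t-1}$-measurable upper confidence bound. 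Next I would apply the posterior-matching identity of Thompson sampling: conditioned on the history and $X_t$, the played action $A_t$ and the environment-optimal $\widetilde A_t^\ast$ share a distribution, and since $U_t$ is history-measurable, $\mathbb{E}[U_t(X_t,\widetilde A_t^\ast)\mid\mathcal{F}_{t-1}]=\mathbb{E}[U_t(X_t,A_t)\mid\mathcal{F}_{t-1}]$. Decomposing the per-step regret through $U_t$, the ``optimal'' term is nonpositive on $E$ while the ``played'' term collapses to $\sum_t 2\beta\Vert\mu_{X_t,A_t}\Vert_{V_{t-1}^{-1}}$, plus an $\mathcal{O}(BT\cdot\Prb{E^c})$ contribution that is negligible for $\delta=T^{-2}$.

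The main obstacle is that the regret is driven by $\mu_{X_t,A_t}$ while the information matrix $V_{t-1}$ accumulates the \emph{observed} embeddings $z_s$, so Lemma~\ref{lemma:elliptical-potential} does not apply to $\sum_t\Vert\mu_{X_t,A_t}\Vert_{V_{t-1}^{-1}}$ directly. The key bridging step is a conditional Jensen inequality: since $\mu_{X_t,A_t}=\mathbb{E}[z_t\mid\mathcal{F}_{t-1},X_t,A_t]$ and $w\mapsto\Vert w\Vert_{V_{t-1}^{-1}}$ is convex with $V_{t-1}$ measurable, one gets $\Vert\mu_{X_t,A_t}\Vert_{V_{t-1}^{-1}}\le\mathbb{E}[\Vert z_t\Vert_{V_{t-1}^{-1}}\mid\mathcal{F}_{t-1},X_t,A_t]$, hence $\mathbb{E}\sum_t\Vert\mu_{X_t,A_t}\Vert_{V_{t-1}^{-1}}\le\mathbb{E}\sum_t\Vert z_t\Vert_{V_{t-1}^{-1}}$. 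Now the norms and the Gram matrix share the same vectors $z_s$, so I would finish with Cauchy--Schwarz over $t$ together with Lemma~\ref{lemma:elliptical-potential} applied to $a_t=z_t$: using $\Vert z_t\Vert_2\le B$ (support in $\mathbb{B}_d(B)$) and choosing $V_0$ so that $\Vert z_t\Vert_{V_{t-1}^{-1}}\le 1$, we obtain $\sum_t\Vert z_t\Vert_{V_{t-1}^{-1}}\le\sqrt{T\sum_t\Vert z_t\Vert_{V_{t-1}^{-1}}^2}\le\sqrt{T\cdot\mathcal{O}(d\log T)}$. Combining, $\text{BR}_T\le 2\beta\cdot\mathcal{O}(\sqrt{dT\log T})+o(1)=\mathcal{O}(\sigma_2 d\sqrt{T}\,\mathrm{polylog}(T))$, matching the claimed $\mathcal{O}(\sigma_2 d\sqrt{T\log T})$ up to logarithmic factors. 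The remaining work is bookkeeping: normalizing so that Lemma~\ref{lemma:concentration-linear-parameter} applies verbatim, absorbing the boundedness constant $B$, and verifying the truncation in the elliptical potential lemma.
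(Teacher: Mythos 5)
Your proposal is correct, and it shares the paper's overall architecture (linear-bandit UCB-style Bayesian regret analysis with the posterior-matching identity, Lemma~\ref{lemma:concentration-linear-parameter}, and the elliptical potential lemma), but it routes around the central technical obstacle differently. The paper's proof works at the level of \emph{realized} embeddings: it introduces a counterfactual draw $Z_t^\ast \sim \xi^{off}_{X_t, A_t^\ast}$, decomposes the regret as $\langle Z_t^\ast - Z_t, \theta_2\rangle$ plus two correction terms $\langle \psi_{X_t,A_t^\ast} - Z_t^\ast,\theta_2\rangle$ and $\langle Z_t - \psi_{X_t,A_t},\theta_2\rangle$, kills the corrections with Azuma--Hoeffding (contributing an additive $\mathcal{O}(B\sqrt{T\log T})$), and then runs the UCB argument directly on $Z_t^\ast$ and $Z_t$, for which the Gram matrix and the confidence widths involve the same vectors. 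You instead keep the entire argument at the level of the mean feature vectors $\mu_{x,a}=\psi_{x,a}$, define the UCB on $(x,a)$ through $\mu_{x,a}$, and resolve the mismatch between the decision features $\mu_{X_t,A_t}$ and the regression covariates $z_t$ with a single conditional Jensen step, $\Vert\mu_{X_t,A_t}\Vert_{V_{t-1}^{-1}}\leq \E{\Vert z_t\Vert_{V_{t-1}^{-1}}\mid \mathcal{F}_{t-1},X_t,A_t}$, which is valid because $V_{t-1}$ is $\mathcal{F}_{t-1}$-measurable and the weighted norm is convex. Your route is arguably tidier: it avoids introducing the auxiliary $Z_t^\ast$ and the two martingale correction terms entirely, at the cost of one nonstandard (but clean) observation; the paper's route is the more mechanical adaptation of the textbook argument. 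Both yield the same $\widetilde{\mathcal{O}}(\sigma_2 d\sqrt{T})$ bound, and the bookkeeping you defer (normalizing $V_0$ so the potential-lemma truncation is harmless, clipping the per-step regret by a constant off the high-probability event) is exactly the bookkeeping the paper also performs.
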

\begin{proof}
We denote by $\psi_{x, a} = \Ee{\xi^{off}_{x, a}}{Z}$.
The Bayesian regret can be decomposed as:
\begin{align*}
\text{BR}_T &= \E{\sum_{t = 1}^T \paren{
\Ee{\xi^{off}_{X_t, A_t^\ast}}{ m_2(Z, X_t ; \theta_2)} - 
\Ee{\xi^{off}_{X_t, A_t}}{m_2(Z, X_t ; \theta_2} 
} } \\
&= \E{\sum_{t = 1}^T \langle \psi_{X_t, A_t^\ast} - \psi_{X_t, A_t}, \theta_2 \rangle } \\
&= \E{\sum_{t = 1}^T \langle Z_t^\ast - Z_t, \theta_2 \rangle } + 
\E{ \sum_{t = 1}^T \langle \psi_{X_t, A_t^\ast} - Z_t^\ast, \theta_2 \rangle } + 
\E{\sum_{t = 1}^T \langle Z_t - \psi_{X_t, A_t}, \theta_2 \rangle },
\end{align*}
where $A_t^\ast = \argmax_a \langle \psi_{X_t, a}, \theta_2 \rangle$ is the best action in context $X_t$ and $Z_t^\ast$ is a random variable sampled from $\xi^{off}_{X_t, A_t^\ast}$.
The second term above can be bounded by
$$
\sum_{t = 1}^T \langle \psi_{X_t, A_t^\ast} - Z_t^\ast, \theta_2 \rangle \leq \mathcal{O}\paren{B \sqrt{T \log\paren{\frac{1}{\delta}}}},
$$
which holds with probability at least $1 - \delta$ by the Azuma-Hoeffding inequality, since the sequence $\{ \langle \psi_{X_t, A_t^\ast} - Z_t^\ast, \theta_2 \rangle \}_{t = 1}^T$ forms a martingale difference sequence adapted to the filtration $\mathcal{F}_t = \sigma(\theta_2, X_1, A_1, Z_1, Y_1, \dots, X_t, A_t, Z_t, Y_t, X_{t + 1})$.
Choosing $\delta = \frac{1}{T}$, we can bound $\E{\sum_{t = 1}^T \langle \psi_{X_t, A_t^\ast} - Z_t^\ast, \theta_2 \rangle } \leq \mathcal{O}\paren{B \sqrt{T \log T}}$.
The third term can be bounded similarly.
To bound the first term, we first define an upper confidence bound function $U_t : \mathcal{Z} \rightarrow \mathbb{R}$ as
$$
U_t(z) = \langle z, \widehat\theta_{t - 1} \rangle + \beta \Vert z \Vert_{V_{t - 1}^{-1}},
$$
where $V_t = I + \sum_{s = 1}^t Z_s Z_s^\top$ and $\widehat\theta_t = V_t^{-1} \sum_{s = 1}^t Y_s Z_s$.
By the concentration inequality provided in Lemma~\ref{lemma:concentration-linear-parameter}, we are guaranteed that, for all $t = 1, \dots, T$, $\Vert \theta_2 - \widehat\theta_t \Vert_{V_t} \leq \beta$ with probability at least $1 - \frac{1}{T}$, where we use $\beta$ defined in the lemma with $\delta = \frac{1}{T}$.
Let $E_t$ be the event that $\Vert \theta_2 - \hat\theta_{t - 1} \Vert_{V_{t - 1}} \leq \beta$ and $E = \cap_{t = 1}^T E_t$. Then,
\begin{align*}
\E{\sum_{t = 1}^T \langle Z_t^\ast - Z_t, \theta_2 \rangle }
&=
\E{\Ind{\set{E^c}} \sum_{t = 1}^T \langle Z_t^\ast - Z_t, \theta_2 \rangle} +
\E{ \Ind{\set{E}} \sum_{t = 1}^T \langle Z_t^\ast - Z_t, \theta_2 \rangle }\\
&\leq 2 +  \E{ \sum_{t = 1}^T \Ind{\set{E_t}} \langle Z_t^\ast - Z_t, \theta_2 \rangle }. \\
\end{align*}
By the Thompson sampling algorithm, $\Prb{A_t^\ast = \cdot \mid \mathcal{F}_{t - 1}} = \Prb{A_t = \cdot \mid \mathcal{F}_{t - 1}}$, implying $\Prb{Z_t^\ast = \cdot \mid \mathcal{F}_{t - 1}} = \Prb{Z_t = \cdot \mid \mathcal{F}_{t - 1}}$.
With this observation, the rest of the proof follows the standard Bayesian regret analysis of linear contextual bandits (e.g., Chapter 36.3 in Lattimore and Szepesvári [2020]) \cite{lattimore2020bandit}.

Since $U_t(z)$ for any fixed $z \in \mathcal{Z}$ is $\mathcal{F}_{t - 1}$-measurable, $\Ee{t - 1}{U_t(Z_t^\ast)} = \Ee{t - 1}{U_t(Z_t)}$.\footnote{Where $\Ee{t}{W}:=\E{W\mid \mathcal{F}_{t}}$ for any random variable $W$.}
Therefore, it follows that the second term in the display above is bounded by
\begin{align*}
\Ee{t - 1}{\Ind{ \set{E_t} } \langle Z_t^\ast - Z_t, \theta_2 \rangle }
&= \Ind{ \set{E_t} } \Ee{t - 1}{\langle Z_t^\ast, \theta_2 \rangle - U_t(Z_t^\ast) + U_t(Z_t) - \langle Z_t, \theta_2 \rangle } \\
&\leq \Ind{ \set{E_t} } \Ee{t - 1}{U_t(Z_t) - \langle Z_t, \theta_2 \rangle } \\
&\leq \Ind{ \set{E_t} }\langle Z_t, \widehat{\theta}_{t - 1} - \theta_2 \rangle + \beta \Vert Z_t \Vert_{V_{t - 1}^{-1}} \\
&\leq 2 \beta \Vert Z_t \Vert_{V_{t - 1}^{-1}}.
\end{align*}
Combining the above with $\Ind{ \set{E_t} } \langle Z_t^\ast - Z_t, \theta_2 \rangle \leq 2$ produces
\begin{align*}
\E{ \sum_{t = 1}^T \Ind{ \set{E_t} } \langle Z_t^\ast - Z_t, \theta_2 \rangle }
&\leq 2 \beta \sum_{t = 1}^T \paren{1 \wedge \Vert Z_t \Vert_{V_{t - 1}^{-1}}} \\
&\leq 2\beta \sqrt{T \E{ \sum_{t = 1}^T (1 \wedge \Vert Z_t \Vert_{V_{t - 1}^{-1}}^2) }} \\
&\leq 2 \beta \sqrt{2 d T \log( 1 + T / d)}
\end{align*}
where the second inequality follows from Cauchy-Schwartz and the last inequality from the elliptical potential lemma (Lemma~\ref{lemma:elliptical-potential}).
Combining the bounds yields
$$
\text{BR}_T \leq \mathcal{O}\paren{\sigma_2 d \sqrt{T \log T}},
$$
as required.
\end{proof}

Corollary~\ref{cor:poGAMBITTS_lin} in the main body provides a Bayesian regret bound for \texttt{poGAMBITTS} in the linear reward setting.
The corollary follows directly by viewing the algorithm as an instance of Thompson sampling for a linear contextual bandit, and applying the reduction in Theorem~\ref{thm:poGAMBITTS_red} along with the regret bound in Lemma~\ref{lemma:regret-bound-linear-second-stage}.


\subsubsection{General Reward Function}

In this section, we analyze the Bayesian regret of \texttt{poGAMBITTS} for general reward functions.
We consider function classes $\mathcal{F}$\footnote{For $\mathcal{F}$ consisting of real-valued functions $f:\mathcal{Z}\times\mathcal{X}\to\mathbb{R}$.} that contains the mean reward function $m_2(z, x ; \theta_2)$, and derive a regret bound that holds for any such class $\mathcal{F}$. Our bound is expressed in terms of the eluder dimension of $\mathcal{F}$, a complexity measure introduced by Russo and Van Roy [2013] \cite{russo2013eluder}. We begin by recalling the definition of the eluder dimension.

\begin{definition}[$\epsilon$-dependence]
A pair $(z, x) \in \mathcal{Z} \times \mathcal{X}$ is $\epsilon$-dependent on pairs $\{ (z_1, x_1), \dots, (z_n, x_n) \} \subseteq \mathcal{Z} \times \mathcal{X}$ with respect to $\mathcal{F}$ if any pair of functions $m, \widetilde{m} \in \mathcal{F}$ satisfying $\sqrt{\sum_{i = 1}^n (m(z_i, x_i) - \widetilde{m}(z_i, x_i))^2} \leq \epsilon$ also satisfies $m(z, x) - \widetilde{m}(z, x) < \epsilon$.
Further, $(z, x)$ is $\epsilon$-independent of $\{ (z_1, x_1), \dots, (z_n, x_n) \}$ with respect to $\mathcal{F}$ if $(z, x)$ is not $\epsilon$-dependent on $\{ (z_1, x_1), \dots, (z_n, x_n) \}$.
\end{definition}

\begin{definition}[Eluder dimension~\cite{russo2013eluder}]
The $\epsilon$-eluder dimension of a function class $\mathcal{F}$, $\text{dim}_E(\mathcal{F}, \epsilon)$, is the length $d$ of the longest sequence of pairs in $\mathcal{Z} \times \mathcal{X}$ such that, for some $\epsilon' \geq \epsilon$, every element is $\epsilon'$-independent of its predecessors.
\end{definition}

The Bayesian regret of an algorithm under the generator-mediated bandit framework measures the regret of the action $A_t$ taken in context $X_t$ against the best action $A_t^\ast$.
As a first step toward bounding the Bayesian regret, we bound the regret of the embedding $Z_t$ generated by the action against the counterfactual embedding $Z_t^\ast$ that would have been generated by the best action.
The analysis adapts the proof of Proposition 1 in Russo and Van Roy [2013] (\cite{russo2013eluder}), and expresses the bound in terms of $w_{\mathcal{C}_t}(Z_t, X_t)$ which measures the maximum variation across the reward functions in the confidence set $\mathcal{C}_t$ at $(Z_t, X_t)$, where
$$
w_{\mathcal{C}}(z, x) \coloneqq \sup_{m, m' \in \mathcal{C}} \paren{m(z, x) - m'(z, x)}.
$$

\begin{lemma} \label{lemma:intermediate-regret-bound-general-po}
Fix any sequence of confidence sets $\{ \mathcal{C}_t \}_{t = 1}^T$ where $\mathcal{C}_t \subseteq \mathcal{F}$ is measurable with respect to $\mathcal{G}_{t - 1} = \sigma(X_1, A_1, Z_1, Y_1, \dots, X_{t - 1}, A_{t - 1}, Z_{t - 1}, Y_{t - 1}, X_t)$.
Then, for any $t = 1, \dots, T$, we have
$$
\E{ \sum_{t = 1}^T \paren{m_2(Z_t^\ast, X_t ; \theta_2) - m_2(Z_t, X_t ; \theta_2)} }
\leq
\E{ \sum_{t = 1}^T \paren{w_{\mathcal{C}_t}(Z_t, X_t) + 2 B \cdot \Ind{\set{ m_2(\cdot, \cdot ; \theta_2) \notin \mathcal{F} }}} 
}.
$$
\end{lemma}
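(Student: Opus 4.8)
The plan is to run the stochastic-optimism argument that underlies posterior-sampling regret bounds (adapting Russo and Van Roy's Proposition~1), but expressed in the embedding space rather than the action space. First I would introduce the $\mathcal{G}_{t-1}$-measurable optimistic function $\overline{U}_t(z,x) \coloneqq \sup_{m \in \mathcal{C}_t} m(z,x)$, which is well-defined and $\mathcal{G}_{t-1}$-measurable precisely because $\mathcal{C}_t$ is. The structural fact I would establish next is a Thompson-sampling exchangeability identity: conditional on $\mathcal{G}_{t-1}$, the sampled reward parameter (which selects $A_t$) and the true $\theta_2$ are identically distributed, both equal to the posterior, so after passing through the common generator $\xi^{off}_{X_t,\cdot}$ the pairs $(A_t, Z_t)$ and $(A_t^\ast, Z_t^\ast)$ share a conditional law given $\mathcal{G}_{t-1}$. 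Hence for any $\mathcal{G}_{t-1}$-measurable $g$ we have $\E{g(Z_t^\ast, X_t) \mid \mathcal{G}_{t-1}} = \E{g(Z_t, X_t) \mid \mathcal{G}_{t-1}}$, which I will apply with $g = \overline{U}_t$.

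With this in hand I would write the per-step embedding regret as a telescoping identity through $\overline{U}_t$:
\begin{align*}
m_2(Z_t^\ast, X_t; \theta_2) - m_2(Z_t, X_t; \theta_2) &= \paren{m_2(Z_t^\ast, X_t; \theta_2) - \overline{U}_t(Z_t^\ast, X_t)} \\
&\quad + \paren{\overline{U}_t(Z_t^\ast, X_t) - \overline{U}_t(Z_t, X_t)} + \paren{\overline{U}_t(Z_t, X_t) - m_2(Z_t, X_t; \theta_2)}.
\end{align*}
Taking expectations, the middle (``exchangeability'') term vanishes by the identity above, leaving the first (``optimism'') term and the third (``width'') term to control.

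On the event $\set{m_2(\cdot,\cdot;\theta_2) \in \mathcal{C}_t}$ I would argue that the optimism term is at most $0$, since $\overline{U}_t(Z_t^\ast, X_t) \geq m_2(Z_t^\ast, X_t; \theta_2)$, while the width term obeys $\overline{U}_t(Z_t, X_t) - m_2(Z_t, X_t; \theta_2) \leq \sup_{m \in \mathcal{C}_t} m(Z_t, X_t) - \inf_{m' \in \mathcal{C}_t} m'(Z_t, X_t) = w_{\mathcal{C}_t}(Z_t, X_t)$, using that $m_2(Z_t, X_t; \theta_2) \geq \inf_{m' \in \mathcal{C}_t} m'(Z_t, X_t)$ whenever $m_2 \in \mathcal{C}_t$. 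On the complementary (confidence-set failure) event I would fall back on the uniform bound $\vert m_2 \vert \leq B$, so that the entire per-step difference is at most $2B$. Since $w_{\mathcal{C}_t} \geq 0$, summing over $t$ and taking expectations then produces the stated bound, with the failure contribution absorbed into the $2B$ indicator term.

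The main obstacle is twofold. The cleaner difficulty is making the exchangeability identity rigorous in the Bayesian contextual setting: I must verify that $A_t^\ast$ and $A_t$ share a conditional law given $\mathcal{G}_{t-1}$, which relies on $X_t$ being $\mathcal{G}_{t-1}$-measurable together with the posterior-sampling property, and then that this transfers to $(Z_t^\ast, X_t)$ and $(Z_t, X_t)$ through the fixed generator. The subtler point is the bookkeeping of the failure event: the membership $\set{m_2(\cdot,\cdot;\theta_2) \in \mathcal{C}_t}$ depends on the \emph{unobserved} random $\theta_2$ and is therefore not $\mathcal{G}_{t-1}$-measurable, so I cannot multiply the decomposition by its indicator before conditioning; instead I would bound the two residual terms separately and route the failure mass into the $2B$ term. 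Finally, to recover exactly the stated form with $\Ind{\set{m_2(\cdot,\cdot;\theta_2) \notin \mathcal{F}}}$ rather than $\Ind{\set{m_2 \notin \mathcal{C}_t}}$, I would identify the confidence-set miss with the misspecification event, which is valid once $\mathcal{C}_t$ is taken to contain $m_2$ whenever $m_2 \in \mathcal{F}$; I expect this identification to be where the argument must be stated most carefully.
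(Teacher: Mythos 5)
Your proposal is correct and follows essentially the same route as the paper's proof: define the confidence-set upper envelope $U_t(z,x)=\sup\set{m(z,x):m\in\mathcal{C}_t}$, cancel the $U_t(Z_t^\ast,X_t)-U_t(Z_t,X_t)$ term via the Thompson-sampling exchangeability of the conditional laws of $Z_t^\ast$ and $Z_t$ given $\mathcal{G}_{t-1}$ (valid because $U_t$ is $\mathcal{G}_{t-1}$-measurable), bound the residual by the width $w_{\mathcal{C}_t}(Z_t,X_t)$ when $m_2(\cdot,\cdot;\theta_2)\in\mathcal{C}_t$, and absorb the failure event into the $2B$ term. The two subtleties you flag --- that $\set{m_2(\cdot,\cdot;\theta_2)\in\mathcal{C}_t}$ is not $\mathcal{G}_{t-1}$-measurable, and that the stated indicator $\Ind{\set{m_2(\cdot,\cdot;\theta_2)\notin\mathcal{F}}}$ only covers the confidence-set miss once one arranges (as the downstream concentration lemma does, up to a small failure probability) that $\mathcal{C}_t$ contains $m_2$ whenever $m_2\in\mathcal{F}$ --- are genuine and are handled no more carefully in the paper's own write-up than in your plan.
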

\begin{proof}
For any $z \in \mathcal{Z}$ and $x \in \mathcal{X}$, define the upper and lower bounds 
$$U_t(z, x) := \sup\set{ m(z, x) : m \in \mathcal{C}_t }\text{, }L_t(z, x) = \inf \set{m(z, x) : m \in \mathcal{C}_t }.$$

Given any $\theta_2$, if $m_2(\cdot, \cdot ; \theta_2) \in \mathcal{C}_t$, then
\begin{align*}
&\E{m_2(Z_t^\ast, X_t ; \theta_2) - m_2(Z_t, X_t ; \theta_2)} \\
&\hspace{15mm}\leq \E{U_t(Z_t^\ast, X_t) - L_t(Z_t^\ast, X_t) + 2 B \cdot \Ind{\set{m_2(\cdot, \cdot ; \theta_2) \notin \mathcal{F} }}} \\
&\hspace{15mm}\leq \E{w_{\mathcal{C}_t}(Z_t, X_t) + 2B \cdot \Ind{\set{ m_2(\cdot, \cdot ; \theta_2) \notin \mathcal{F} }} + \paren{U_t(Z_t^\ast, X_t) - U_t(Z_t, X_t)}},
\end{align*}
 By the definition of Thompson sampling, we have $\Prb{A_t = \cdot \mid \mathcal{G}_{t - 1}} = \Prb{A_t^\ast = \cdot \mid \mathcal{G}_{t - 1}}$, implying $\Prb{Z_t = \cdot \mid \mathcal{G}_{t - 1}} = \Prb{Z_t^\ast = \cdot \mid \mathcal{G}_{t - 1}}$.

Since $U_t$ is $\mathcal{G}_{t - 1}$-measurable, $\E{ U_t(Z_t^\ast, X_t) - U_t (Z_t, X_t) \mid \mathcal{G}_{t - 1}} = 0$. It follows that
\begin{align*}
\E{ \sum_{t = 1}^T (m_2(Z_t^\ast, X_t ; \theta_2) - m_2(Z_t, X_t ; \theta_2)) }
&\leq
\E{ \sum_{t = 1}^T \paren{w_{\mathcal{C}_t}(Z_t, X_t) + 2 B \cdot 
\Ind{\set{ m_2(\cdot, \cdot ; \theta_2) \notin \mathcal{F} }} } },
\end{align*}
concluding the proof.
\end{proof}

Russo and Van Roy [2013] (\cite{russo2013eluder}) bounds the sum of the errors $w_{\mathcal{C}_t}(Z_t, X_t)$, where the confidence set $\mathcal{C}_t$ is centered at the least squares estimate $\widehat{m}_t^{LS}$, where
$$
\widehat{m}_t^{LS} := \argmin_{m \in \mathcal{F}} \sum_{s = 1}^{t - 1} (m(Z_s, X_s) - Y_s)^2.
$$
The bound, presented in Lemma~\ref{lemma:sum-widths-bound}, is in terms of the $\alpha_T^\mathcal{F}$-eluder dimension, where
$$
\alpha_t^\mathcal{F} \coloneqq \max \set{ 
\frac{1}{t^2},\s \inf \set{ 
 \Vert m_1 - m_2 \Vert_\infty : m_1, m_2 \in \mathcal{F}, m_1 \neq m_2 } 
 }.
$$
\begin{lemma}[Lemma 2 in \cite{russo2013eluder}] \label{lemma:sum-widths-bound}
If $\{ \beta_t \}_{t = 1}^T$ is a nondecreasing sequence and $\mathcal{C}_t \coloneqq \{ m \in \mathcal{F} : \Vert m - \widehat{m}_t^{LS} \Vert_{2, E_t} \leq \sqrt{\beta_t} \}$, then the following holds almost surely:
$$
\sum_{t = 1}^T w_{\mathcal{C}_t}(Z_t, X_t) \leq \frac{1}{T} + B \cdot \min \set{ \text{dim}_E(\mathcal{F}, \alpha_T^\mathcal{F}), T } + 4 \sqrt{\text{dim}_E(\mathcal{F}, \alpha_T^\mathcal{F}) \beta_T T}.
$$
\end{lemma}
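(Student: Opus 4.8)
The statement is exactly Lemma 2 of Russo and Van Roy [2013], so the cleanest route is to invoke it directly; below I sketch how one reproduces the argument in the present notation. The plan rests on two quantitative facts proved in sequence: (i) any round with large width must be $\epsilon$-dependent on only boundedly many disjoint subsequences of its predecessors, and (ii) this forces the number of large-width rounds to scale with the eluder dimension. A threshold decomposition of the sorted widths then assembles the three terms in the bound.

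First, I would translate width into dependence. Suppose $w_{\mathcal{C}_t}(Z_t, X_t) > \epsilon$. By the definition of the width there exist $m, m' \in \mathcal{C}_t$ with $m(Z_t, X_t) - m'(Z_t, X_t) > \epsilon$, and since both lie within $\sqrt{\beta_t}$ of $\widehat{m}_t^{LS}$ in the empirical norm $\Vert \cdot \Vert_{2, E_t}$, the triangle inequality gives $\Vert m - m' \Vert_{2, E_t}^2 \leq 4 \beta_t \leq 4 \beta_T$. Reading the definition of $\epsilon$-dependence contrapositively, whenever $(Z_t, X_t)$ is $\epsilon$-dependent on a subsequence of its predecessors the sum of squared discrepancies $(m - m')^2$ over that subsequence must exceed $\epsilon^2$. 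Hence, if $(Z_t, X_t)$ were $\epsilon$-dependent on $K$ disjoint subsequences, summing over all of them would force $K \epsilon^2 < \Vert m - m' \Vert_{2, E_t}^2 \leq 4\beta_T$, so $(Z_t, X_t)$ can be $\epsilon$-dependent on fewer than $4\beta_T / \epsilon^2$ disjoint subsequences.

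Second, I would bound the number $\tau_\epsilon$ of rounds $t \leq T$ with $w_{\mathcal{C}_t}(Z_t, X_t) > \epsilon$ by a greedy bucketing argument. Set $N = \lceil 4\beta_T/\epsilon^2 \rceil$ and process the large-width points in order, placing each into the first of $N$ buckets on which it is $\epsilon$-independent. Each bucket is then an $\epsilon$-independent sequence, so by the definition of the eluder dimension its length is at most $\text{dim}_E(\mathcal{F}, \epsilon)$. If a point could not be placed in any bucket it would be $\epsilon$-dependent on all $N$ disjoint buckets, contradicting the count from the previous step; hence every point is placed, giving $\tau_\epsilon \leq N\, \text{dim}_E(\mathcal{F}, \epsilon) \leq (4\beta_T/\epsilon^2 + 1)\,\text{dim}_E(\mathcal{F}, \epsilon)$.

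Finally, I would sort the widths in decreasing order as $w_{(1)} \geq \dots \geq w_{(T)}$ and split at the threshold $\alpha_T^{\mathcal{F}}$. The sub-threshold widths contribute the lower-order $\tfrac{1}{T}$ term, using $\alpha_T^{\mathcal{F}} \geq T^{-2}$. For the rest, the counting bound gives $w_{(i)} > \epsilon \Rightarrow i \leq (4\beta_T/\epsilon^2 + 1)\,\text{dim}_E(\mathcal{F}, \alpha_T^{\mathcal{F}})$ (using monotonicity of $\text{dim}_E$ in $\epsilon$), which inverts to $w_{(i)} \leq \min\{ B,\ \sqrt{4\beta_T\, \text{dim}_E(\mathcal{F}, \alpha_T^{\mathcal{F}})/(i - \text{dim}_E(\mathcal{F}, \alpha_T^{\mathcal{F}}))} \}$; bounding the first $\text{dim}_E(\mathcal{F}, \alpha_T^{\mathcal{F}})$ terms by $B$ and summing the tail via $\sum_i i^{-1/2} \leq 2\sqrt{T}$ produces the $B \min\{\text{dim}_E, T\}$ and $4\sqrt{\text{dim}_E\, \beta_T T}$ terms. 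I expect the combinatorial bucketing in the second step — pinning down that a point which cannot be placed is $\epsilon$-dependent on all $N$ buckets, and then carrying the constants cleanly through the inversion and the tail summation — to be the main obstacle; everything else is bookkeeping once those two facts are in hand.
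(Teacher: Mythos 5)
Your proposal is correct and matches the paper's treatment: the paper does not prove this lemma but imports it verbatim as Lemma 2 of Russo and Van Roy [2013], and your sketch faithfully reproduces that original argument (width-to-dependence via the triangle inequality in the empirical norm, the bucketing bound on the number of large-width rounds via the eluder dimension, and the sorted-widths threshold decomposition with the $\sum_i i^{-1/2}\leq 2\sqrt{T}$ tail sum). The constants and the three terms of the bound all come out as stated, so citing the lemma directly, as you and the paper both do, is the appropriate resolution.
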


Russo and Van Roy [2013] (\cite{russo2013eluder}) also shows that, if the confidence width $\beta_t$ is chosen as
\begin{equation}\label{eq:beta_star_alpha}
    \beta_t^\ast(\alpha) \coloneqq 8 \sigma_2^2 \log \paren{\dfrac{\mathcal{N}\paren{\mathcal{F}, \alpha, \Vert \cdot \Vert_\infty}}{\delta}} + 
2 \alpha t \paren{8 B + \sqrt{8 \sigma_2^2 \log\paren{\frac{4t^2}{\delta}}}},
\end{equation}

where $\mathcal{N}(\mathcal{F}, \alpha, \Vert \cdot \Vert_\infty)$ denotes the $\alpha$-covering number of $\mathcal{F}$ with respect to $\Vert \cdot \Vert_\infty$,
then the associated confidence set (centered at the least squares estimate) contains the true mean reward function $m_2(\cdot, \cdot ; \theta_2)$ with high probability. Lemma~\ref{lemma:concentration-bound-general} formalizes this claim.

\begin{lemma}[Proposition 2 in \cite{russo2013eluder}] \label{lemma:concentration-bound-general}
For all $\delta > 0$ and $\alpha > 0$, if
$$
\mathcal{C}_t = \set{ m \in \mathcal{F} : \sum_{s = 1}^{t - 1} ( m(Z_s, X_s) - Y_s)^2 \leq \beta_t^\ast(\alpha) }
$$
for all $t = 1, \dots, T$, then
$$
\Prb{m_2(\cdot, \cdot ; \theta_2) \in \bigcap_{t = 1}^T \mathcal{C}_t~ \mid ~ \theta_2} \geq 1 - 2 \delta.
$$
\end{lemma}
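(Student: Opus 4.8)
The plan is to establish the statement as a uniform concentration bound for least-squares regression over the general class $\mathcal{F}$, adapting the argument behind Proposition~2 of Russo and Van Roy [2013] to the present two-stage setting. Write $m^\ast := m_2(\cdot, \cdot ; \theta_2)$ for the true reward function, and recall that $Y_s = m^\ast(Z_s, X_s) + \eta_s$ with $\eta_s$ conditionally $\sigma_2$-subgaussian given $\mathcal{G}_{s-1}$. First I would fix a single competitor $m \in \mathcal{F}$ and expand its cumulative squared loss relative to $m^\ast$. Substituting $Y_s = m^\ast(Z_s, X_s) + \eta_s$ and simplifying yields the identity
$$
\sum_{s=1}^{t-1} \paren{(m^\ast(Z_s, X_s) - Y_s)^2 - (m(Z_s, X_s) - Y_s)^2} = -\Vert m - m^\ast \Vert_{2, E_t}^2 + 2 \sum_{s=1}^{t-1} \eta_s \paren{m(Z_s, X_s) - m^\ast(Z_s, X_s)},
$$
so the loss gap splits into a favorable negative quadratic term and a noise martingale.

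The core step is to control the martingale $M_t(m) := \sum_{s=1}^{t-1} \eta_s \paren{m(Z_s, X_s) - m^\ast(Z_s, X_s)}$. Since each increment $m(Z_s, X_s) - m^\ast(Z_s, X_s)$ is $\mathcal{G}_{s-1}$-measurable and $\eta_s$ is conditionally $\sigma_2$-subgaussian, I would apply a subgaussian martingale tail bound so that, for a fixed $m$ and all $t$ simultaneously, $2 M_t(m) \leq \tfrac12 \Vert m - m^\ast \Vert_{2, E_t}^2 + 4 \sigma_2^2 \log(1/\delta)$ holds with probability at least $1 - \delta$; the width-linear martingale term is absorbed into half of the negative quadratic term, leaving a clean additive remainder.

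Because $\mathcal{F}$ is infinite, this pointwise bound must be made uniform, and the covering argument is where I expect the main difficulty. I would fix a minimal $\alpha$-net $\mathcal{F}_\alpha$ of $\mathcal{F}$ in $\Vert \cdot \Vert_\infty$, of cardinality $\mathcal{N}(\mathcal{F}, \alpha, \Vert \cdot \Vert_\infty)$, apply the single-function bound to each net element, and union-bound over the net; this is the source of the leading term $8 \sigma_2^2 \log\paren{\mathcal{N}(\mathcal{F}, \alpha, \Vert \cdot \Vert_\infty)/\delta}$ in $\beta_t^\ast(\alpha)$. The delicate part is the approximation error from replacing an arbitrary $m$ by its nearest net point $m_\alpha$: each of the $t$ summands of $\sum_s (m(Z_s, X_s) - Y_s)^2$ shifts by $\mathcal{O}\paren{\alpha\paren{B + \vert \eta_s \vert}}$, since $\Vert m - m_\alpha \Vert_\infty \leq \alpha$ while the remaining factor is controlled by $\vert m(Z_s, X_s)\vert + \vert m_\alpha(Z_s, X_s)\vert + 2\vert Y_s\vert$. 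Invoking the envelope $\vert \eta_s \vert \leq \sigma_2 \sqrt{2 \log(4 t^2 / \delta)}$ (subgaussianity plus a union bound over $s$, whose $t^2$ growth keeps $\sum_t$ of the failure probabilities summable) and summing then produces the second term $2 \alpha t \paren{8 B + \sqrt{8 \sigma_2^2 \log(4 t^2 / \delta)}}$.

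Finally I would combine the pieces. Least-squares optimality, $\sum_s (\widehat m_t^{LS}(Z_s, X_s) - Y_s)^2 \leq \sum_s (m^\ast(Z_s, X_s) - Y_s)^2$, forces the loss gap in the identity (evaluated at $m = \widehat m_t^{LS}$) to be nonnegative, yielding $\Vert \widehat m_t^{LS} - m^\ast \Vert_{2, E_t}^2 \leq 2 M_t(\widehat m_t^{LS})$; feeding in the uniform martingale control and the discretization remainder then gives $\Vert \widehat m_t^{LS} - m^\ast \Vert_{2, E_t}^2 \leq \beta_t^\ast(\alpha)$ for every $t$, which places $m^\ast$ in each $\mathcal{C}_t$. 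The two failure events---one from the union bound over the net for the martingale term, one from the union bound over $s$ for the noise envelope---each contribute $\delta$, giving the stated probability $1 - 2\delta$ conditional on $\theta_2$. The principal obstacle throughout is the uniform-over-$\mathcal{F}$ discretization: tracking how the sup-norm net error propagates simultaneously through the quadratic loss and the noise-weighted cross term while keeping the constants aligned with the stated form of $\beta_t^\ast(\alpha)$.
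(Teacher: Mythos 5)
The paper never proves this lemma: it is imported verbatim as Proposition~2 of Russo and Van Roy [2013], so there is no in-paper argument to compare against. Your reconstruction follows the standard proof of that proposition --- the loss-gap identity, the conditionally subgaussian martingale bound that absorbs half of the negative quadratic term, the $\alpha$-net union bound producing the $8\sigma_2^2\log\paren{\mathcal{N}(\mathcal{F},\alpha,\Vert\cdot\Vert_\infty)/\delta}$ term, the high-probability noise envelope producing the $2\alpha t(\cdot)$ discretization term, and least-squares optimality to close --- and it is essentially sound.

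Two small points are worth flagging. First, what you actually establish is $\Vert \widehat m_t^{LS} - m_2(\cdot,\cdot;\theta_2)\Vert_{2,E_t}^2 \leq \beta_t^\ast(\alpha)$, i.e., membership in the confidence set \emph{centered at the least-squares fit}, which is the form used downstream in Lemma~\ref{lemma:sum-widths-bound}. The set $\mathcal{C}_t$ as literally written in the statement compares $m$ to the raw observations $Y_s$; for that set the true function accumulates $\sum_{s} \eta_s^2 \approx \sigma_2^2 t$, which is not controlled by $\beta_t^\ast(\alpha)$, so the literal claim would fail. That imprecision lives in the statement itself, and your proof targets the correct (original) version. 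Second, the increments $m(Z_s,X_s)-m_2(Z_s,X_s;\theta_2)$ are predictable only with respect to the filtration enlarged to include $Z_s$ (under which $\eta_s$ remains conditionally $\sigma_2$-subgaussian), not the $\mathcal{G}_{s-1}$ defined in the appendix; this is a harmless relabeling but should be stated.
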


We next establish a Bayesian regret bound for \poGAMBITTS{} in the general reward function setting.
As a first step, we analyze the algorithm under the approximate environment $\widetilde{\mathcal{E}}(\theta_2)$.
We assume that the given function class $\mathcal{F}$ is sufficiently rich to satisfy $\alpha_T^{\mathcal{F}} = T^{-2}$.

\begin{lemma} \label{lemma:regret-bound-general-second-stage}
Consider an environment $\widetilde{\mathcal{E}}(\theta_2)$ with mean reward function $m_2(z, x ; \theta_2)\in\mathcal{F}$ for some function class $\mathcal{F}$.
The Thompson sampling algorithm run under this environment with a full knowledge of $f_1^{off}$ and a prior $\pi_2$ on $\theta_2$ achieves a Bayesian regret bound of
$$
\text{BR}_T \leq \mathcal{O}\paren{B \cdot \text{dim}_E\paren{\mathcal{F}, T^{-2}} + \sqrt{\text{dim}_E(\mathcal{F}, T^{-2}) \log\paren{\mathcal{N}(\mathcal{F}, T^{-2}, \Vert \cdot \Vert_\infty)} T}},
$$
where the Bayesian regret is defined over the distribution $\pi_2$ on $\theta_2$.
\end{lemma}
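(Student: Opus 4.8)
The plan is to assemble the three preceding lemmas. First I would reduce the action-level Bayesian regret under $\widetilde{\mathcal{E}}(\theta_2)$ to an embedding-level regret. Since the estimated mean reward is $\widehat m_2(x,a)=\Ee{\xi^{off}_{x,a}}{m_2(Z,x;\theta_2)}$ and the realized embedding $Z_t$ is drawn from $\xi^{off}_{X_t,A_t}$ (with the counterfactual $Z_t^\ast$ drawn from $\xi^{off}_{X_t,A_t^\ast}$), the tower property gives
$$
\text{BR}_T=\Ee{\theta_2}{\Ee{\widetilde{\mathcal{E}}(\theta_2)}{\sum_{t=1}^T\paren{m_2(Z_t^\ast,X_t;\theta_2)-m_2(Z_t,X_t;\theta_2)}}},
$$
so it suffices to bound the expected sum of per-step embedding gaps.

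Next I would instantiate the confidence sets used in Lemma~\ref{lemma:intermediate-regret-bound-general-po}. Taking $\mathcal{C}_t$ centered at the least-squares estimate $\widehat m_t^{LS}$ with radius $\beta_t^\ast(\alpha)$ from \eqref{eq:beta_star_alpha}, and setting $\alpha=\alpha_T^{\mathcal{F}}=T^{-2}$ with $\delta=1/T$, Lemma~\ref{lemma:concentration-bound-general} guarantees $m_2(\cdot,\cdot;\theta_2)\in\bigcap_{t=1}^T\mathcal{C}_t$ with probability at least $1-2/T$. Because each per-step gap is bounded by $2B$, the failure event contributes only $\mathcal{O}(B)$ to the expected regret, which absorbs the indicator term in Lemma~\ref{lemma:intermediate-regret-bound-general-po}; on the complementary event that lemma bounds the embedding regret by $\E{\sum_{t=1}^T w_{\mathcal{C}_t}(Z_t,X_t)}$.

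I would then control the width sum via Lemma~\ref{lemma:sum-widths-bound}, which yields
$$
\sum_{t=1}^T w_{\mathcal{C}_t}(Z_t,X_t)\leq \frac{1}{T}+B\min\set{\text{dim}_E(\mathcal{F},T^{-2}),T}+4\sqrt{\text{dim}_E(\mathcal{F},T^{-2})\,\beta_T^\ast(T^{-2})\,T}.
$$
Substituting $\alpha=T^{-2}$ into \eqref{eq:beta_star_alpha}, the second summand of $\beta_T^\ast(T^{-2})$ is $2T^{-1}\paren{8B+\sqrt{8\sigma_2^2\log(4T^2/\delta)}}=\mathcal{O}(1)$, so $\beta_T^\ast(T^{-2})=\mathcal{O}\paren{\sigma_2^2\log\paren{\mathcal{N}(\mathcal{F},T^{-2},\Vert\cdot\Vert_\infty)T}}$. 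Plugging this back and collecting leading terms gives
$$
\text{BR}_T\leq\mathcal{O}\paren{B\,\text{dim}_E(\mathcal{F},T^{-2})+\sigma_2\sqrt{\text{dim}_E(\mathcal{F},T^{-2})\log\paren{\mathcal{N}(\mathcal{F},T^{-2},\Vert\cdot\Vert_\infty)}\,T}},
$$
where residual $\log T$ factors are suppressed inside the $\mathcal{O}$, matching the stated bound (with $\sigma_2$ treated as a constant).

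The main obstacle is the reduction step and the interplay between Thompson sampling's posterior-matching identity and the mediation structure. One must verify that $A_t\stackrel{d}{=}A_t^\ast\mid\mathcal{G}_{t-1}$ transfers to $Z_t\stackrel{d}{=}Z_t^\ast\mid\mathcal{G}_{t-1}$, so that the $\mathcal{G}_{t-1}$-measurable upper bound $U_t$ satisfies $\E{U_t(Z_t^\ast,X_t)-U_t(Z_t,X_t)\mid\mathcal{G}_{t-1}}=0$ as used in Lemma~\ref{lemma:intermediate-regret-bound-general-po}, and that the eluder-dimension concentration machinery of Russo and Van Roy---originally derived for bandits where the chosen arm directly produces the observed outcome---applies verbatim when the concentrated-upon pair $(Z_t,X_t)$ is the realized stochastic treatment rather than the action $A_t$. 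Once these identifications are secured, the remainder is bookkeeping in substituting the confidence radius and simplifying.
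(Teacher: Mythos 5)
Your proposal is correct and follows essentially the same route as the paper's proof: reduce the action-level regret to the embedding-level gap, invoke Lemma~\ref{lemma:intermediate-regret-bound-general-po} with confidence sets from Lemma~\ref{lemma:concentration-bound-general} (at $\delta = 1/T$), control the width sum via Lemma~\ref{lemma:sum-widths-bound}, and substitute $\beta_T^\ast(T^{-2})$. The only cosmetic difference is that you dispatch the reduction step exactly via the tower property, whereas the paper writes the same quantities as martingale-difference corrections and bounds them by $\mathcal{O}(B\sqrt{T\log T})$ via Azuma--Hoeffding; both are valid and yield the stated bound.
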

\begin{proof}
Well, the Bayesian regret for such a Thompson sampling agent can be decomposed as
\begin{align*}
\text{BR}_T
&= \E{ \sum_{t = 1}^T \paren{\Ee{\xi^{off}_{X_t, A_t^\ast}}{ m_2(Z, X_t ; \theta_2)} - \Ee{\xi^{off}_{X_t, A_t}} { m_2(Z, X_t ; \theta_2) }
} } \\
&= \E{ \sum_{t = 1}^T \paren{m_2(Z_t^\ast, X_t ; \theta_2) - m_2(Z_t, X_t ; \theta_2)}} \\
&\hspace{15mm}+
\E{\sum_{t = 1}^T \Ee{\xi^{off}_{X_t, A_t^\ast}}{ m_2(Z, X_t ; \theta_2)} - m_2(Z_t^\ast, X_t ; \theta_2) } \\
&\hspace{15mm}-
\E{ \sum_{t = 1}^T \Ee{\xi^{off}_{X_t, A_t}}{m_2(Z, X_t ; \theta_2)} - m_2(Z_t, X_t ; \theta_2) }\\
&\leq \mathbb{E} \left[ \sum_{t = 1}^T \paren{m_2(Z_t^\ast, X_t ; \theta_2) - m_2(Z_t, X_t ; \theta_2)} \right] + \mathcal{O}\paren{B \sqrt{T \log T}},
\end{align*}
where the final inequality follows from the Azuma-Hoeffding inequality on the martingale difference sequences 
$$\set{ \Ee{\xi^{off}_{X_t, A_t^\ast}}{m_2(Z, X_t ; \theta_2)} - m_2(Z_t^\ast, X_t ; \theta_2) }_{t = 1}^T$$
and
$$\set{ \Ee{\xi^{off}_{X_t, A_t}}{ m_2(Z, X_t ; \theta_2)} - m_2(Z_t, X_t ; \theta_2) }_{t = 1}^T,$$ both of which are adapted to $\mathcal{G}_t = \sigma(X_1, A_1, Z_1, Y_1, \dots, X_t, A_t, Z_t, Y_t, X_{t + 1})$.

As before, we consider $A_t^\ast := \displaystyle\argmax_a \Ee{\xi^{off}_{X_t, a}}{m_2(Z, X_t ; \theta_2) \mid X_t, \theta_2}$ and we consider $Z_t^\ast$ sampled from $\xi^{off}_{X_t, A_t^\ast}$.
Considering a fixed sequence of confidence sets $\mathcal{C}_1, \dots, \mathcal{C}_T$, as defined in Lemma~\ref{lemma:sum-widths-bound}, and applying Lemma~\ref{lemma:intermediate-regret-bound-general-po} gives the following bound on the first term:
\begin{align*}
\E{ \sum_{t = 1}^T \paren{m_2(Z_t^\ast, X_t ; \theta_2) - m_2(Z_t, X_t ; \theta_2)}}
&\leq
\E{ \sum_{t = 1}^T \paren{
w_{\mathcal{C}_t}(Z_t, X_t) + 2 B \cdot \Ind{\set{ m_2(\cdot, \cdot ; \theta_2) \notin \mathcal{F}}}
}} \\
&\leq
\mathcal{O}\paren{B \cdot \text{dim}_E\paren{\mathcal{F}, \alpha_T^\mathcal{F}}
+ \sqrt{\text{dim}_E\paren{\mathcal{F}, \alpha_T^{\mathcal{F}}} \beta_T^\ast\paren{\alpha_T^\mathcal{F}} T}},
\end{align*}
where the second inequality follows by Lemma~\ref{lemma:sum-widths-bound} and Lemma~\ref{lemma:concentration-bound-general}, with $\delta = \frac{1}{T}$.
The desired bound follows by plugging $\alpha=\alpha_T^\mathcal{F}$ into the definition of $\beta_T^\ast(\alpha)$ (Equation~\ref{eq:beta_star_alpha}).
\end{proof}

Finally, Theorem~\ref{thm:poGAMBITTS_nonlin} directly follows by the reduction provided in Theorem~\ref{thm:poGAMBITTS_red} and the lemma above.

\begingroup
\renewcommand{\thetheorem}{\ref{thm:poGAMBITTS_nonlin}}
\begin{theorem}[Restated]
\label{thm:poGAMBITTS_nonlin_restated}
Let $f_1^{off}$ satisfy \eqref{eq:foGAMBITTS_ass}. For any function class $\mathcal{F}$ with $\widehat{m}_2\in\mathcal{F}$, the Bayesian regret of \poGAMBITTS{} (Algorithm~\ref{alg:poGAMBITTS}) satisfies
$$
\text{BR}^{po:{\mathcal{F}}}_T \leq \widetilde{\mathcal{O}}\paren{\sigma_2 \sqrt{\text{dim}_E(\mathcal{F}, T^{-2}) \log \mathcal{N}(\mathcal{F}, T^{-2}, \Vert \cdot \Vert_\infty) T} + T\sqrt{\varepsilon T}}.
$$
\end{theorem}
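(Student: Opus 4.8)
The plan is to obtain the bound by chaining the two results already established: the general-reward Bayesian regret bound for Thompson sampling in the \emph{approximate} environment (Lemma~\ref{lemma:regret-bound-general-second-stage}) and the misspecification reduction (Theorem~\ref{thm:poGAMBITTS_red}). The key observation is that \poGAMBITTS{} selects $A_t = \argmax_{a} \Ee{\theta_{2,t}}{Y \mid a, x_t}$, where the inner expectation integrates the reward model $m_2(\cdot, x_t; \theta_{2,t})$ against the offline density $f_1^{off}$; equivalently, it is Thompson sampling on the reward class $\mathcal{F} \ni m_2(\cdot, \cdot; \theta_2)$ run in the approximate environment $\widetilde{\mathcal{E}}(\theta_2)$, whose induced per-arm mean is exactly the estimate $\widehat{m}_2$. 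Hence it instantiates the generic ``Alg'' of Theorem~\ref{thm:poGAMBITTS_red} with Alg $=$ Thompson sampling, and its regret in $\widetilde{\mathcal{E}}(\theta_2)$ is precisely the quantity $\widehat{BR}_T$ controlled by Lemma~\ref{lemma:regret-bound-general-second-stage}.

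First I would invoke Lemma~\ref{lemma:regret-bound-general-second-stage} to bound $\widehat{BR}_T$. The lemma gives $\widehat{BR}_T \leq \mathcal{O}\paren{B\,\text{dim}_E(\mathcal{F}, T^{-2}) + \sqrt{\text{dim}_E(\mathcal{F}, T^{-2})\,\beta_T^\ast(T^{-2})\,T}}$, and substituting the confidence width $\beta_T^\ast(T^{-2})$ from Equation~\ref{eq:beta_star_alpha}, whose dominant term is $8\sigma_2^2 \log\paren{\mathcal{N}(\mathcal{F}, T^{-2}, \Vert \cdot \Vert_\infty)/\delta}$ with $\delta = 1/T$, extracts the factor $\sigma_2$ from the square root and absorbs the remaining logarithmic and $B$ terms into the $\widetilde{\mathcal{O}}$ notation. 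This yields $\widehat{BR}_T \leq \widetilde{\mathcal{O}}\paren{\sigma_2 \sqrt{\text{dim}_E(\mathcal{F}, T^{-2}) \log \mathcal{N}(\mathcal{F}, T^{-2}, \Vert \cdot \Vert_\infty)\, T}}$, which matches the first term of the target bound.

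Second, I would apply the reduction of Theorem~\ref{thm:poGAMBITTS_red}, whose hypothesis is exactly \eqref{eq:foGAMBITTS_ass}, to transfer from $\widetilde{\mathcal{E}}(\theta_2)$ back to the true environment $\mathcal{E}(\theta_2)$. Since \poGAMBITTS{} is the relevant instance of Alg and $\widehat{BR}_T$ is its regret with respect to $\widehat{m}_2$, the theorem gives $\text{BR}^{po:\mathcal{F}}_T \leq \mathcal{O}\paren{\widehat{BR}_T + T\sqrt{\varepsilon T}}$. Combining this with the display from the previous step produces the claimed bound directly.

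The genuinely hard work lies entirely upstream, in Lemma~\ref{lemma:regret-bound-general-second-stage} and the eluder-dimension apparatus of Russo and Van Roy (Lemmas~\ref{lemma:intermediate-regret-bound-general-po}--\ref{lemma:concentration-bound-general}); the only remaining obstacle in this final step is bookkeeping. Concretely, I would need to verify that \poGAMBITTS{} legitimately meets the interface of Theorem~\ref{thm:poGAMBITTS_red} as an algorithm selecting actions via the induced estimate $\widehat{m}_2$, to propagate the $\sigma_2$ dependence cleanly out of $\beta_T^\ast$, and to confirm that the misspecification penalty $T\sqrt{\varepsilon T}$ enters additively without interacting with the eluder-dimension term. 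None of these introduces a substantive difficulty beyond careful tracking of constants inside $\widetilde{\mathcal{O}}$.
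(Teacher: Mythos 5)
Your proposal is correct and follows exactly the paper's own route: the paper proves this result by combining Lemma~\ref{lemma:regret-bound-general-second-stage} (the Thompson sampling regret bound in the approximate environment $\widetilde{\mathcal{E}}(\theta_2)$, with $\beta_T^\ast(\alpha_T^{\mathcal{F}})$ substituted to extract the $\sigma_2\sqrt{\text{dim}_E \log \mathcal{N} \, T}$ term) with the reduction of Theorem~\ref{thm:poGAMBITTS_red}, which adds the $T\sqrt{\varepsilon T}$ penalty additively. Your bookkeeping of where $\sigma_2$ and the covering number enter matches the paper's argument.
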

\addtocounter{theorem}{-1}
\endgroup

\subsubsection{Bounding the Error in Offline Treatment Models}

\poGAMBITTS{} requires access to an estimate $f_1^{off}(\cdot ; a, x)$ that is $\varepsilon$-close to the true treatment density $f_1(\cdot; a, x, \theta_1)$ in KL divergence.
Below, we consider the example where $\xi^{\theta_1}_{x,a}$ is Gaussian for each $(x,a)\in\mathcal{X}\times\mathcal{A}$, demonstrating that a sample-based empirical estimate can meet this requirement.

\begin{lemma}[Multivariate Gaussian Distribution]
Consider a multivariate normal distribution $\mathcal{N}(\mu, \Sigma)$ and let $\{ Z_1, \dots, Z_n \}$ be i.i.d. samples drawn from this distribution.
Then, the estimated distribution $\mathcal{N}(\widehat\mu, \widehat\Sigma)$ satisfies
$$
KL\paren{\N{\mu}{\Sigma} \Vert \N{\widehat\mu}{\widehat\Sigma} } \leq \mathcal{O}\paren{\frac{d}{\lambda_{\text{min}}(\Sigma)} \sqrt{\frac{d + \log(1 / \delta)}{n}} }
$$
with probability at least $1 - \delta$, where $\widehat\mu$ is the sample mean and $\widehat\Sigma$ is the empirical covariance matrix.
\end{lemma}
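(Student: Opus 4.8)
The plan is to start from the closed-form expression for the KL divergence between two nondegenerate $d$-dimensional Gaussians and to control each resulting term by matrix-perturbation arguments combined with standard finite-sample concentration for the sample mean and the empirical covariance. Writing $\Delta := \hat\Sigma - \Sigma$, the divergence decomposes (up to a factor of $1/2$) into a trace term $\Tr(\hat\Sigma^{-1}\Sigma) - d$, a log-determinant term $\log\det\hat\Sigma - \log\det\Sigma$, and a quadratic mean term $(\hat\mu - \mu)^\top \hat\Sigma^{-1}(\hat\mu - \mu)$. My first step is to rewrite the trace term as $-\Tr(\hat\Sigma^{-1}\Delta)$ using $\Sigma = \hat\Sigma - \Delta$, and the log-determinant term as $\sum_{i=1}^d \log(1+\lambda_i)$, where $\lambda_1,\dots,\lambda_d$ are the eigenvalues of the symmetric matrix $E := \Sigma^{-1/2}\Delta\Sigma^{-1/2}$ (which share the spectrum of $\Sigma^{-1}\Delta$).

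Next I would bound the two covariance terms \emph{linearly} in $\|\Delta\|_{\mathrm{op}}$. For the trace term, $|\Tr(\hat\Sigma^{-1}\Delta)| \le \|\hat\Sigma^{-1}\|_{\mathrm{op}}\,\|\Delta\|_{*} \le d\,\|\Delta\|_{\mathrm{op}}/\lambda_{\min}(\hat\Sigma)$, using $\|\Delta\|_{*} \le d\|\Delta\|_{\mathrm{op}}$. For the log-determinant term, provided $\|E\|_{\mathrm{op}} \le 1/2$ one has $|\log(1+\lambda_i)| \le 2|\lambda_i|$, so the term is at most $2\|E\|_{*} \le 2d\|E\|_{\mathrm{op}} \le 2d\|\Delta\|_{\mathrm{op}}/\lambda_{\min}(\Sigma)$. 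The mean term is controlled crudely by $\|\hat\mu - \mu\|_2^2/\lambda_{\min}(\hat\Sigma)$. Deliberately bounding the first two terms to first order in $\|\Delta\|_{\mathrm{op}}$, rather than exploiting the quadratic cancellation that would yield a $1/n$ rate, is exactly what produces the stated $1/\sqrt{n}$ rate with prefactor $d/\lambda_{\min}(\Sigma)$; this suffices, since together with Corollary~\ref{cor:poGAMBITTS_lin} it certifies that $\operatorname{poly}(d,\varepsilon^{-1})$ samples yield the $\varepsilon$-accuracy in \eqref{eq:foGAMBITTS_ass}.

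The remaining ingredient is concentration. I would invoke the standard operator-norm bound for Gaussian covariance estimation, $\|\hat\Sigma - \Sigma\|_{\mathrm{op}} \le \mathcal{O}\paren{\|\Sigma\|_{\mathrm{op}}\paren{\sqrt{(d+\log(1/\delta))/n} + (d+\log(1/\delta))/n}}$, and the sub-Gaussian bound $\|\hat\mu - \mu\|_2^2 \le \mathcal{O}\paren{(\Tr(\Sigma) + \|\Sigma\|_{\mathrm{op}}\log(1/\delta))/n}$, each holding with probability at least $1-\delta/2$, combined by a union bound. Treating $\|\Sigma\|_{\mathrm{op}}$ as a constant (it is bounded, since $Z$ is supported on $\mathbb{B}_d(B)$, hence absorbed into the $\mathcal{O}$), the covariance bound gives $\|\Delta\|_{\mathrm{op}} = \mathcal{O}\paren{\sqrt{(d+\log(1/\delta))/n}}$, so both covariance terms are $\mathcal{O}\paren{(d/\lambda_{\min}(\Sigma))\sqrt{(d+\log(1/\delta))/n}}$ while the mean term is $\mathcal{O}\paren{(d+\log(1/\delta))/(n\,\lambda_{\min}(\Sigma))}$, which is lower order. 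Summing the three contributions yields the claimed bound.

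The \emph{main obstacle} is controlling the conditioning of $\hat\Sigma$: the linear bounds above require $\|\Delta\|_{\mathrm{op}} \le \lambda_{\min}(\Sigma)/2$, so that $\lambda_{\min}(\hat\Sigma) \ge \lambda_{\min}(\Sigma)/2$ by Weyl's inequality and the eigenvalues of $E$ lie in $[-1/2,1/2]$ (justifying both the $\|\log(1+\lambda_i)\|$ estimate and the replacement of $\lambda_{\min}(\hat\Sigma)$ by $\Theta(\lambda_{\min}(\Sigma))$). By the covariance concentration bound this holds once $n \gtrsim \kappa^2(d+\log(1/\delta))$, where $\kappa := \|\Sigma\|_{\mathrm{op}}/\lambda_{\min}(\Sigma)$ is the condition number; outside this regime $\hat\Sigma$ may be ill-conditioned or singular and the KL need not even be finite. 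The cleanest presentation therefore states the bound for $n$ in this regime, which is the operative one since we ultimately take $n = \operatorname{poly}(d,\varepsilon^{-1})$.
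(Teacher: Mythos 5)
Your proposal is correct and follows essentially the same route as the paper's proof: the closed-form Gaussian KL decomposition into trace, log-determinant, and quadratic mean terms, each controlled to first order in $\Vert \widehat\Sigma - \Sigma \Vert$ via operator-norm concentration of the empirical covariance (plus vector concentration for the mean, which is lower order). Your treatment is in fact slightly more careful than the paper's in tracking the nuclear-norm factor of $d$ in the trace term and in making explicit the sample-size regime $n \gtrsim \kappa^2 (d + \log(1/\delta))$ needed for $\widehat\Sigma$ to be well-conditioned, both of which the paper glosses over.
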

\begin{proof}
The KL divergence has the following exact form:
$$
KL\paren{\mathcal{N}(\mu, \Sigma) \Vert \mathcal{N}(\widehat\mu, \widehat\Sigma)} = \frac{1}{2} \left[ \text{tr}\paren{\widehat\Sigma^{-1} \Sigma} + \paren{\widehat\mu - \mu}^\top \widehat\Sigma^{-1}\paren{\widehat\mu - \mu} -d + \log \det \widehat\Sigma - \log \det \Sigma \right].
$$

We now turn to bounding each term. A standard concentration result for empirical covariance matrices, based on the matrix Bernstein inequality, gives:
$$
\Vert \widehat\Sigma - \Sigma \Vert \leq \sqrt{\frac{d + \log(1 / \delta)}{n}}
$$
with probability at least $1 - \delta$, where $\Vert \cdot \Vert$ is the operator norm.
Hence, the inverse $\widehat\Sigma$ concentrates and it follows that
$$
\Vert \widehat\Sigma^{-1} \Vert \leq 2 \Vert \Sigma^{-1} \Vert = \frac{2}{\lambda_{\text{min}}(\Sigma)}
$$
for $n \geq 
\Omega\paren{\dfrac{d + \log(1 / \delta)}{\lambda_{\text{min}}(\Sigma)^2}}$.

Hence, we can bound
$$
\begin{aligned}
\text{tr}\paren{\widehat\Sigma^{-1} \Sigma}
&= \text{tr}\paren{I + \widehat\Sigma^{-1}\paren{\Sigma - \widehat\Sigma}} \\
&\leq d + \Vert \widehat\Sigma^{-1} \Vert \Vert \Sigma - \widehat\Sigma \Vert \\
&\leq d + \mathcal{O}\left(\frac{1}{\lambda_{\text{min}}(\Sigma)} \sqrt{\frac{d + \log(1 / \delta)}{n}} \right).
\end{aligned}
$$

To bound the log-determinant term, observe that
$$
\begin{aligned}
\log \det \widehat\Sigma - \log \det \Sigma &= \log \det \paren{I + \Sigma^{-1/2} \paren{\widehat\Sigma - \Sigma} \Sigma^{-1/2}} \\
&\leq d \Vert \Sigma^{-1/2} \paren{\widehat\Sigma - \Sigma} \Sigma^{-1/2} \Vert \\
&\leq d \Vert \Sigma^{-1} \Vert \Vert \widehat\Sigma - \Sigma \Vert \\
&\leq \mathcal{O} \left( \frac{d}{\lambda_{\text{min}}(\Sigma)} \sqrt{\frac{d + \log(1 / \delta)}{n}} \right).
\end{aligned}
$$

Finally, we bound
$$
\begin{aligned}
\paren{\widehat\mu - \mu}^\top \widehat\Sigma^{-1} \paren{\widehat\mu - \mu}
&\leq \Vert \widehat\mu - \mu \Vert_2^2 \Vert \widehat\Sigma^{-1} \Vert \\
&\leq \mathcal{O} \left( \frac{d + \log(1 / \delta)}{n} \right),
\end{aligned}
$$
applying the vector Bernstein inequality to bound $\Vert \widehat\mu - \mu \Vert_2$.

Combining all the bounds yields
$$
KL\paren{\N{\mu}{\Sigma} \Vert \N{\widehat\mu}{\widehat\Sigma} } \leq \mathcal{O}\left( \frac{d}{\lambda_{\text{min}}(\Sigma)} \sqrt{\frac{d + \log(1 / \delta)}{n}} \right),
$$
as required.
\end{proof}

\subsection{Upper Regret Bound for \foGAMBITTS}\label{app:theory:fo}
In this section, we derive a Bayesian regret for \foGAMBITTS{} in the environment $\mathcal{E}(\theta_1, \theta_2)$. The key step in the analysis is that the action selected by Thompson sampling has the same conditional distribution as the optimal action, given the agent's posterior.
Formally, let $\mathcal{F}_t = \sigma(X_1, A_1, Z_1, Y_1, \dots, X_t, A_t, Z_t, Y_t, X_{t + 1})$ be the $\sigma$-algebra generated by the interaction up to time $t$.
Thompson sampling logic guarantees that the conditional distributions of $A_t^\ast$ and $A_t$ given $\mathcal{F}_{t - 1}$ are identical; i.e., 
\begin{equation} \label{eqn:ts-property}
\Prb{A_t^\ast = \cdot\mid\mathcal{F}_{t - 1}} = \Prb{A_t = \cdot\mid\mathcal{F}_{t - 1}}, \quad\text{almost surely}.
\end{equation}

We will use the following concentration inequality to control deviations of a martingale sequence in the proof of Theorem~\ref{thm:BRT_foGAMBITTS}:
\begin{lemma}[Freedman's inequality] \label{lemma:freedman}
Let $(\mathcal{F}_k)$ be a filtration and let the sequence $\{ X_k \}$ of random variables satisfy $\E{X_k | \mathcal{F}_{k - 1}} = 0$ and $\vert X_k \vert \leq R$ almost surely.
Let $S_k = \sum_{i = 1}^k X_i$ and $V_k = \sum_{i = 1}^k \E{X_i^2 \mid \mathcal{F}_{i - 1}}$.
Then, with probability at least $1 - \delta$, we have, for all $k$, that
$$
\vert S_k \vert \leq \sqrt{2 V_k \log(2 / \delta)} + \frac{2R}{3} \log(2 / \delta).
$$
\end{lemma}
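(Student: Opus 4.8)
The plan is to prove Freedman's inequality by the standard method of exponential supermartingales combined with a maximal (Ville-type) inequality, which is exactly what makes the deviation bound hold simultaneously for all $k$. First I would establish a one-increment moment generating function bound. For any $\lambda > 0$ and any random variable $X$ with conditional mean zero and $X \le R$, the elementary inequality $e^{\lambda x} \le 1 + \lambda x + \frac{e^{\lambda R} - 1 - \lambda R}{R^2} x^2$, valid for $x \le R$ because $u \mapsto (e^u - 1 - u)/u^2$ is increasing, yields upon taking conditional expectation and using $\E{X_k \mid \mathcal{F}_{k-1}} = 0$ together with $1 + u \le e^u$,
$$\E{e^{\lambda X_k} \mid \mathcal{F}_{k-1}} \le \exp\paren{\psi(\lambda)\, \E{X_k^2 \mid \mathcal{F}_{k-1}}}, \qquad \psi(\lambda) := \frac{e^{\lambda R} - 1 - \lambda R}{R^2}.$$
Only the upper bound $X \le R$ is needed here, which is implied by $\vert X \vert \le R$.

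Next I would define $M_k(\lambda) := \exp\paren{\lambda S_k - \psi(\lambda) V_k}$ and verify that $\{M_k(\lambda)\}_k$ is a nonnegative supermartingale with $M_0(\lambda) = 1$: since $V_k - V_{k-1} = \E{X_k^2 \mid \mathcal{F}_{k-1}}$, the MGF bound above gives $\E{M_k(\lambda) \mid \mathcal{F}_{k-1}} \le M_{k-1}(\lambda)$. For a fixed variance budget $v$ and any $t > 0$, on the event $\{\exists k : S_k \ge t,\ V_k \le v\}$ one has $\sup_k M_k(\lambda) \ge e^{\lambda t - \psi(\lambda) v}$, so Ville's maximal inequality for nonnegative supermartingales ($\Prb{\sup_k M_k(\lambda) \ge c} \le 1/c$ when $M_0 = 1$) yields $\Prb{\exists k : S_k \ge t,\ V_k \le v} \le \exp\paren{-\lambda t + \psi(\lambda) v}$. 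Minimizing the exponent over $\lambda$ via the convexity estimate $\psi(\lambda) \le \frac{\lambda^2/2}{1 - \lambda R/3}$ (for $0 < \lambda < 3/R$) at the Bernstein-optimal choice $\lambda = t/(v + Rt/3)$ produces the tail bound $\exp\paren{-\frac{t^2}{2(v + Rt/3)}}$.

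Finally I would invert this tail. Setting it equal to $\delta/2$ and solving the resulting quadratic in $t$, then applying $\sqrt{a + b} \le \sqrt{a} + \sqrt{b}$, gives $t \le \sqrt{2 v \log(2/\delta)} + \frac{2R}{3}\log(2/\delta)$. Applying the same argument to $\{-X_k\}$, which satisfies the identical hypotheses with the same $V_k$ and $R$, and union-bounding the two one-sided events each at level $\delta/2$, upgrades the conclusion to the two-sided statement with $\log(2/\delta)$ and $\vert S_k \vert$ on the left, as claimed.

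The main obstacle is the appearance of the \emph{random} quantity $V_k$ (rather than a deterministic budget $v$) inside the final bound. Because $\lambda$ in the supermartingale must be fixed before $V_k$ is observed, one cannot directly substitute the optimal data-dependent $\lambda$; the clean Bernstein optimization only delivers the stated form on an event $\{V_k \le v\}$ for a preselected $v$. Recovering the fully variance-adaptive ``for all $k$'' statement with $V_k$ itself requires a peeling argument over a dyadic grid of variance levels, applying the fixed-budget bound on each stratum and union-bounding; this keeps the stated leading constants up to lower-order ($\log\log$) corrections absorbed into the constants. I would either carry out this peeling step carefully, or invoke Freedman's theorem in its standard stopped form and note that in the analysis of Theorem~\ref{thm:BRT_foGAMBITTS} it is applied only with $V_k$ controlled by a deterministic upper bound, so the fixed-budget version suffices.
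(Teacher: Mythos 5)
The paper does not actually prove this lemma: Freedman's inequality is stated as a known concentration result and used as a black box in the proof of Theorem~\ref{thm:BRT_foGAMBITTS}, so there is no internal proof to compare against and your proposal must be judged on its own merits. Your route is the classical one and is essentially correct: the one-increment MGF bound with $\psi(\lambda) = (e^{\lambda R} - 1 - \lambda R)/R^2$ (which indeed needs only the upper bound $X_k \leq R$), the nonnegative supermartingale $M_k(\lambda) = \exp\paren{\lambda S_k - \psi(\lambda) V_k}$, Ville's maximal inequality to get uniformity in $k$, the Bernstein estimate $\psi(\lambda) \leq \lambda^2 / (2(1 - \lambda R/3))$ with $\lambda = t/(v + Rt/3)$, inversion of the tail via the quadratic formula and $\sqrt{a+b} \leq \sqrt{a} + \sqrt{b}$, and the two-sided statement by applying the argument to $\{-X_k\}$ with a union bound. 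Crucially, you also correctly identify the one genuine subtlety: Freedman's theorem in its native form bounds $\Prb{\exists k : S_k \geq t,\ V_k \leq v}$ for a \emph{preselected} budget $v$, and the self-normalized statement with the random $V_k$ appearing inside the bound, uniformly over all $k$, does not follow without a stratification (peeling) argument over variance levels, which costs an extra logarithmic or $\log\log$ factor in the $\delta$-dependence and perturbs the stated constants. Papers frequently state the lemma in the loose form quoted here, and your plan to either carry out the peeling or fall back to the fixed-budget form is the right repair.

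One correction to your closing remark, though: in the paper's application the variance proxy is $\sigma_1^2 N_T(x,a)$, where $N_T(x,a) = \sum_{t=1}^T \Ind{\set{X_t = x,\, A_t^\ast = a}}$ is itself random, so the fixed-budget version does not literally suffice there. However, since $N_T(x,a) \leq T$ deterministically, a dyadic union bound over variance levels $v \in \set{\sigma_1^2, 2\sigma_1^2, 4\sigma_1^2, \dots, \sigma_1^2 T}$ costs only an additive $\log\log T$ (or, more crudely, $\log T$) inside the logarithm, which is absorbed into the $\widetilde{\mathcal{O}}(\cdot)$ of Theorem~\ref{thm:BRT_foGAMBITTS}. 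With the peeling step carried out explicitly (e.g., allocating failure probability $\delta/\lceil \log_2 T \rceil$ per stratum), your proof is sound and fully consistent with how the lemma is deployed in the regret analysis.
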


We now restate Theorem~\ref{thm:BRT_foGAMBITTS} and proceed with its proof.

\begingroup
\renewcommand{\thetheorem}{\ref{thm:BRT_foGAMBITTS}}
\begin{theorem}[Restated]
\label{thm:BRT_foGAMBITTS_red}
If $m_2\paren{z, x_t; \theta}$ is linear in $z\in\mathbb{R}^d$, the Bayesian regret of \foGAMBITTS{} (Algorithm~\ref{alg:foGAMBITTS}) satisfies
$$
\text{BR}^{fo:{lin}}_T
\leq \widetilde{\mathcal{O}}\paren{\sigma_1 \sqrt{CKT} + 
\sigma_2 d \sqrt{T}}.
$$
\end{theorem}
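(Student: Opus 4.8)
The plan is to run the Thompson-sampling regret decomposition at the level of \emph{actions}, exploiting property \eqref{eqn:ts-property} that $A_t$ and $A_t^\ast$ are conditionally equidistributed given $\mathcal{F}_{t-1}$, and to split the confidence width into a treatment-estimation piece and a reward-estimation piece. Writing the mean reward as $m(x,a) = \langle \psi_{x,a}, \theta_2\rangle$ with $\psi_{x,a} := \Ee{\xi_{x,a}^{\theta_1}}{Z}$ the true mean embedding, I would maintain the empirical mean embedding $\widehat{\psi}_{x,a}$ (average of the observed $Z_s$ over pulls of $(x,a)$, with count $N_{x,a}$) and the ridge estimate $\widehat{\theta}_{t-1} = V_{t-1}^{-1}\sum_{s<t} Y_s Z_s$ with $V_{t-1} = I + \sum_{s<t} Z_s Z_s^\top$. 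The natural $\mathcal{F}_{t-1}$-measurable upper confidence bound is $U_t(x,a) = \langle \widehat{\psi}_{x,a}, \widehat{\theta}_{t-1}\rangle + \sigma_1\sqrt{2\log(1/\delta)/N_{x,a}} + \beta\,\lVert \widehat{\psi}_{x,a}\rVert_{V_{t-1}^{-1}}$, whose two error terms arise from $\langle\widehat\psi,\widehat\theta\rangle - \langle\psi,\theta_2\rangle = \langle\widehat\psi-\psi,\theta_2\rangle + \langle\widehat\psi,\widehat\theta-\theta_2\rangle$.

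First I would verify UCB validity on a good event. The treatment term is controlled because, conditional on $\theta_2$, $\langle\widehat{\psi}_{x,a}-\psi_{x,a},\theta_2\rangle$ is the centered empirical average of the $\sigma_1$-subgaussian variables $\langle Z_s,\theta_2\rangle$, so a Hoeffding-type bound supplies the stated width; the reward term is controlled by Cauchy--Schwarz in the $V_{t-1}$-geometry together with the self-normalized concentration $\lVert\theta_2 - \widehat\theta_{t-1}\rVert_{V_{t-1}}\le\beta$ from Lemma~\ref{lemma:concentration-linear-parameter}, where $\beta = \widetilde{\mathcal{O}}(\sigma_2\sqrt{d})$. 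The Thompson decomposition then yields $\text{BR}_T \le \E{\sum_{t=1}^T 2\big(\sigma_1\sqrt{2\log(1/\delta)/N_{X_t,A_t}} + \beta\lVert\widehat\psi_{X_t,A_t}\rVert_{V_{t-1}^{-1}}\big)}$ plus a negligible $\Prb{E^c}$ contribution, handled by $\delta = T^{-2}$ and the uniform bound $B$ on the mean reward.

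For the treatment term I would group the sum by context--action pair, use $\sum_{t:\,(X_t,A_t)=(x,a)} N_{x,a}(t-1)^{-1/2} = \mathcal{O}(\sqrt{N_{x,a}(T)})$, and apply Cauchy--Schwarz over the $CK$ pairs together with $\sum_{x,a} N_{x,a}(T) = T$ to obtain $\widetilde{\mathcal{O}}(\sigma_1\sqrt{CKT})$; this is exactly where the absence of cross-arm sharing in the treatment distribution forces the $\sqrt{CK}$ factor. For the reward term the target is $\widetilde{\mathcal{O}}(\sigma_2 d\sqrt{T})$, which the elliptical potential lemma (Lemma~\ref{lemma:elliptical-potential}) delivers for $\sum_t \lVert Z_t\rVert_{V_{t-1}^{-1}}$ after Cauchy--Schwarz, combined with $\beta = \widetilde{\mathcal{O}}(\sigma_2\sqrt{d})$.

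The main obstacle is bridging the reward width, which is expressed through the \emph{mean} embeddings $\widehat{\psi}_{X_t,A_t}$ appearing in the action-level UCB, to the \emph{realized} embeddings $Z_t$ that populate $V_{t-1}$ and to which Lemma~\ref{lemma:elliptical-potential} applies. In the \poGAMBITTS{} analysis (Lemma~\ref{lemma:regret-bound-linear-second-stage}) this step is immediate because the treatment law is fixed and known, so the realized embeddings under the chosen and optimal actions are conditionally equidistributed and one passes to $\lVert Z_t\rVert_{V_{t-1}^{-1}}$ directly; here the law $\xi_{x,a}^{\theta_1}$ is itself being learned, and the realized embeddings $Z_t^\ast, Z_t$ under the optimal and chosen actions are \emph{not} conditionally equidistributed, since the sampled treatment parameter couples the action to the believed law while the environment draws from the true law. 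I would therefore pass from $\widehat\psi_{X_t,A_t}$ to $Z_t$ at the cost of a martingale remainder and control that remainder, together with the coupling gap between the sampled and true treatment laws, using Freedman's inequality (Lemma~\ref{lemma:freedman}); its variance-adaptive form keeps these corrections at the order of the $\sigma_1\sqrt{CKT}$ and $\sigma_2 d\sqrt{T}$ terms already present rather than inflating the bound. Summing the two contributions and taking $\delta = T^{-2}$ gives $\text{BR}^{fo:lin}_T \le \widetilde{\mathcal{O}}(\sigma_1\sqrt{CKT} + \sigma_2 d\sqrt{T})$.
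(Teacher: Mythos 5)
Your high-level architecture matches the paper's: both proofs decouple $A_t$ and $A_t^\ast$ via the Thompson sampling property, split the confidence width into a treatment-fluctuation piece and a reward-estimation piece, control the first per context--action pair (yielding $\sigma_1\sqrt{CKT}$ after Cauchy--Schwarz over the $CK$ pairs with $\sum_{x,a}N_{x,a}(T)=T$), control the second via Lemma~\ref{lemma:concentration-linear-parameter} and the elliptical potential lemma (yielding $\sigma_2 d\sqrt{T}$), and take $\delta$ polynomially small in $T$. The implementations differ in one essential place, and that place is exactly the step you defer: you build an action-level UCB from the \emph{empirical mean embeddings} $\widehat\psi_{x,a}$, whereas the paper adopts a random-table construction ($Z_t(x,a)\sim\xi^{\theta_1}_{x,a}$ drawn fresh for every pair at every round, with $Z_t=Z_t(X_t,A_t)$) and defines the UCB $U_t(z)=\langle z,\widehat\theta_{t-1}\rangle+\beta\Vert z\Vert_{V_{t-1}^{-1}}$ directly on \emph{realized} embeddings. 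In the paper's route the elliptical potential applies to $\sum_t\Vert Z_t\Vert_{V_{t-1}^{-1}}$ with no bridging at all, and the price is the residual terms $\langle\psi_{X_t,A_t^\ast}-Z_t(X_t,A_t^\ast),\theta_2\rangle$ and $\langle Z_t-\psi_{X_t,A_t},\theta_2\rangle$, which are genuine per-pair martingale difference sequences with conditional variance at most $\sigma_1^2$ (by the subgaussianity of $m_2(Z,x;\theta_2)$); Freedman's inequality plus a union bound over the $CK$ pairs is what produces $\sigma_1\sqrt{CKT}$ there. So Freedman is the right tool in the paper, but for the treatment-fluctuation term, not for the step you invoke it for.

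The genuine gap in your proposal is the bridge from $\sum_t\beta\Vert\widehat\psi_{X_t,A_t}\Vert_{V_{t-1}^{-1}}$ to something the elliptical potential controls. This is not a martingale-deviation problem, so Freedman does not apply: $\widehat\psi_{X_t,A_t}-Z_t$ is not conditionally centered, and the quantity of interest is a $V_{t-1}^{-1}$-weighted norm of an $\mathcal{F}_{t-1}$-measurable vector, not a sum of centered increments. The naive triangle inequality $\Vert\widehat\psi_{x,a}\Vert_{V_{t-1}^{-1}}\leq\Vert\psi_{x,a}\Vert_{V_{t-1}^{-1}}+\Vert\widehat\psi_{x,a}-\psi_{x,a}\Vert_2$ fails quantitatively: the correction $\beta\sum_t\Vert\widehat\psi_{X_t,A_t}-\psi_{X_t,A_t}\Vert_2=\widetilde{\mathcal{O}}\paren{\sigma_2\sqrt{d}\cdot B\sqrt{d}\cdot\sqrt{CKT}}$ reintroduces a $\sqrt{CK}$ factor into the reward term and breaks the claimed bound. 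A correct repair exists but requires an additional idea you have not supplied --- e.g., convexity of $z\mapsto\Vert z\Vert_{V_{t-1}^{-1}}$ gives $\Vert\widehat\psi_{x,a}\Vert_{V_{t-1}^{-1}}\leq\frac{1}{N_{x,a}(t-1)}\sum_{s<t:(X_s,A_s)=(x,a)}\Vert Z_s\Vert_{V_{t-1}^{-1}}\leq\frac{1}{N_{x,a}(t-1)}\sum_{s}\Vert Z_s\Vert_{V_{s}^{-1}}$ (using $V_{t-1}\succeq V_s$), and exchanging the order of summation costs only a harmonic-sum factor $\mathcal{O}(\log T)$ before the elliptical potential lemma is applied --- or, more simply, switching to the paper's realized-embedding UCB, which makes the issue disappear. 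As written, the proposal correctly identifies where the difficulty lies but does not close it, and the tool it names for closing it is the wrong one.
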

\addtocounter{theorem}{-1}
\endgroup

\begin{proof}[Proof of Theorem~\ref{thm:BRT_foGAMBITTS}]
We are interested in bounding the Bayesian regret defined as
$$
\text{BR}_T = \E{ \sum_{t = 1}^T \Ee{\xi_{X_t, A_t^\ast}^{\theta_1}}{m_2(Z_t, X_t ; \theta_2)} - \Ee{\xi_{X_t, A_t}^{\theta_1}}{ m_2(Z_t, X_t ; \theta_2)} }
$$
where $A_t^\ast = \displaystyle\argmax_{a \in \mathcal{A}} \Ee{\xi_{X_t, a}^{\theta_1}}{m_2(Z, X_t ; \theta_2)\mid X_t, \theta_2}$ is the best action in context $X_t$.
Under the linear reward model $m_2(z, x ; \theta_2) = \langle z, \theta_2 \rangle$, we can write the Bayesian regret as
$$
\text{BR}_T = \E{ \sum_{t = 1}^T \langle \psi_{X_t, A_t^\ast} - \psi_{X_t, A_t}, \theta_2 \rangle }
$$
where we define $\psi_{x, a} \coloneqq \Ee{\xi_{x, a}^{\theta_1}}{Z}$, suppressing the superscript $\theta_1$ for convenience.

Since the agent observes the embedding $Z_t$ and receives a noisy reward of the form $Y_t = \langle Z_t, \theta_2 \rangle + \eta_t$, it can estimate $\theta_2$ by regressing $Y_t$ on $Z_t$.
Consider the following estimate for $\theta_2$:
$$
\widehat\theta_t = V_t^{-1} \sum_{s = 1}^t Y_s Z_s
$$
where $V_t = I + \sum_{s = 1}^t Z_s Z_s^\top$.
Lemma~\ref{lemma:concentration-linear-parameter} guarantees $\Vert \theta_2 - \widehat\theta_t \Vert_{V_t} \leq \beta$ for all $t = 1, \dots, T$ with probability at least $1 - \delta$, where $\beta$, dependent on $\delta$, is defined in the lemma.
Let $E_t$ be the event that $\Vert \theta_2 - \widehat\theta_{t - 1} \Vert_{V_{t - 1}} \leq \beta$, and let $E := \cap_{t = 1}^T E_t$.
Setting $\delta=\frac{1}{T}$ in the definition of $\beta$ gives $\Prb{E}\geq 1-\frac{1}{T}$. We can then bound the Bayesian regret under $E^c$ and $E$ separately, as follows.
\begin{align}
\text{BR}_T
&= \E{\sum_{t = 1}^T \langle \psi_{X_t, A_t^\ast} - \psi_{X_t, A_t}, \theta_2 \rangle } \notag \\
&= \E{\Ind{\set{E^c}} \sum_{t = 1}^T \langle \psi_{X_t, A_t^\ast} - \psi_{X_t, A_t}, \theta_2 \rangle } + 
\E{\Ind{\set{E}} \sum_{t = 1}^T \langle \psi_{X_t, A_t^\ast} - \psi_{X_t, A_t}, \theta_2 \rangle }\notag \\
&\leq 2 + \mathbb{E} \left[\Ind{\set{E}} \sum_{t = 1}^T \langle \psi_{X_t, A_t^\ast} - \psi_{X_t, A_t}, \theta_2 \rangle \right] \notag \\
&\leq 2 + \mathbb{E} \left[\sum_{t = 1}^T \Ind{\set{E_t}} \langle \psi_{X_t, A_t^\ast} - \psi_{X_t, A_t}, \theta_2 \rangle \right]. \label{eqn:fogambitts-intermediate-bound}
\end{align}

Before proceeding with the bound, we clarify the probability space governing the random variables $X_t, A_t, Z_t, Y_t$. We adopt the random table model (see Chapter 4.6 in Lattimore and Szepesvári [2020]) and, conditional on $\theta_1$, define the embedding $Z_t(x, a)\sim\xi_{x, a}^{\theta_1}$ independently for each $t = 1, \dots, T$ and $(x, a) \in \mathcal{X} \times \mathcal{A}$ \cite{lattimore2020bandit}.
We define $Z_t = Z_t(X_t, A_t)$ to be the embedding observed by the agent.
This random table model provides a convenient framework for the analysis that follows.

To proceed, we define the upper confidence bound function $U_t : \mathcal{Z} \rightarrow \mathbb{R}$ by
$$
U_t(z) = \langle z, \widehat\theta_{t - 1} \rangle + \beta \Vert z \Vert_{V_{t - 1}^{-1}}.
$$
Then, under the event $E_t$, it follows that, for all $(x, a) \in \mathcal{X} \times \mathcal{A}$,
$$
\begin{aligned}
\langle \psi_{x, a}, \theta_2 \rangle - U_t(Z_t(x, a))
&= \langle \psi_{x, a}, \theta_2 \rangle - \langle Z_t(x, a), \widehat\theta_{t - 1} \rangle - \beta \Vert Z_t(x, a) \Vert_{V_{t - 1}^{-1}} \\
&= \langle \psi_{x, a} - Z_t(x, a), \theta_2 \rangle + \langle Z_t(x, a), \theta_2 - \widehat\theta_{t - 1} \rangle - \beta \Vert Z_t(x, a) \Vert_{V_{t - 1}^{-1}} \\
&\leq \langle \psi_{x, a} - Z_t(x, a), \theta_2 \rangle.
\end{aligned}
$$
The inequality follows from using Cauchy-Schwarz to bound $\langle Z_t(x, a), \theta_2 - \widehat\theta_{t - 1} \rangle \leq \Vert Z_t(x, a) \Vert_{V_{t - 1}^{-1}} \Vert \theta_2 - \widehat\theta_{t - 1} \Vert_{V_{t - 1}}$. Furthermore, under the event $E_t$, $\Vert \theta_2 - \widehat\theta_{t - 1} \Vert_{V_{t - 1}} \leq \beta$.
Similarly,
$$
\begin{aligned}
U_t(Z_t(x, a)) - \langle \psi_{x, a}, \theta_2 \rangle
&= \langle Z_t(x, a), \widehat\theta_{t - 1} \rangle + \beta \Vert Z_t(x, a) \Vert_{V_{t - 1}^{-1}} - \langle \psi_{x, a}, \theta \rangle \\
&= \langle Z_t(x, a), \widehat\theta_{t - 1} - \theta \rangle + \beta \Vert Z_t(x, a) \Vert_{V_{t - 1}^{-1}} - \langle \psi_{x, a} - Z_t(x, a), \theta \rangle \\
&\leq \langle Z_t(x, a) - \psi_{x, a}, \theta \rangle + 2 \beta \Vert Z_t(x, a) \Vert_{V_{t - 1}^{-1}}.
\end{aligned}
$$

Since $U_t(\cdot)$ and $X_t$ are $\mathcal{F}_{t - 1}$-measurable, \eqref{eqn:ts-property} implies
$$
\Prb{U_t(Z_t(X_t, A_t^\ast)) = \cdot \mid \mathcal{F}_{t - 1}} = \Prb{U_t(Z_t(X_t, A_t)) = \cdot \mid \mathcal{F}_{t - 1}},
$$
and that 
$$\Ee{t - 1}{U_t(Z_t(X_t, A_t^\ast))} = \Ee{t - 1}{U_t(Z_t(X_t, A_t))},$$ 
where, as before, $\Ee{t - 1}{\cdot}$ denotes $\E{ \cdot \mid \mathcal{F}_{t - 1}}$.
Hence, recalling the definition $Z_t = Z_t(X_t, A_t)$, 
$$
\begin{aligned}
&\mathbb{E}_{t - 1}\left[\Ind{\set{E_t}} \langle \psi_{X_t, A_t^\ast} - \psi_{X_t, A_t}, \theta_2 \rangle \right] \\
&\hspace{15mm}= \Ind{\set{E_t}} \Ee{t - 1}{\langle \psi_{X_t, A_t^\ast}, \theta_2 \rangle - U_t(Z_t(X_t, A_t^\ast)) + U_t(Z_t(X_t, A_t)) - \langle \psi_{X_t, A_t}, \theta_2 \rangle } \\
&\hspace{15mm}\leq \Ind{\set{E_t}} \Ee{t - 1}{\langle \psi_{X_t, A_t^\ast} - Z_t(X_t, A_t^\ast), \theta_2 \rangle + \langle Z_t - \psi_{X_t, A_t}, \theta_2 \rangle + 2\beta \Vert Z_t \Vert_{V_{t - 1}^{-1}} }.
\end{aligned}
$$
Continuing the regret bound from \eqref{eqn:fogambitts-intermediate-bound} and using $\Ee{t - 1}{\Ind{\set{E_t}} \langle \psi_{X_t, A_t^\ast} - \psi_{X_t, A_t}, \theta_2 \rangle } \leq 2$, we observe
$$
\begin{aligned}
\text{BR}_T &\leq 2 + \E{ \sum_{t = 1}^T \Ee{t - 1}{\Ind{\set{E_t}} \langle \psi_{X_t, A_t^\ast} - \psi_{X_t, A_t}, \theta_2 \rangle } } \\
&\leq 2 + \mathbb{E} \Bigg[ \underbrace{\sum_{t = 1}^T \Ind{\set{E_t}} \langle \psi_{X_t, A_t^\ast} - Z_t(X_t, A_t^\ast), \theta_2 \rangle}_{(a)} \Bigg] \\
&\hspace{15mm}+ \mathbb{E} \Bigg[ \underbrace{\sum_{t = 1}^T \Ind{\set{E_t}} \langle Z_t - \psi_{X_t, A_t}, \theta_2 \rangle}_{(b)} \Bigg] + 2 \beta \mathbb{E} \Bigg[ \underbrace{\sum_{t = 1}^T (\Vert Z_t \Vert_{V_{t - 1}^{-1}} \wedge 3)}_{(c)} \Bigg] \\
\end{aligned}
$$
To bound term $(a)$, we express it as a summation over $(x, a)$:
$$
\begin{aligned}
&\sum_{t = 1}^T \Ind{\set{E_t}} \langle \psi_{X_t, A_t^\ast} - Z_t(X_t, A_t^\ast), \theta_2 \rangle \\
&\hspace{15mm}= \sum_{x \in \mathcal{X}} \sum_{a \in \mathcal{A}} \sum_{t = 1}^T \Ind{\set{E_t}} \Ind{\set{X_t = x, A_t^\ast = a}} \langle \psi_{x, a} - Z_t(x, a), \theta_2 \rangle.
\end{aligned}
$$
For any given context-action pair $(x, a) \in \mathcal{X} \times \mathcal{A}$, the sequence $\set{W_t(x, a)}_{t = 1}^T,$ defined by
$$
W_t(x, a) = \Ind{\set{E_t}} \Ind{\set{X_t = x, A_t^\ast = a}}\langle Z_t(x, a) - \psi_{x, a}, \theta_2 \rangle
$$
is a martingale difference sequence adapted to 
$$
\mathcal{G}_t = \sigma\paren{\theta_1, \theta_2,  X_1, A_1, \{Z_1(x, a) \}_{x, a}, Y_1, \dots, X_t, A_t, \{Z_t(x, a)\}_{x, a}, Y_t, X_{t + 1}, A_{t + 1}}.
$$
This follows because $\Ind{\set{E_t}}$ and $\Ind{\set{X_t = x, A_t^\ast = a}}$ are $\mathcal{G}_{t - 1}$-measurable, implying
$$
\begin{aligned}
&\E{ \Ind{\set{E_t}} \Ind{\set{X_t = x, A_t^\ast = a}}\langle Z_t(x, a) - \psi_{x, a}, \theta_2 \rangle \mid \mathcal{G}_{t - 1} } \\
&\hspace{15mm}= \Ind{\set{E_t}} \Ind{\set{X_t = x, A_t^\ast = a}} \langle \E{Z_t(x, a) - \psi_{x, a} \mid \mathcal{G}_{t - 1}}, \theta_2 \rangle \\
&\hspace{15mm}= 0.
\end{aligned}
$$
Furthermore, $\vert W_t(x, a) \vert \leq 2$ for all $t = 1, \dots, T$, and
$$
\begin{aligned}
\sum_{t = 1}^T \E{ W_t(x, a)^2 \mid \mathcal{G}_{t - 1} }
&\leq \sum_{t = 1}^T \Ind{\set{X_t = x, A_t^\ast = a}} \E{ \paren{\langle Z_t(x, a) - \psi_{x, a}, \theta_2 \rangle}^2 \mid \mathcal{G}_{t - 1} } \\
&\leq \sigma_1^2 \sum_{t = 1}^T \Ind{\set{X_t = x, A_t^\ast = a}} \\
&= \sigma_1^2 N_T(x, a),
\end{aligned}
$$
where $N_T(x, a) = \sum_{t = 1}^T \Ind{\set{X_t = x, A_t^\ast = a}}$, as before.
The first inequality follows from the assumption that $m_2(Z, x ; \theta_2)$ is $\sigma_1$-subgaussian.
Hence, Freedman's inequality (Lemma~\ref{lemma:freedman}) coupled with a union bound over $(x, a) \in \mathcal{X} \times \mathcal{A}$, guarantees that, with probability at least $1 - \delta$, 
$$
\sum_{t = 1}^T W_t(x, a) \leq \sigma_1 \sqrt{2 N_T(x, a) \log\paren{\frac{2 CK}{\delta}}} + \frac{4}{3} \log\paren{\frac{2 CK}{\delta}}.
$$
It follows that
$$
\begin{aligned}
\sum_{t = 1}^T \Ind{\set{E_t}} \langle \psi_{X_t, A_t^\ast} - Z_t(X_t, A_t^\ast), \theta_2 \rangle
&= \sum_{x \in \mathcal{X}} \sum_{a \in \mathcal{A}} \sum_{t = 1}^T W_t(x, a) \\
&\leq \sum_{x \in \mathcal{X}} \sum_{a \in \mathcal{A}} \paren{\sigma_1 \sqrt{2 N_T(x, a) \log\paren{\frac{2 CK}{\delta}}} + \frac{4}{3} \log\paren{\frac{2 CK}{\delta}}} \\
&\leq \sigma_1 \sqrt{2 CK T \log\paren{\frac{2 CK}{\delta}}} + \frac{4}{3} CK \log\paren{\frac{2 CK}{\delta}}.
\end{aligned}
$$
where the last inequality is by Cauchy-Schwarz and the fact that $\sum_x \sum_a N_T(x, a) = T$.
Similar analysis, with $A_t$ in place of $A_t^\ast$, yields the following bound on the term $(b)$:
$$
\sum_{t = 1}^T \Ind{\set{E_t}} \langle \psi_{X_t, A_t} - Z_t(X_t, A_t), \theta \rangle
\leq \sigma_1 \sqrt{2 CK T \log\paren{\frac{2 CK}{\delta}}} + \frac{4}{3} CK \log\paren{\frac{2 CK}{\delta}}.
$$
Finally, the last term $(c)$ can be bounded using the elliptical potential lemma (Lemma~\ref{lemma:elliptical-potential}) as
$$
\begin{aligned}
\sum_{t = 1}^T (3 \wedge \Vert Z_t \Vert_{V_{t - 1}^{-1}})
&\leq 3 \sum_{t = 1}^T (1 \wedge \Vert Z_t \Vert_{V_{t - 1}^{-1}}) \\
&\leq 3 \sqrt{T \sum_{t = 1}^T (1 \wedge \Vert Z_t \Vert_{V_{t - 1}^{-1}}^2)} \\
&\leq 3 \sqrt{2d T \log\paren{ 1 + \frac{T}{d} }}.
\end{aligned}
$$
Combining the bounds above, we observe
$$
\begin{aligned}
\text{BR}_T
&\leq 2 + \sigma_1 \sqrt{8 CKT \log\paren{\frac{2 CK}{\delta}}} + \frac{8}{3} CK \log\paren{\frac{2CK}{\delta}} + 6 \beta \sqrt{2d T \log\paren{ 1 + \frac{T}{d} }} \\
&\leq \mathcal{O}\Bigg(\sigma_1 \sqrt{CKT \log\paren{\frac{CA}{\delta}}} + 
\sigma_2 d \sqrt{T \log\paren{ 1 + \frac{T}{d} }\paren{\log\paren{ 1 + \frac{T}{d} } + \log\paren{\frac{1}{\delta} }}} \\
&\hspace{15mm}+ CK \log\paren{\frac{CA}{\delta}}\Bigg).
\end{aligned}
$$
The result follows.
\end{proof}

\subsection{Lower Regret Bound for \foGAMBITTS}\label{app:theory:lower_bound}

In this section, we establish a minimax lower bound on the regret of \foGAMBITTS.

\begin{claim}
The minimax regret of \texttt{foGAMBITTS} satisfies
$$
R_T \geq \Omega\paren{\max\set{ \sigma_1 \sqrt{CKT}, \sigma_2 \sqrt{dT \log K} }}.
$$
\end{claim}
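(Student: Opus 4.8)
The plan is to establish the two arguments of the maximum separately, since a lower bound on $\max\{A,B\}$ follows once we lower-bound each of $A$ and $B$ on its own hard instance. Both bounds are proved by exhibiting a family of GAMBIT environments on which every algorithm suffers the claimed regret, via a standard change-of-measure argument (Bretagnolle--Huber / Pinsker). Throughout I fix the linear reward $m_2(z,x;\theta_2)=\langle z,\theta_2\rangle$, matching the \foGAMBITTS{} setting of Theorem~\ref{thm:BRT_foGAMBITTS}.

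For the first term $\sigma_1\sqrt{CKT}$, I would route all the hardness through the first-stage treatment distribution, making the reward noise irrelevant. Fix $\theta_2=e_1$ and, for each context $x\in\mathcal{X}$, let arm $a$ induce a treatment distribution placing a $\mathcal{N}(\mu_{x,a},\sigma_1^2)$ law on the first coordinate of $Z$ (and zero on the rest), with $\mu_{x,a}\in\{0,\Delta\}$ and $\Delta$ small enough that the support stays in $\mathbb{B}_d(B)$ and $m_2(Z,x;\theta_2)$ remains $\sigma_1$-subgaussian. In each context exactly one arm carries $\mu_{x,a}=\Delta$. Because the agent observes $Z_t$ directly, the per-pull information about the optimal arm comes from a $\mathcal{N}(\mu_{x,a},\sigma_1^2)$ observation, so each context behaves as an independent $K$-armed Gaussian bandit with noise $\sigma_1$. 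The KL divergence between the ``$a$ optimal'' and ``$a'$ optimal'' instances scales as $N_{x,a}\Delta^2/\sigma_1^2$; a Bretagnolle--Huber argument over the $K$ candidate optimal arms, followed by taking $\Delta$ of order $\sigma_1\sqrt{K/T_x}$ (with $T_x$ the expected number of visits to context $x$), gives per-context regret $\Omega(\sigma_1\sqrt{K T_x})$. Summing over contexts subject to $\sum_x T_x=T$ and using concavity (worst case $T_x=T/C$) yields $\Omega(\sigma_1\sqrt{CKT})$.

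For the second term $\sigma_2\sqrt{dT\log K}$, I would collapse the first stage to point masses, $\xi_{x,a}=\delta_{z_a}$ for fixed $z_a\in\mathbb{B}_d(B)$, so that effectively $\sigma_1=0$ and observing $Z_t$ conveys nothing beyond the identity of the chosen arm. The problem then reduces exactly to a $d$-dimensional stochastic linear bandit with finite action set $\{z_a\}_{a\in\mathcal{A}}$, unknown parameter $\theta_2$, and $\sigma_2$-subgaussian reward noise. I would invoke the standard minimax lower bound for finite-armed linear bandits: choosing the $K$ arms and a prior on $\theta_2$ from the usual hypercube/needle-in-the-haystack construction (as in Lattimore and Szepesvári [2020], Chapter 24) \cite{lattimore2020bandit} gives $\Omega(\sigma_2\sqrt{dT\log K})$. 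A single context suffices here, and adding contexts only increases regret, so the bound transfers to the GAMBIT setting.

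Combining the two constructions, any algorithm, and in particular \foGAMBITTS, satisfies $R_T=\Omega(\max\{\sigma_1\sqrt{CKT},\,\sigma_2\sqrt{dT\log K}\})$. The main obstacle is the first term: I must ensure the hardness genuinely resides in the observed embedding rather than the reward channel, and that the constraint that $m_2(Z,x;\theta_2)$ be $\sigma_1$-subgaussian is honored by the constructed treatment distributions while still producing KL gaps of order $\Delta^2/\sigma_1^2$. Additional care is needed to make the per-context instances genuinely independent, so that the information-theoretic bound tensorizes cleanly across the $C$ contexts.
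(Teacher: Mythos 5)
Your proposal follows essentially the same route as the paper: one hard family that pushes all randomness into the first-stage treatment distribution with zero reward noise, so each context becomes an independent $K$-armed bandit at noise scale $\sigma_1$ and a standard change-of-measure argument (summed over contexts with $T/C$ visits each) gives $\Omega(\sigma_1\sqrt{CKT})$; and a second family with deterministic, known treatments that reduces to a finite-armed $d$-dimensional linear bandit, yielding $\Omega(\sigma_2\sqrt{dT\log K})$ from the known lower bound. The only substantive difference is that the paper realizes the first family with a two-point treatment distribution on $\{e_1,\mathbf{0}\}$ rather than your Gaussian $\mathcal{N}(\mu_{x,a},\sigma_1^2)$, which matters because treatments are assumed supported on $\mathbb{B}_d(B)$ and a Gaussian cannot have bounded support; swapping in the two-point (or a truncated) construction fixes this without changing your argument.
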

\begin{proof}
To establish the lower bound, we construct instances in which the agent cannot efficiently resolve uncertainty, leading to provably high regret. We first show $R_T \geq \Omega\paren{\sqrt{CKT}}$.

Fix $K$, $C$ and $d$, and let $\mathcal{A} = \{ 1, \dots, K \}$, $\mathcal{X} = \{1, \dots, C \}$.
Let $\theta_2 = \paren{\sigma_1, 0, \dots, 0} \in \mathbb{R}^d$ be the parameter for the linear reward model. Furthermore, choose the reward noise $\sigma_2 = 0$.

Let $\theta_1 \in \set{\mathcal{A} \cup \{ 0 \}}^{C}$ parameterize the distributions $\xi_{x, a}^{\theta_1}$ such that, for each $(x,a)\in\mathcal{X}\times\mathcal{A}$,
$$
\xi_{x, a}^{\theta_1}\sim\text{ Bernoulli on }\set{ \bm{e}_1, \bm{0} }\text{ with probabilities}
\begin{cases}
\paren{\frac{1}{2},\frac{1}{2}} & \text{ if } a\neq\theta_{1, x}\\
\paren{\frac{1}{2}-\Delta,\frac{1}{2}+\Delta} & \text{ if } a=\theta_{1, x}
\end{cases},
$$
for some $\Delta \in \left[0, \frac{1}{4}\right]$.\footnote{Note that $\set{\mathcal{A} \cup \{ 0 \}}^{C}$ here denotes the $C$-dimensional space with components elements of $\set{\mathcal{A} \cup \{ 0 \}}$, rather than the complement of $\set{\mathcal{A} \cup \{ 0 \}}$.} Here, we use $\bm{e}_1$ to denote the first basis vector in $\mathbb{R}^d$.

Under this environment, the mean reward of each context-action pair is $\sigma_1$-subgaussian.
Also, in context $x$, the mean reward of every action is $\frac{\sigma_1}{2}$ except for one optimal action $a = \theta_{1, x}$, which gives mean reward $\frac{1}{2}\paren{\sigma_1 + \Delta}$.

Because rewards in different contexts are generated from independent distributions, observations collected in one context carry no statistical information about any other context.
Hence, from the learner’s point of view, the problem decomposes into $C$ independent instances of standard $K$-armed bandits, each with horizon $N_T(1), \cdots, N_T(C)$, where $N_T(x)$ denotes the number of rounds in which context $x$ appears.

For a context $x$, the regret lower bound over $N_T(x)$ rounds is $c \sqrt{K N_T(x)}$ for some constant $c$ (see Chapter 15.2 in Lattimore and Szepesvári [2020]) \cite{lattimore2020bandit}.
Summing over $x$, and choosing $N_T(x) = \frac{T}{C}$ to ensure the environment chooses contexts evenly, we obtain
$$
R_T \geq \Omega\paren{ \sum_{x \in \mathcal{X}} \sqrt{K N_T(x)}} = \Omega\paren{\sqrt{CKT}}.
$$

We now show that $R_T \geq \Omega\paren{\sqrt{dT \log K}}$.
Consider an environment where $\xi_{x, a}^{\theta_1}$ is deterministic and known to the learner.
This reduces to a standard contextual linear bandit environment with dimension $d$.
Applying a known lower bound for this setting \cite{li2019nearly}, we obtain
$$
R_T \geq \Omega\paren{\sigma_2 \sqrt{dT \log K}}.
$$

Combining the two lower bounds yields
$$
R_T \geq \Omega\paren{ \max \set{ \sigma_1 \sqrt{CKT}, \sigma_2 \sqrt{dT \log K} }},
$$
as required.

\end{proof}

\section{Further Simulations}\label{app:add_sims}
This appendix presents additional simulations designed to probe specific factors influencing GAMBITTS performance. Unless otherwise specified, all data-generating environments assume a linear outcome model with optimism, formality, and severity as predictors.

\subsection{Variance Decomposition}\label{app:add_sims:var}
Section~\ref{sec:theory} and Appendix~\ref{app:theory} discuss GAMBITTS performance in terms of the relative variance of the treatment and outcome processes. In our setup, the treatment is generated by an LLM and fixed in advance, but we can control the residual variance of the outcome-generative model. This allows us to manipulate the ratio of treatment variance to outcome noise directly. Throughout the main simulations, we held this ratio fixed by setting $\sigma_1=\sigma_2$.\footnote{We computed $\sigma_1$ from the generated text interventions used for outcome simulation; i.e., from the \responseDB{} described in Appendix~\ref{app:sim_des}.} In Figure~\ref{fig:sim:var}, we increase $\sigma_2$ to explore how performance changes under high outcome noise. We scale $\sigma_2$ gradually, up to the level observed when fitting the outcome-generative model on held-out data from the 2023 IHS study.

\begin{figure}[ht]
    \centering
    \begin{subfigure}[t]{0.31\textwidth}
        \centering
        \includegraphics[width=\textwidth]{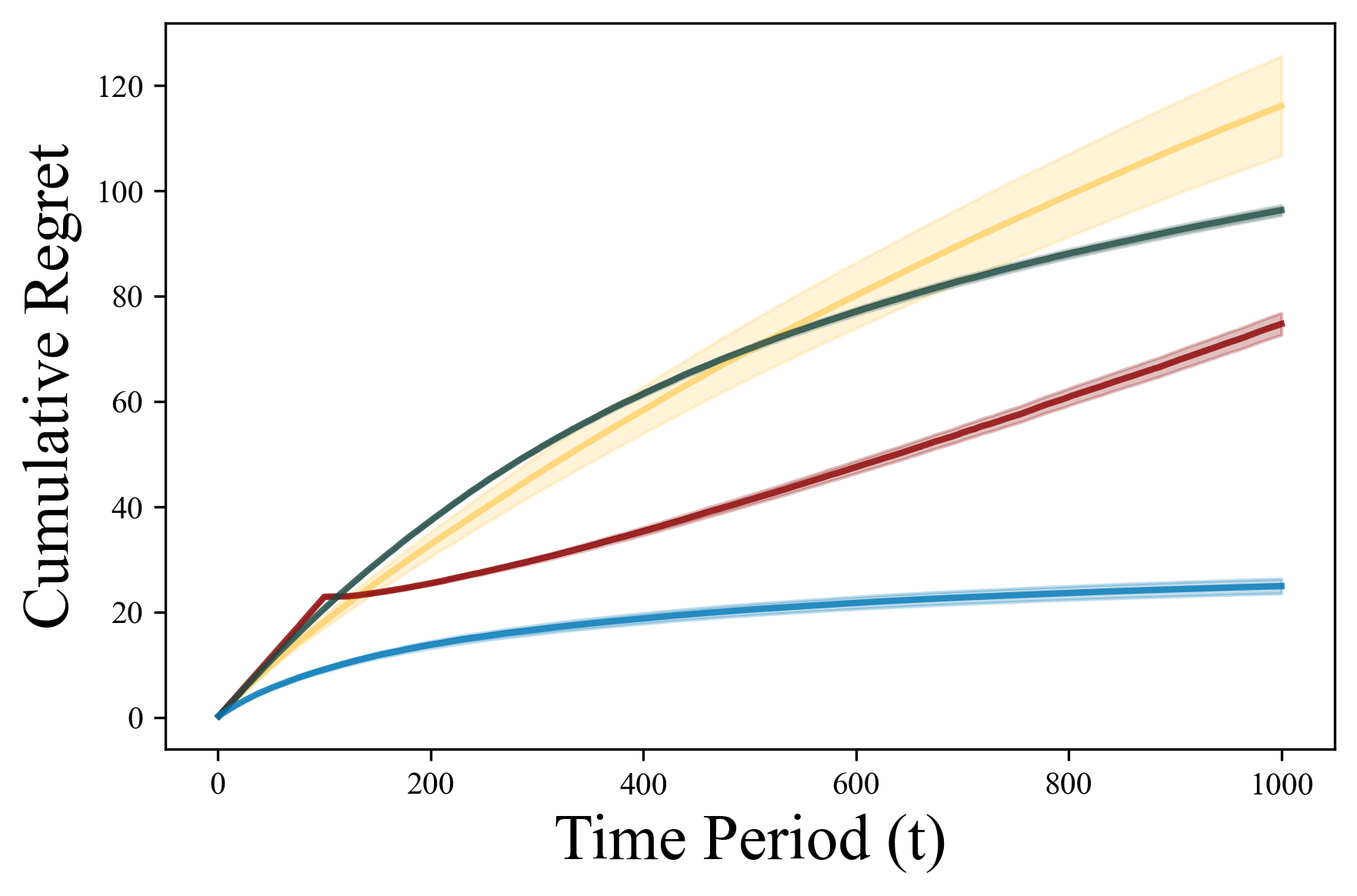}
        \caption{$\sigma_2$=0.71}
    \end{subfigure}
    \hfill
    \begin{subfigure}[t]{0.31\textwidth}
        \centering
        \includegraphics[width=\textwidth]{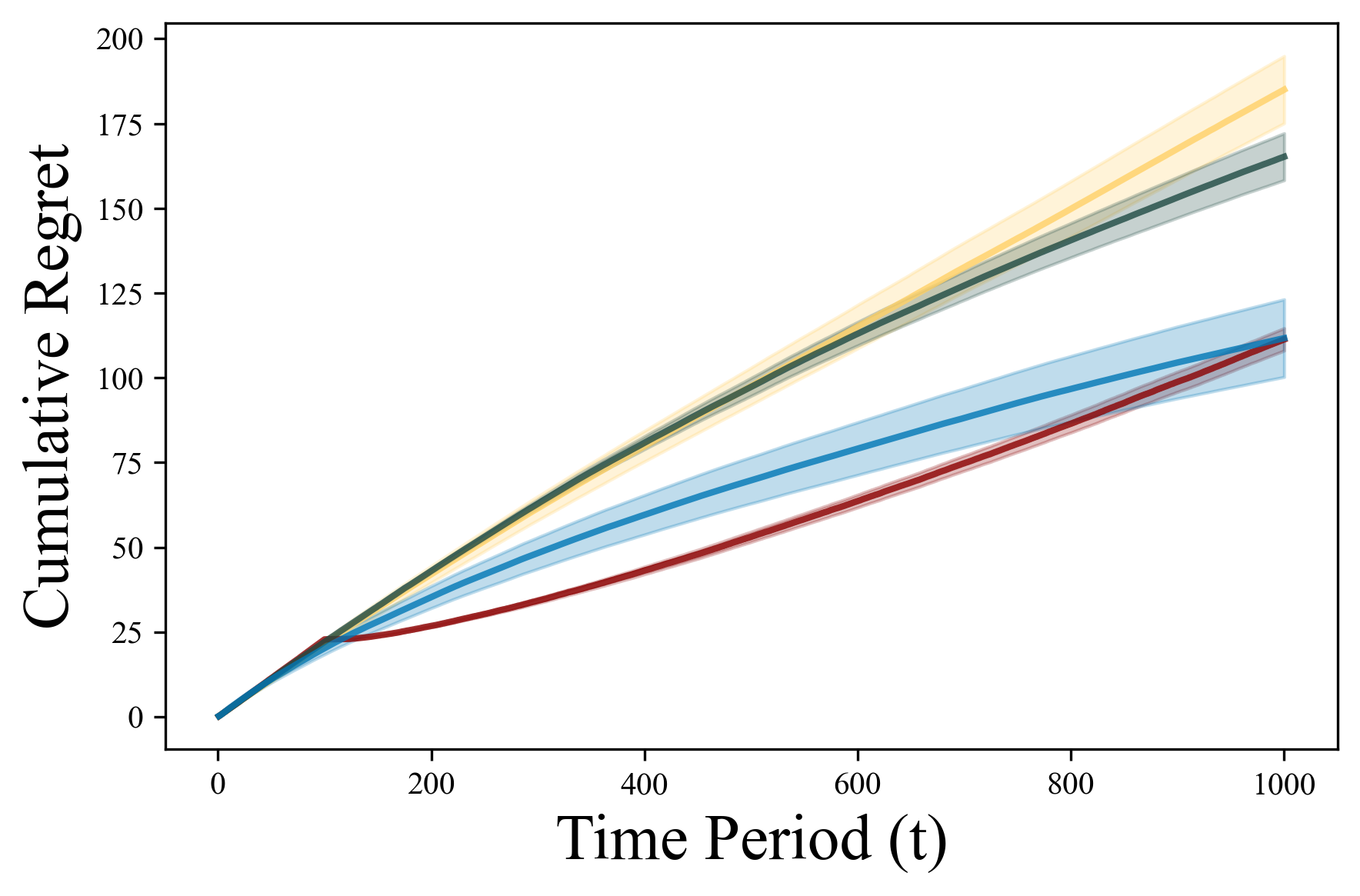}
        \caption{$\sigma_2$=6.68}
    \end{subfigure}
    \hfill
    \begin{subfigure}[t]{0.31\textwidth}
        \centering
        \includegraphics[width=\textwidth]{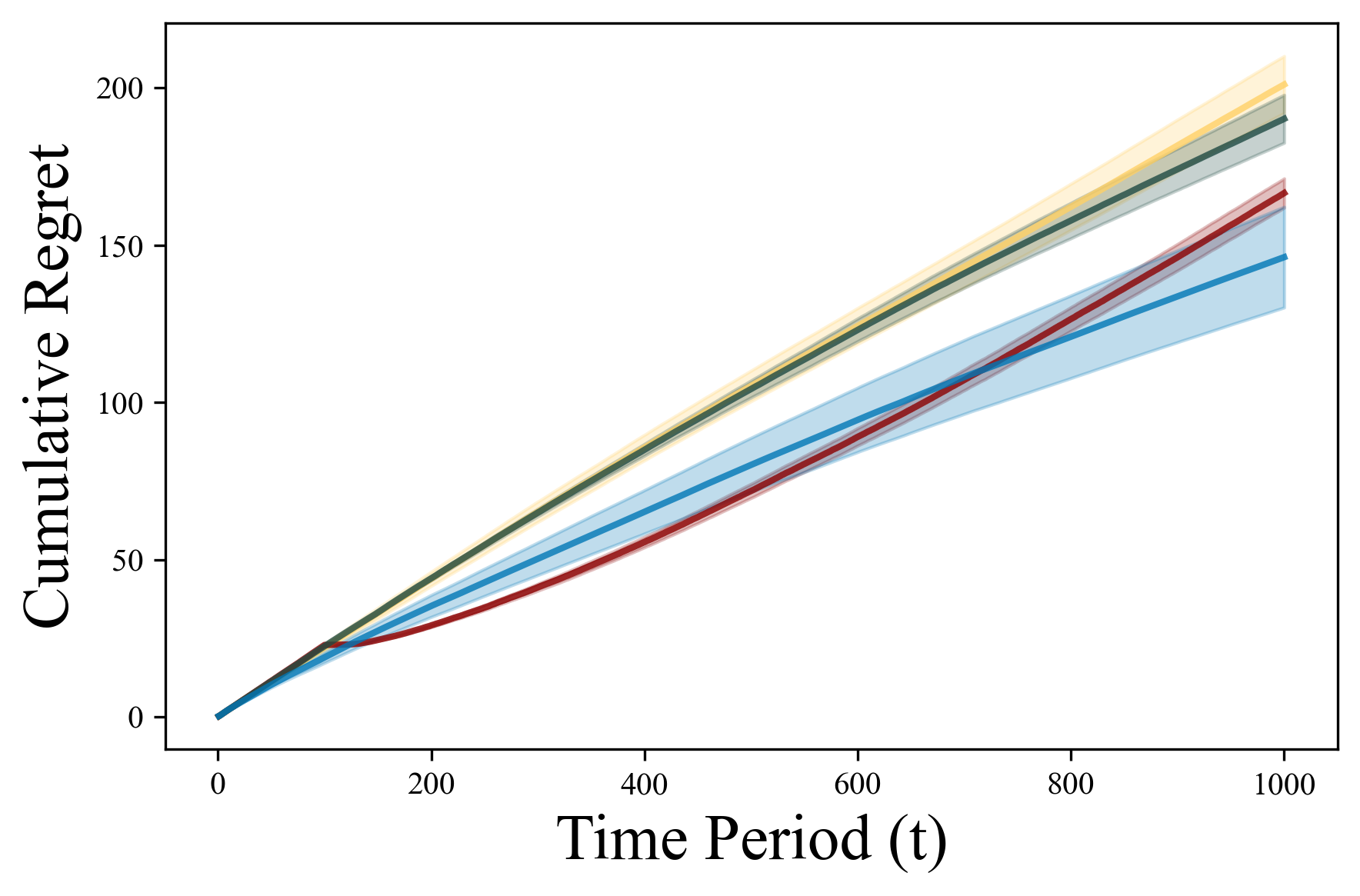}
        \caption{$\sigma_2$=12.64}
    \end{subfigure}

    \vspace{1em}
    
    \begin{subfigure}[t]{0.31\textwidth}
        \centering
        \includegraphics[width=\textwidth]{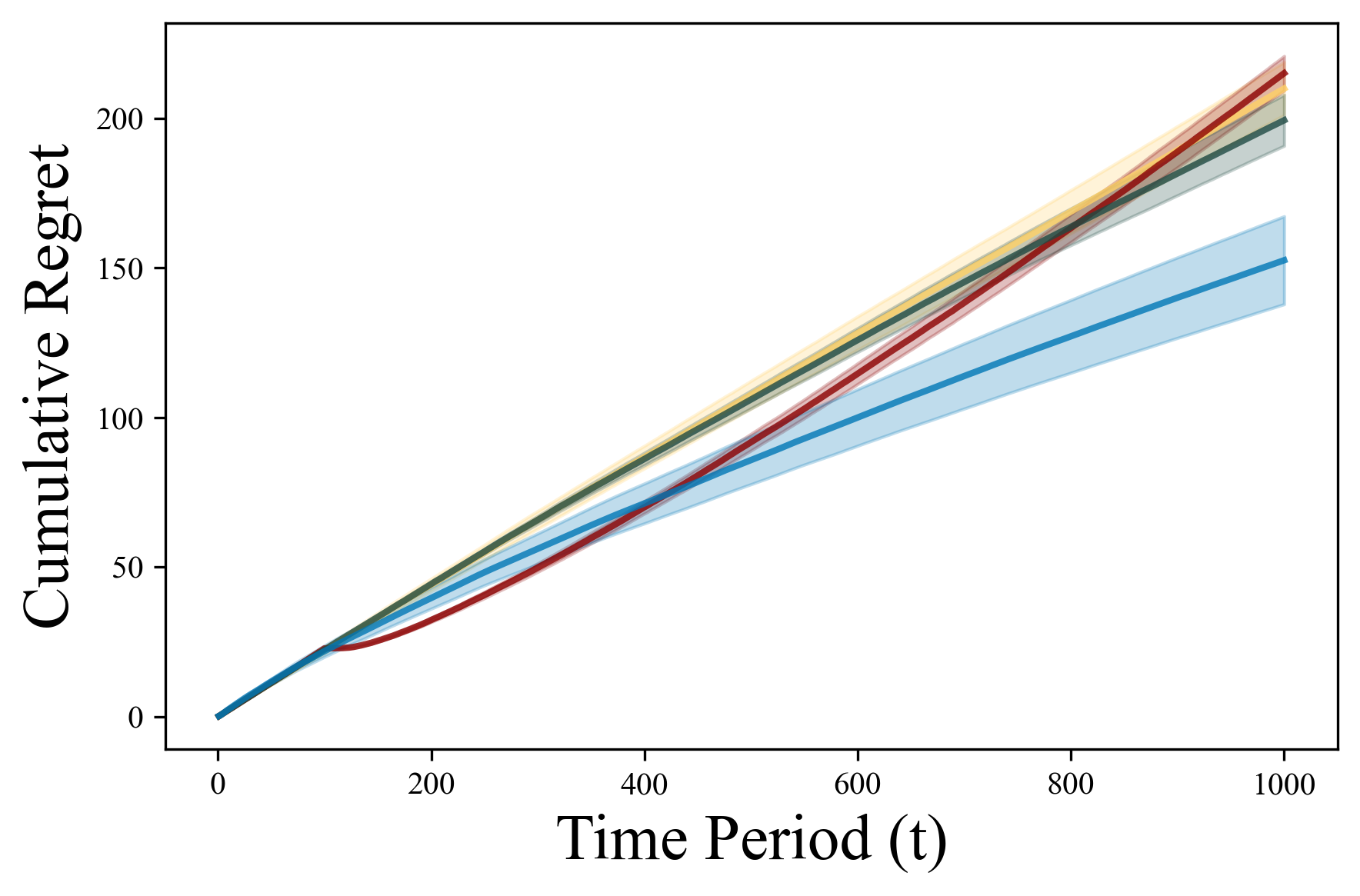}
        \caption{$\sigma_2$=18.61}
    \end{subfigure}
    \hfill
    \begin{subfigure}[t]{0.31\textwidth}
        \centering
        \includegraphics[width=\textwidth]{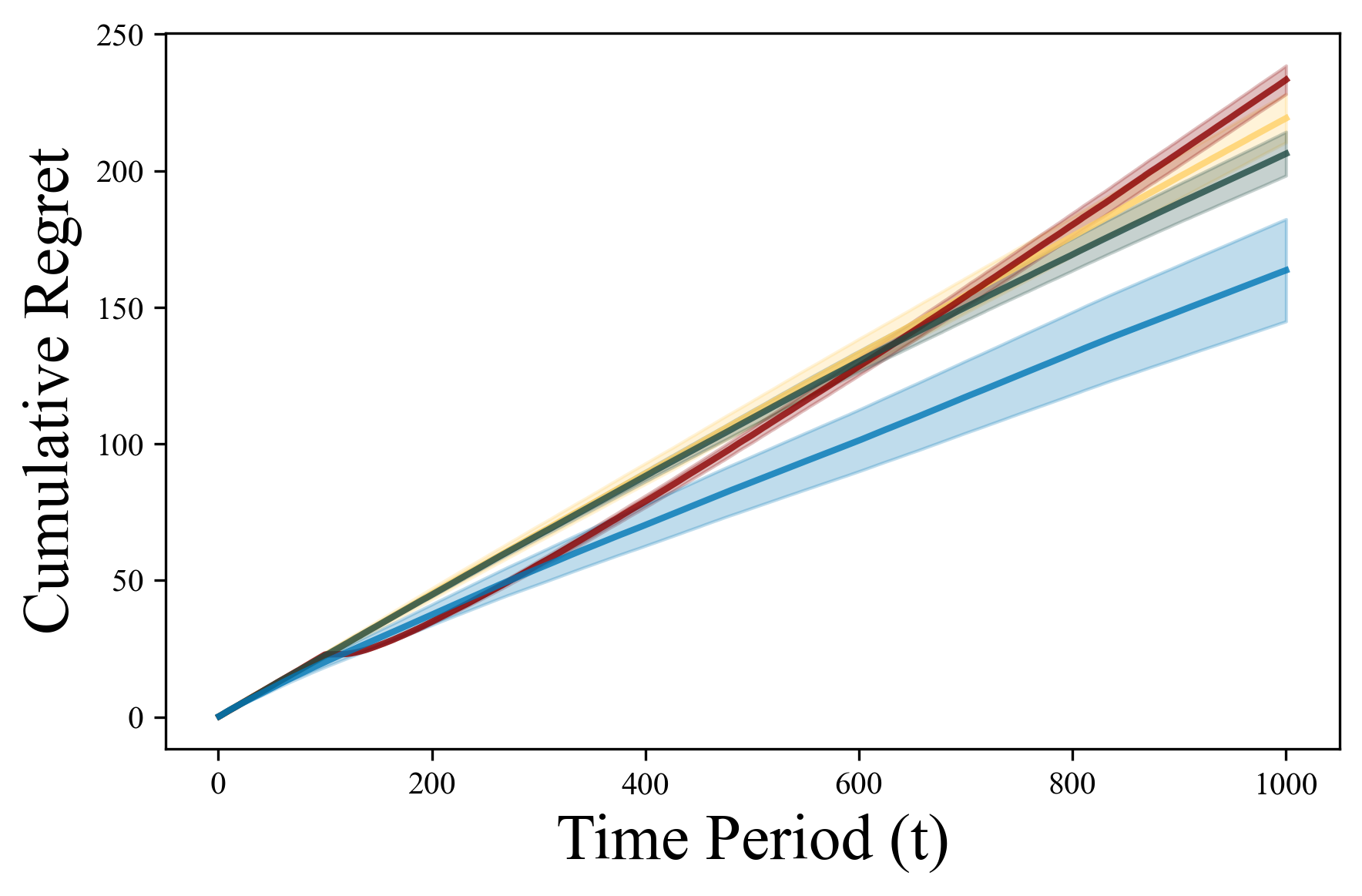}
        \caption{$\sigma_2$=24.57}
    \end{subfigure}
    \hfill
    \raisebox{1em}{
    \begin{subfigure}[t]{0.31\textwidth}
        \centering
        \includegraphics[width=\textwidth]{Figures/Legend_Arms.pdf}
    \end{subfigure}
    }
    \caption{Cumulative Regret Across Outcome Variances}
    \label{fig:sim:var}
\end{figure}

Corollary~\ref{cor:poGAMBITTS_lin} and Theorem~\ref{thm:BRT_foGAMBITTS} suggest that the performance advantage of linear GAMBITTS-based algorithms over standard Thompson sampling increases with the outcome noise level $\sigma_2$. Figure~\ref{fig:sim:var} shows that linear GAMBITTS-based methods achieve lower regret across values of $\sigma_2$; however, as $\sigma_2$ increases, all algorithms exhibit performance deterioration and begin to converge. One possible explanation is that higher outcome noise reduces the overall signal-to-noise ratio, making it more difficult for \textit{any} bandit agent to quickly learn a good policy. Future work may examine this phenomenon over longer time horizons to better understand the asymptotic behavior of the competing methods.

\subsection{Nonlinear Data-Generative Reward Model}\label{app:add_sims:nn}
Section~\ref{sec:sim:mis} examined GAMBITTS performance under style embedding misspecification. Figure~\ref{fig:sim:var_nn} explores a related question, focusing instead on misspecification in the structure of the outcome model. In these experiments, all GAMBITTS agents used the correct style embeddings, but the true outcome-generating process $\E{Y\mid Z, X}$ followed an MLP structure. To assess how GAMBITTS agents respond to this nonlinear specification, we revisit the variance-scaling analysis from Figure~\ref{fig:sim:var}, now under an MLP outcome model.

\begin{figure}[ht]
    \centering
    \begin{subfigure}[t]{0.31\textwidth}
        \centering
        \includegraphics[width=\textwidth]{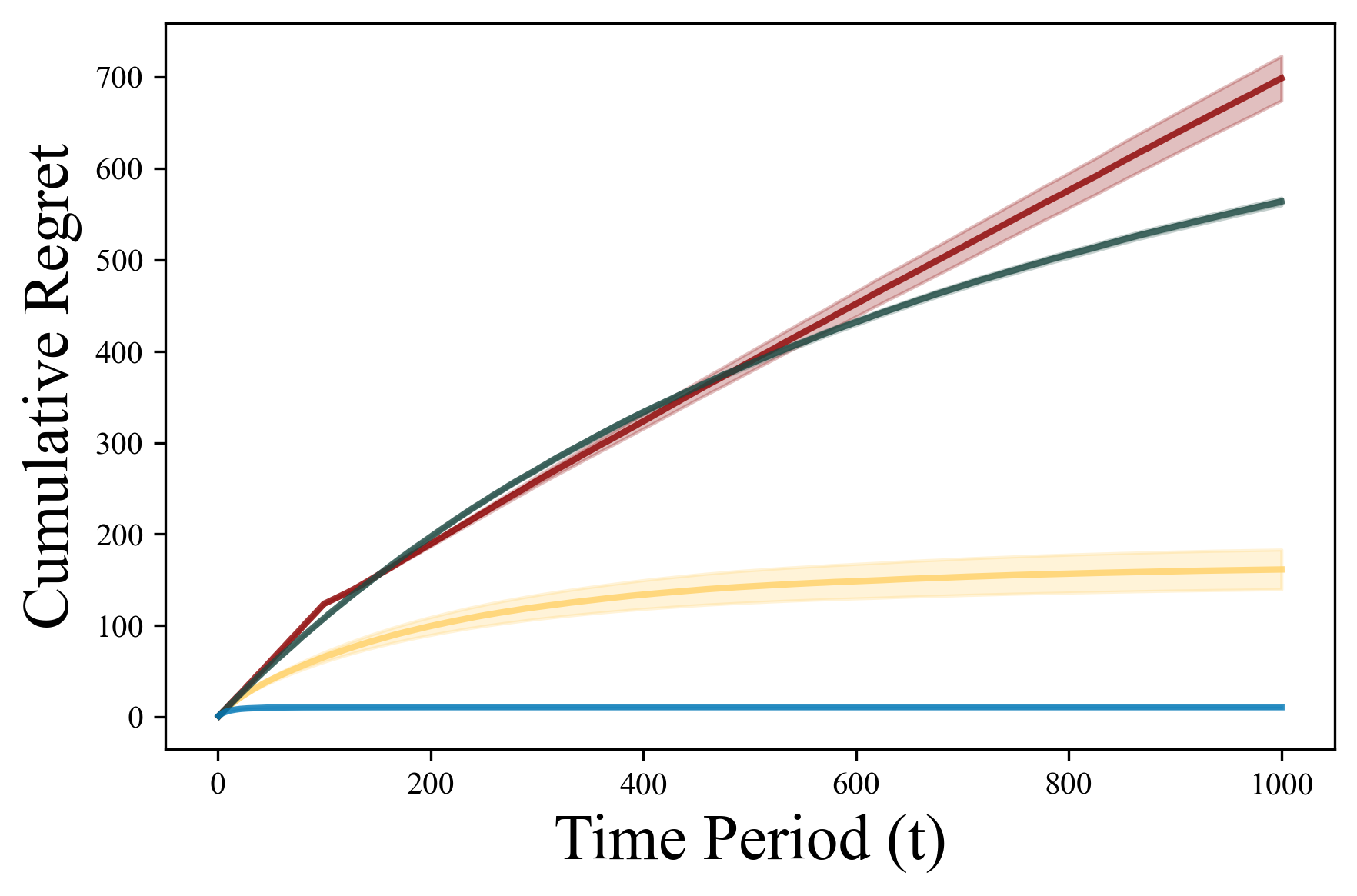}
        \caption{$\sigma_2$=0.71}
    \end{subfigure}
    \hfill
    \begin{subfigure}[t]{0.31\textwidth}
        \centering
        \includegraphics[width=\textwidth]{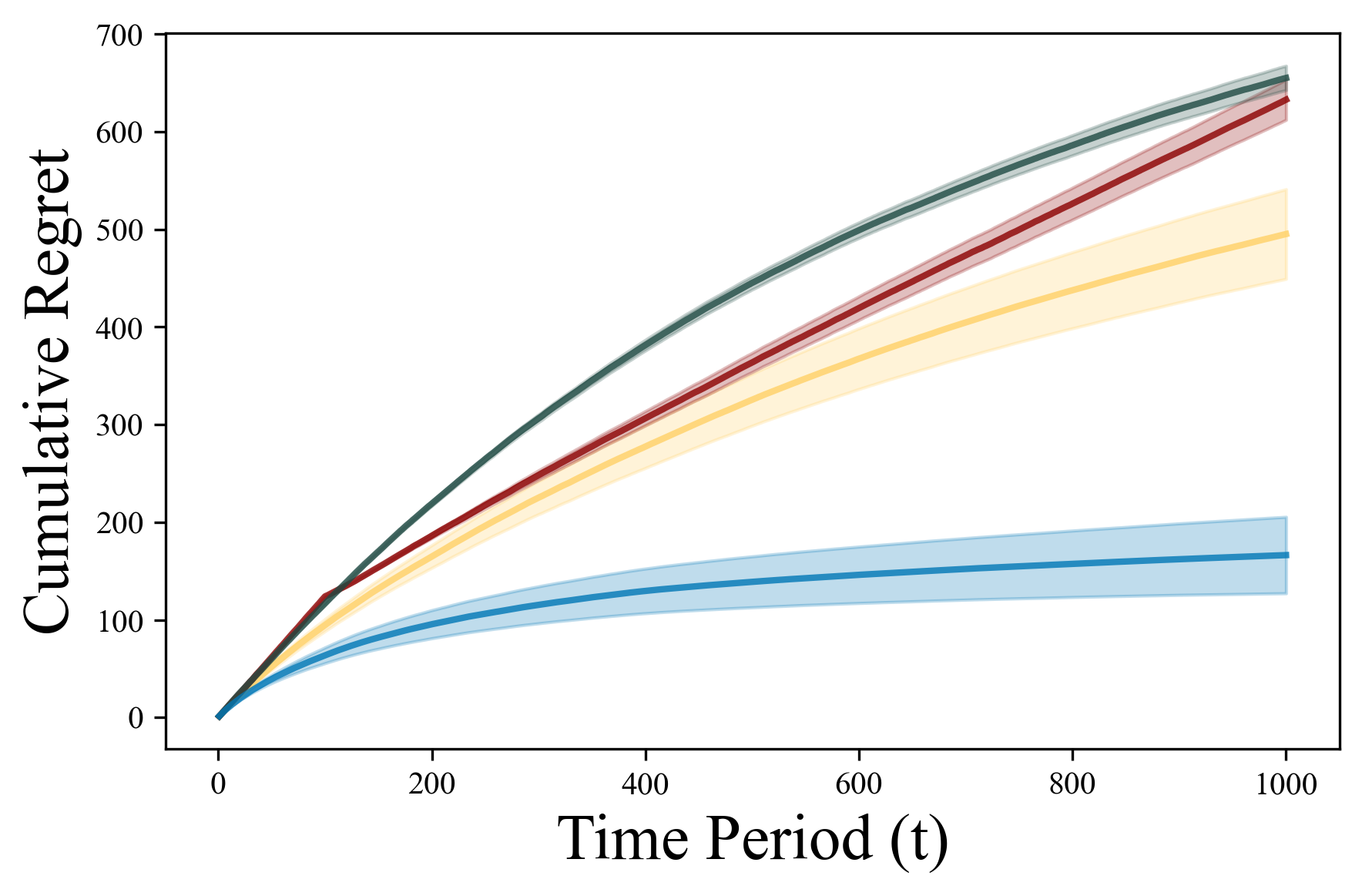}
        \caption{$\sigma_2$=6.68}
    \end{subfigure}
    \hfill
    \begin{subfigure}[t]{0.31\textwidth}
        \centering
        \includegraphics[width=\textwidth]{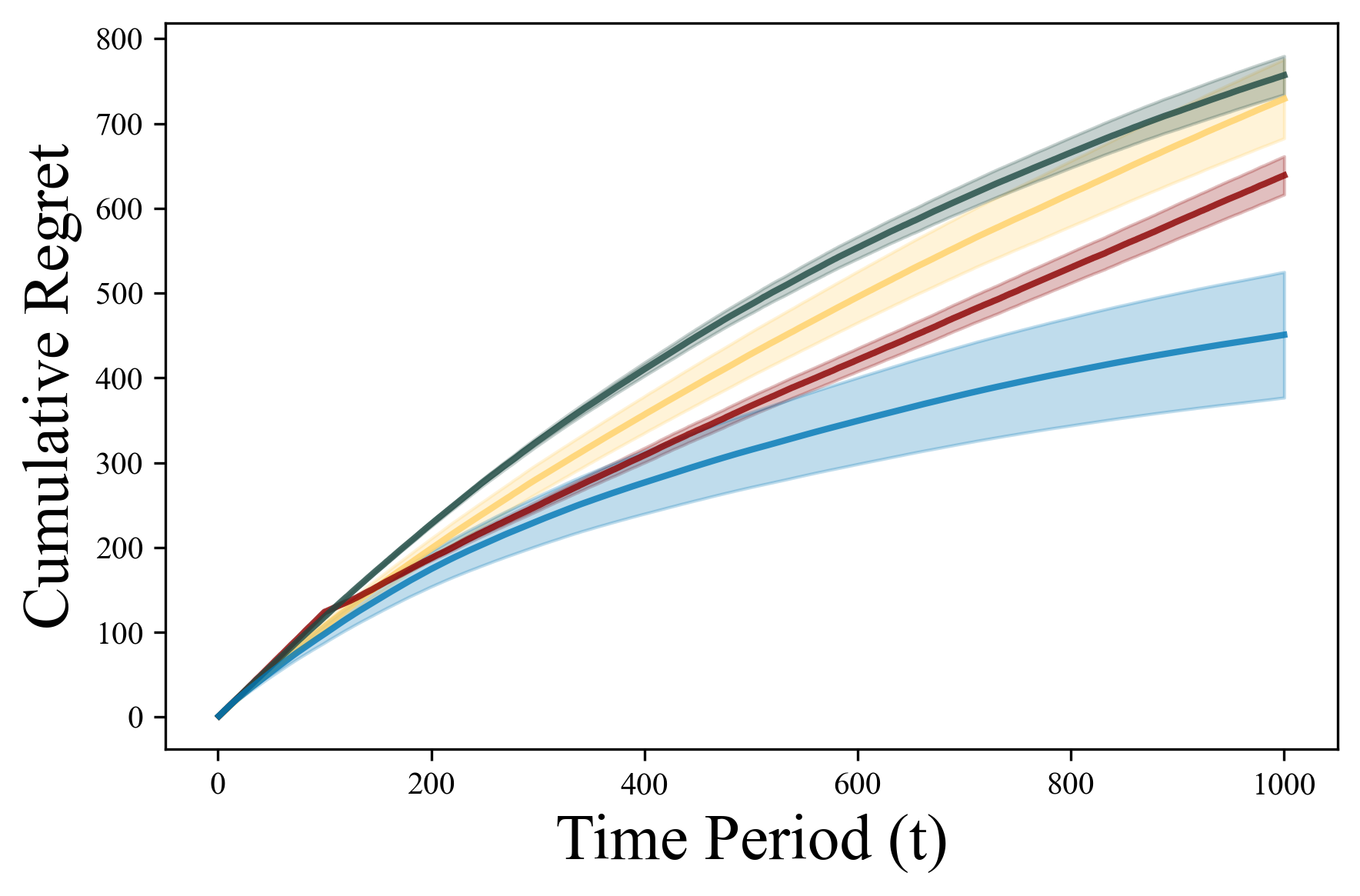}
        \caption{$\sigma_2$=12.64}
    \end{subfigure}

    \vspace{1em}
    
    \begin{subfigure}[t]{0.31\textwidth}
        \centering
        \includegraphics[width=\textwidth]{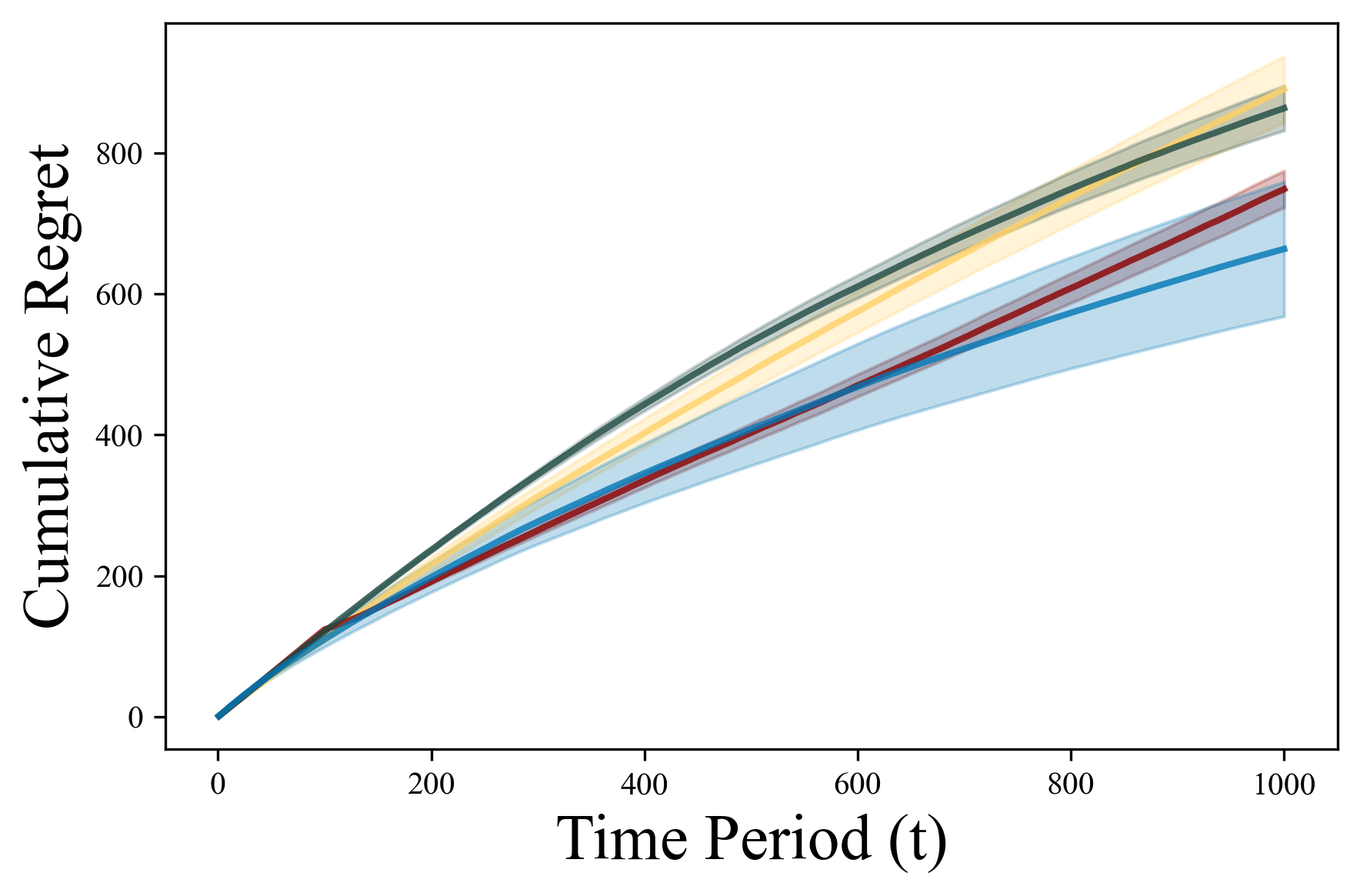}
        \caption{$\sigma_2$=18.61}
    \end{subfigure}
    \hfill
    \begin{subfigure}[t]{0.31\textwidth}
        \centering
        \includegraphics[width=\textwidth]{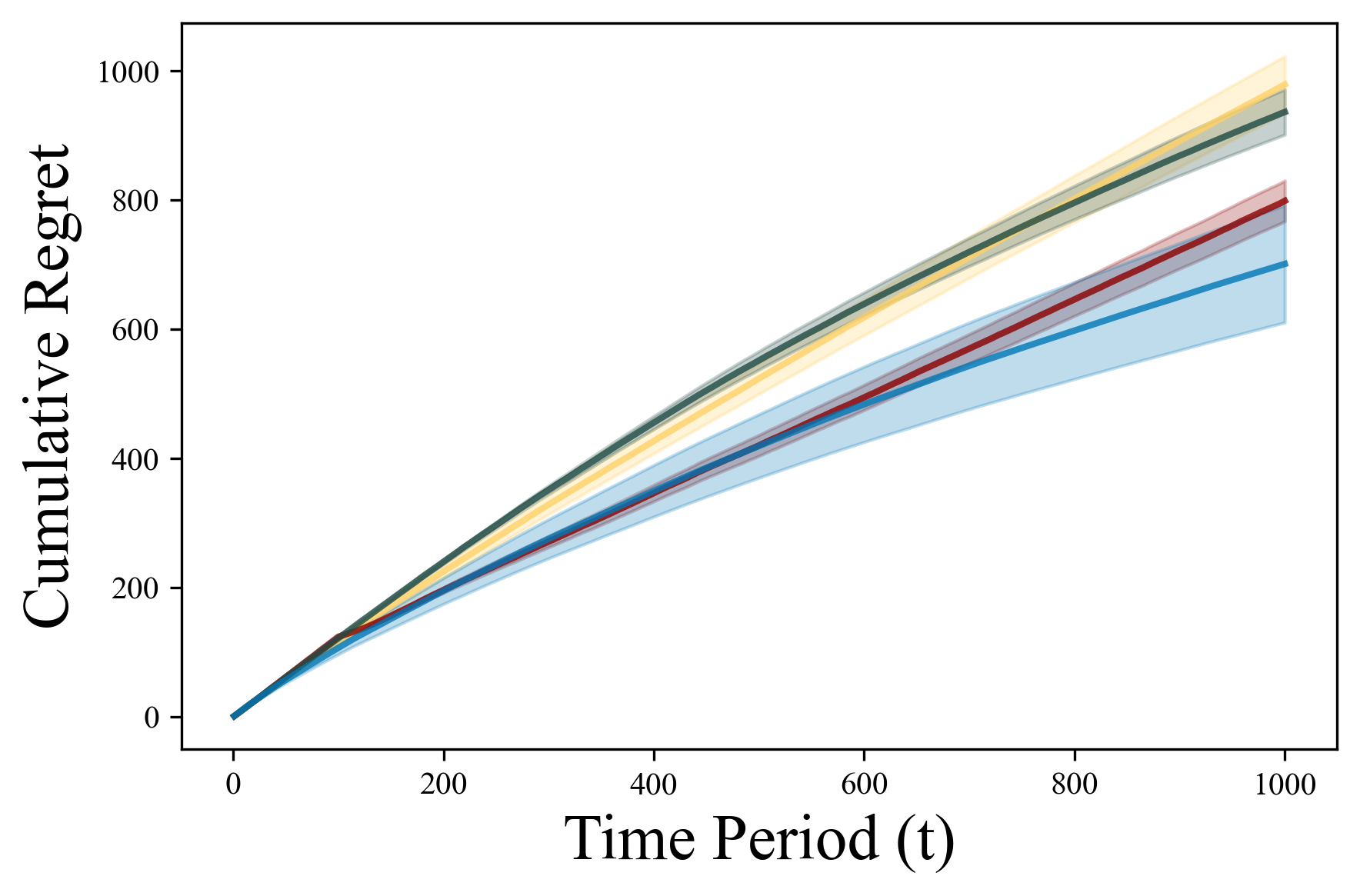}
        \caption{$\sigma_2$=24.57}
    \end{subfigure}
    \hfill
    \raisebox{1em}{
    \begin{subfigure}[t]{0.31\textwidth}
        \centering
        \includegraphics[width=\textwidth]{Figures/Legend_Arms.pdf}
    \end{subfigure}
    }
    \caption{Cumulative Regret Across Outcome Variances: MLP Outcome Generative Model}
    \label{fig:sim:var_nn}
\end{figure}

In a nonlinear data-generating environment, we expected \texttt{ens-poGAMBITTS} to outperform its linear counterparts, given its more flexible outcome model. However, as in other simulation settings, \texttt{ens-poGAMBITTS} exhibits volatile performance. Interestingly, the linear \poGAMBITTS{} agent performs relatively well, despite the misspecification of the outcome model. One possible explanation for this pattern, and for the broader volatility observed in \texttt{ens-poGAMBITTS} performance, is that, within the GAMBITTS framework (and in Thompson sampling more generally), the outcome model is used to support decision-making rather than to produce accurate predictions. As described in Appendix~\ref{app:sim_des:agent_specs}, the neural network models are substantially more parameterized than their linear counterparts, and may require far more data to converge. In contrast, while the linear models are misspecified, they may still capture sufficient structure to support effective decision-making.

The volatility in \texttt{ens-poGAMBITTS} performance suggests that the approach holds promise, but it is not a panacea. Without tuning tailored to the specifics of the application, the agent may perform poorly. Future work could develop principled strategies for hyperparameter selection to make \texttt{ens-poGAMBITTS} more robust to variation in the data-generating process.



\subsection{\poGAMBITTS{} Simulator Access}
As discussed in Section~\ref{sec:meth}, partially online GAMBITTS variants use simulator access to construct $f_1^{off}$, an empirical estimate of the treatment-generating distribution. Figure~\ref{fig:sim:sim_acc} examines the sensitivity of \poGAMBITTS{} and \enspoGAMBITTS{} performance to the number of samples per $(x,a)$ pair used to construct $f_1^{off}$.

\begin{figure}[ht]
    \centering
    \begin{subfigure}[t]{0.39\textwidth}
        \centering
        \includegraphics[width=\textwidth]{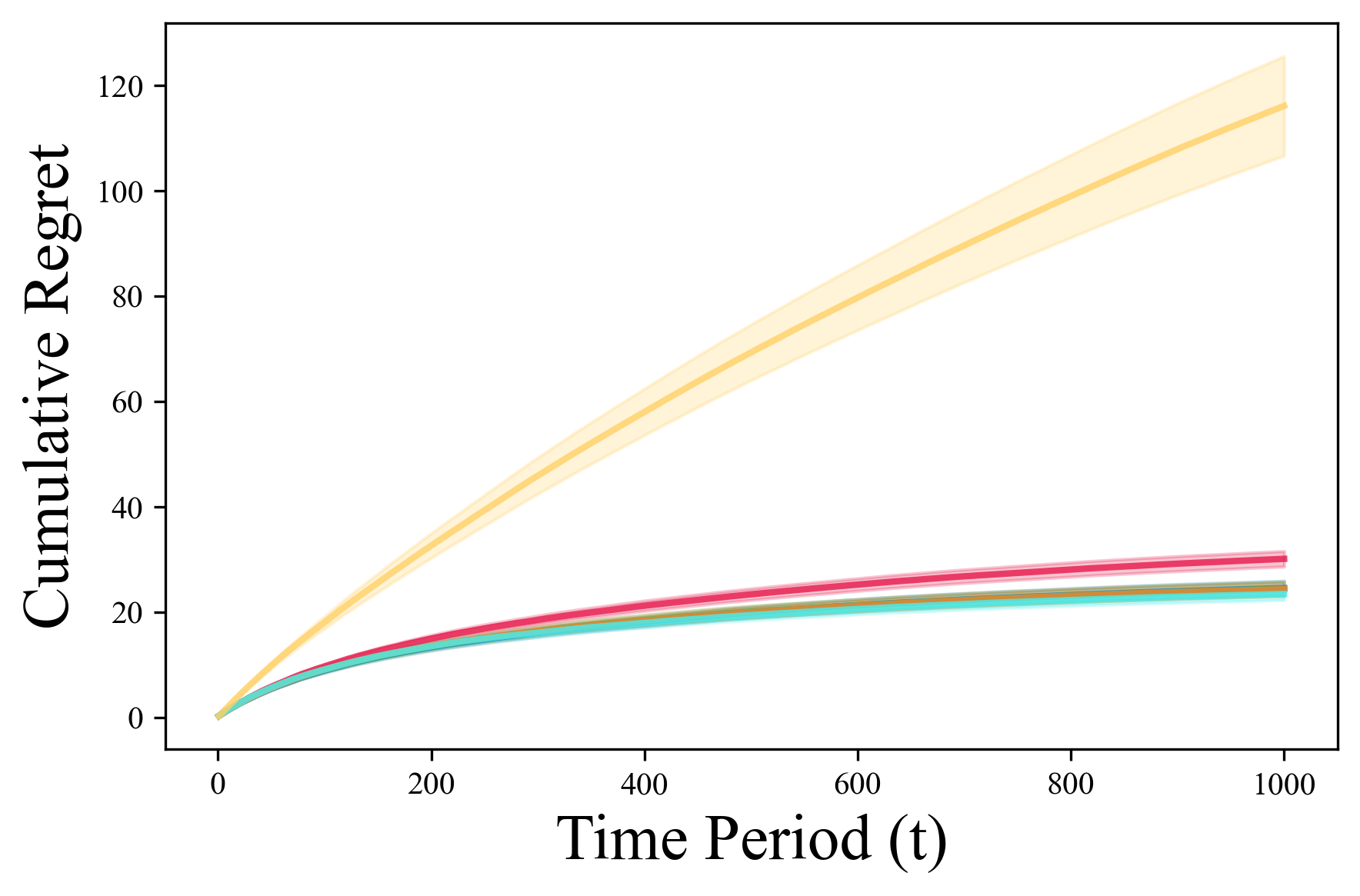}
        \caption{\poGAMBITTS}
    \end{subfigure}
    \hfill
    \begin{subfigure}[t]{0.39\textwidth}
        \centering
        \includegraphics[width=\textwidth]{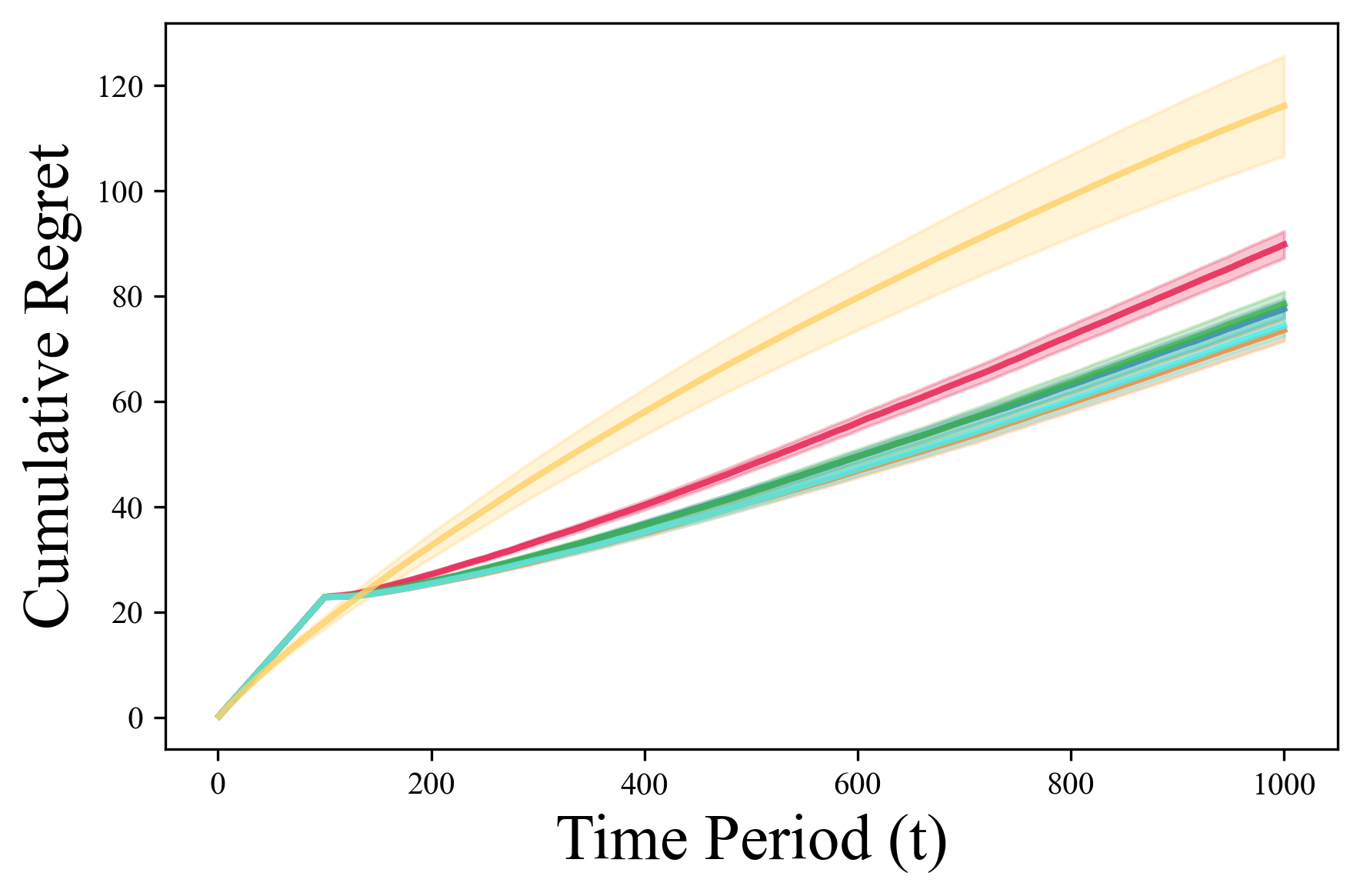}
        \caption{\enspoGAMBITTS}
    \end{subfigure}
    \raisebox{1.5em}{
    \begin{subfigure}[t]{0.2\textwidth}
        \centering
        \includegraphics[height=2.5cm]{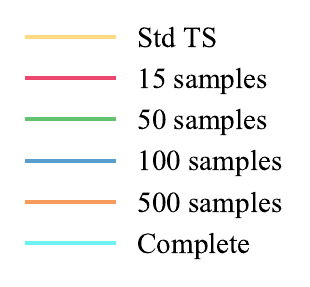}
    \end{subfigure}}
    \caption{Cumulative Regret Under Varying Levels of Simulator Access}
    \label{fig:sim:sim_acc}
\end{figure}
We expected the partially online GAMBITTS variants to perform better with increased access to the treatment-generating distribution. Figure~\ref{fig:sim:sim_acc} supports this hypothesis: cumulative regret decreases as the number of simulator draws increases. Interestingly, performance stabilizes after roughly 50 draws, suggesting that limited simulation access may be sufficient to realize most of the benefits of the partially online approach.

\subsection{Treatment Dimension}\label{app:add_sims:d}
We also examine the effect of $d$, the dimensionality of the treatment embedding. To do so, we introduce 15 additional style dimensions, supplementing the five used in Section~\ref{sec:sim}, for a total of 20 dimensions:
optimism, formality, encouragement, severity, clarity, humor, complexity, vision, detail, threat-level, urgency, politeness, personalization, conciseness, actionability, emotiveness, authoritativeness, authenticity, supportiveness, and gender-codedness.\footnote{To select the dimensions used in Figures~\ref{fig:sim:d:3}-\ref{fig:sim:d:20}, we clustered the style dimensions by their correlation in \responseDB{}. For each $d$, we used $d$-means clustering and randomly sampled one dimension from each cluster.}
\begin{figure}[H]
    \centering
    \begin{subfigure}[t]{0.31\textwidth}
        \centering
        \includegraphics[width=\textwidth]{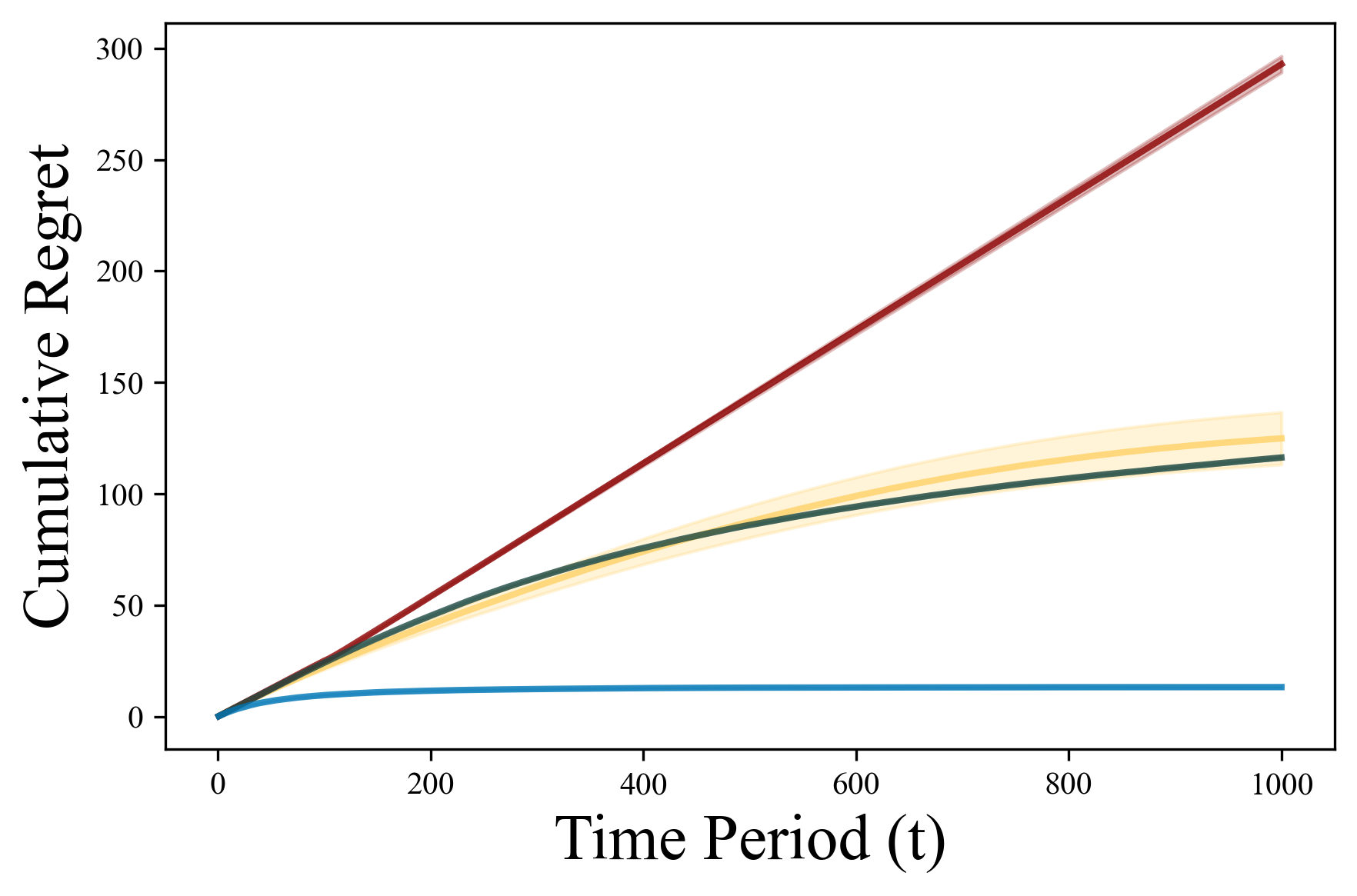}
        \caption{$d$=3}\label{fig:sim:d:3}
    \end{subfigure}
    \hfill
    \begin{subfigure}[t]{0.31\textwidth}
        \centering
        \includegraphics[width=\textwidth]{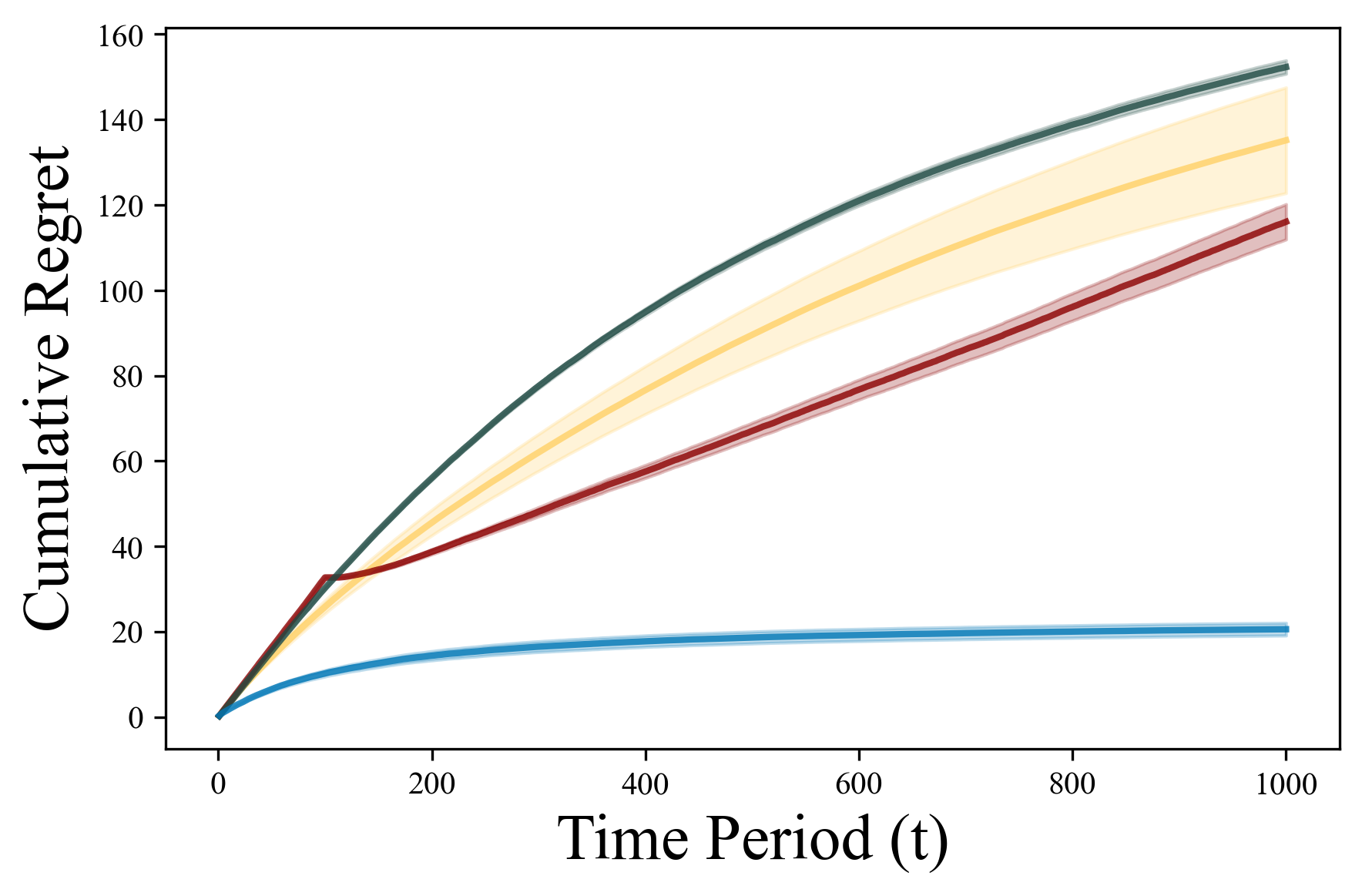}
        \caption{$d$=5}
    \end{subfigure}
    \hfill
    \begin{subfigure}[t]{0.31\textwidth}
        \centering
        \includegraphics[width=\textwidth]{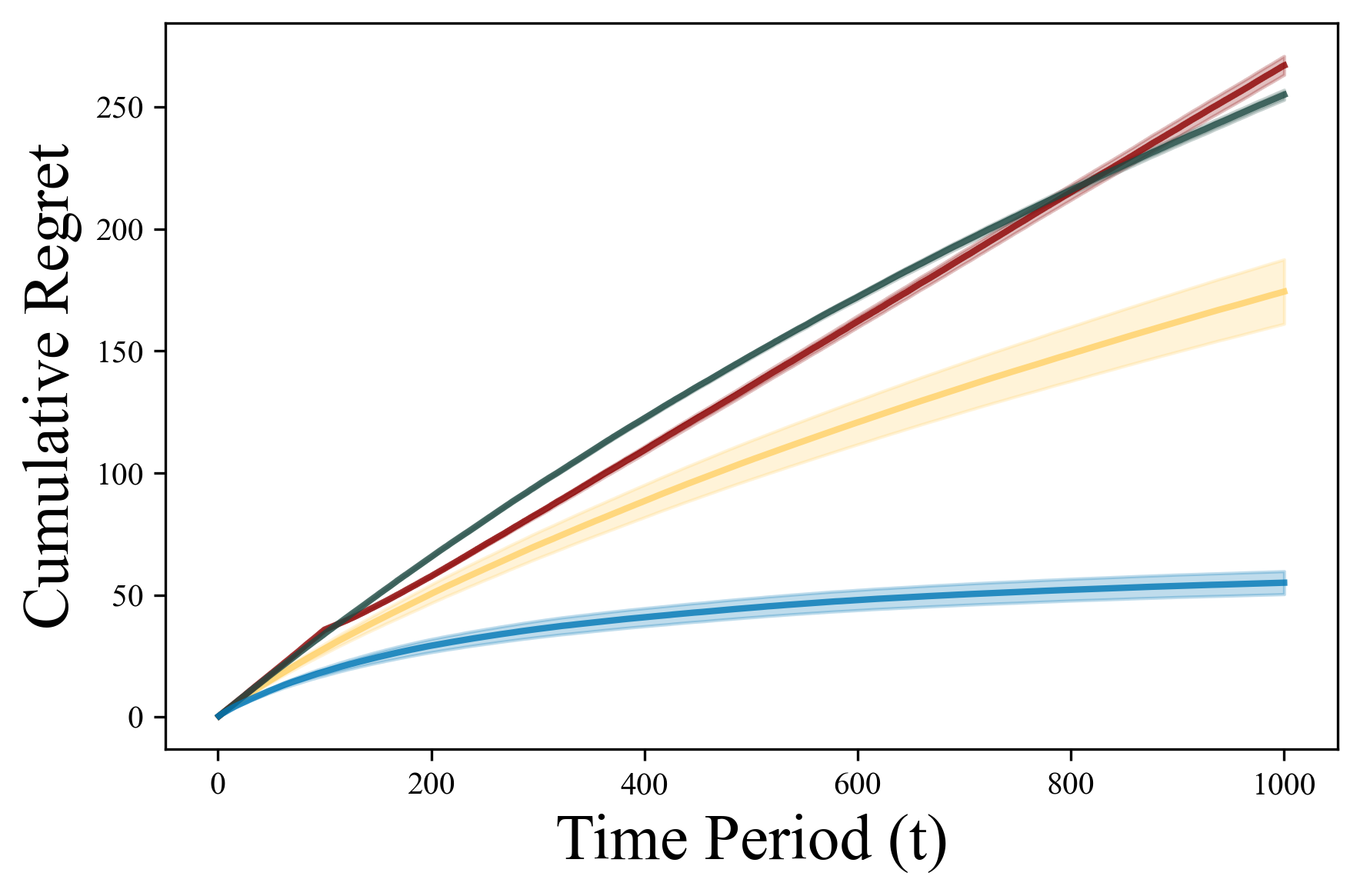}
        \caption{$d$=10}
    \end{subfigure}

    \vspace{1em}
    
    \begin{subfigure}[t]{0.31\textwidth}
        \centering
        \includegraphics[width=\textwidth]{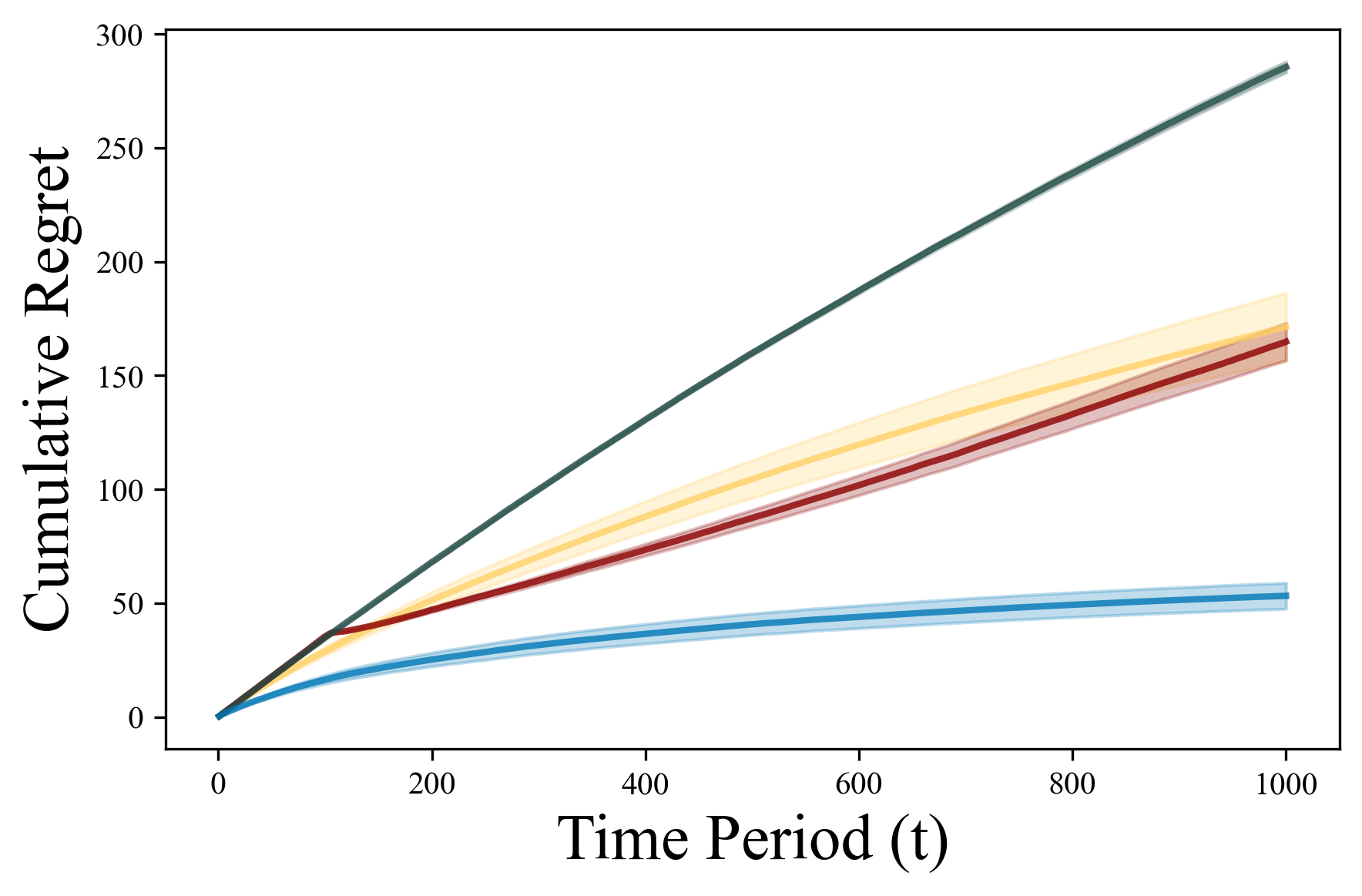}
        \caption{$d$=15}
    \end{subfigure}
    \hfill
    \begin{subfigure}[t]{0.31\textwidth}
        \centering
        \includegraphics[width=\textwidth]{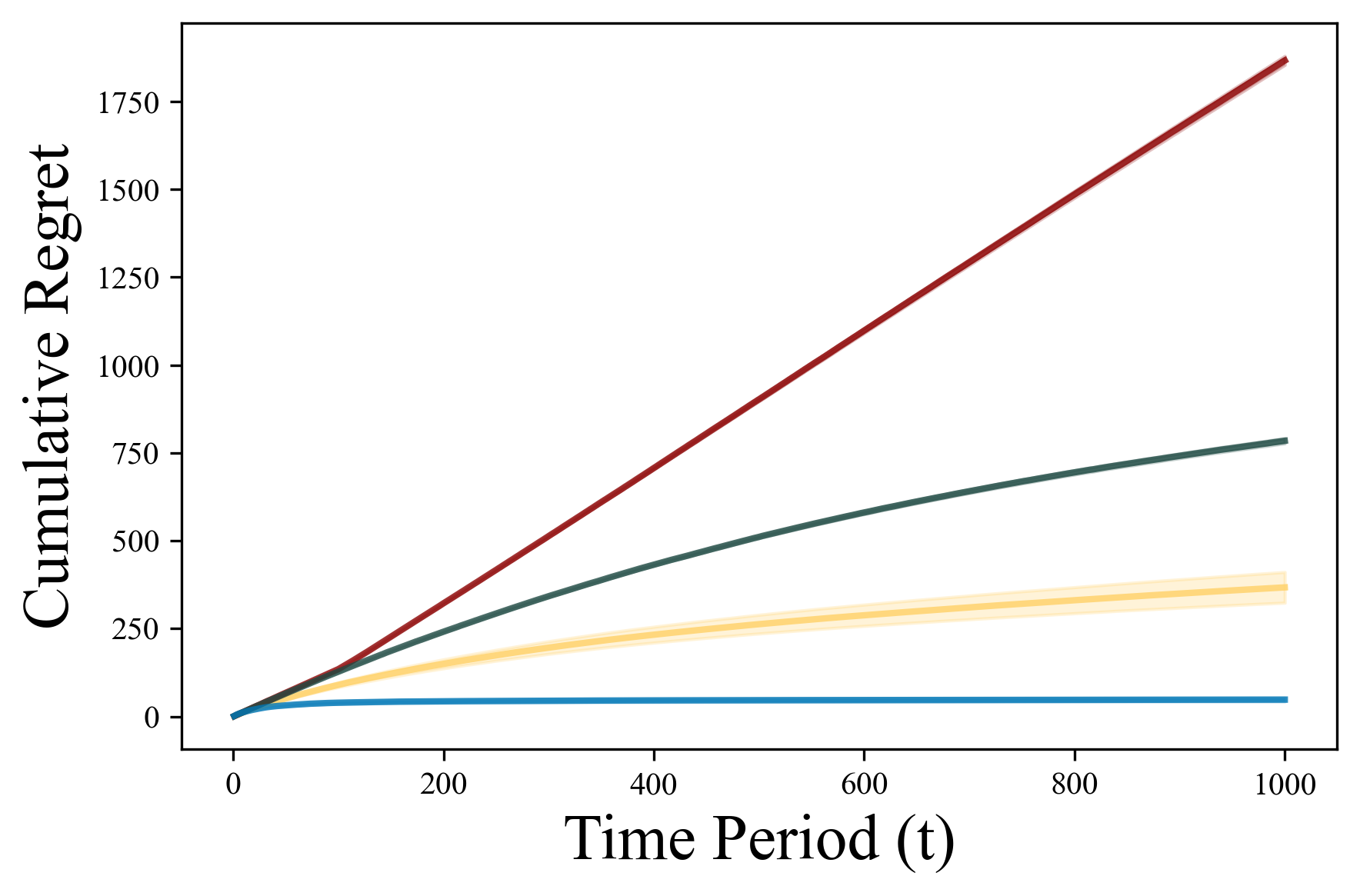}
        \caption{$d$=20}\label{fig:sim:d:20}
    \end{subfigure}
    \hfill
    \raisebox{1em}{
    \begin{subfigure}[t]{0.31\textwidth}
        \centering
        \includegraphics[width=\textwidth]{Figures/Legend_Arms.pdf}
    \end{subfigure}
    }
    \caption{Cumulative Regret for Varying Treatment Embedding Dimension}
    \label{fig:sim:d}
\end{figure}

We anticipated that GAMBITTS agents would perform increasingly poorly relative to standard Thompson sampling as the embedding dimension $d$ increased. Figure~\ref{fig:sim:d} confirms this trend for the \foGAMBITTS{} agent. However, the linear \poGAMBITTS{} agent remains competitive even as $d$ grows. As discussed in Appendix~\ref{app:add_sims:nn}, the volatility in \texttt{ens-poGAMBITTS} performance further underscores the need for future work on robust hyperparameter tuning strategies within the GAMBITTS framework.

\subsection{Scaling Number of Arms: Revisited}
As observed in Section~\ref{sec:sim:arms}, \foGAMBITTS{} underperforms standard Thompson sampling when the number of available arms, $K$, is large. However, the more appropriate comparison is to a contextual Thompson sampler, since \foGAMBITTS{} incorporates covariate information through the treatment embedding. Moreover, Section~\ref{sec:theory} shows that \foGAMBITTS{} is expected to outperform standard Thompson sampling when $\sigma_2 \gg \sigma_1$, whereas the experiments in Section~\ref{sec:sim:arms} assume $\sigma_1 = \sigma_2$. To investigate this further, we examine \foGAMBITTS{} performance with $K = 40$ under varying levels of outcome noise.

\begin{figure}[H]
    \centering
    \begin{subfigure}[t]{0.48\textwidth}
        \centering
        \includegraphics[width=\textwidth]{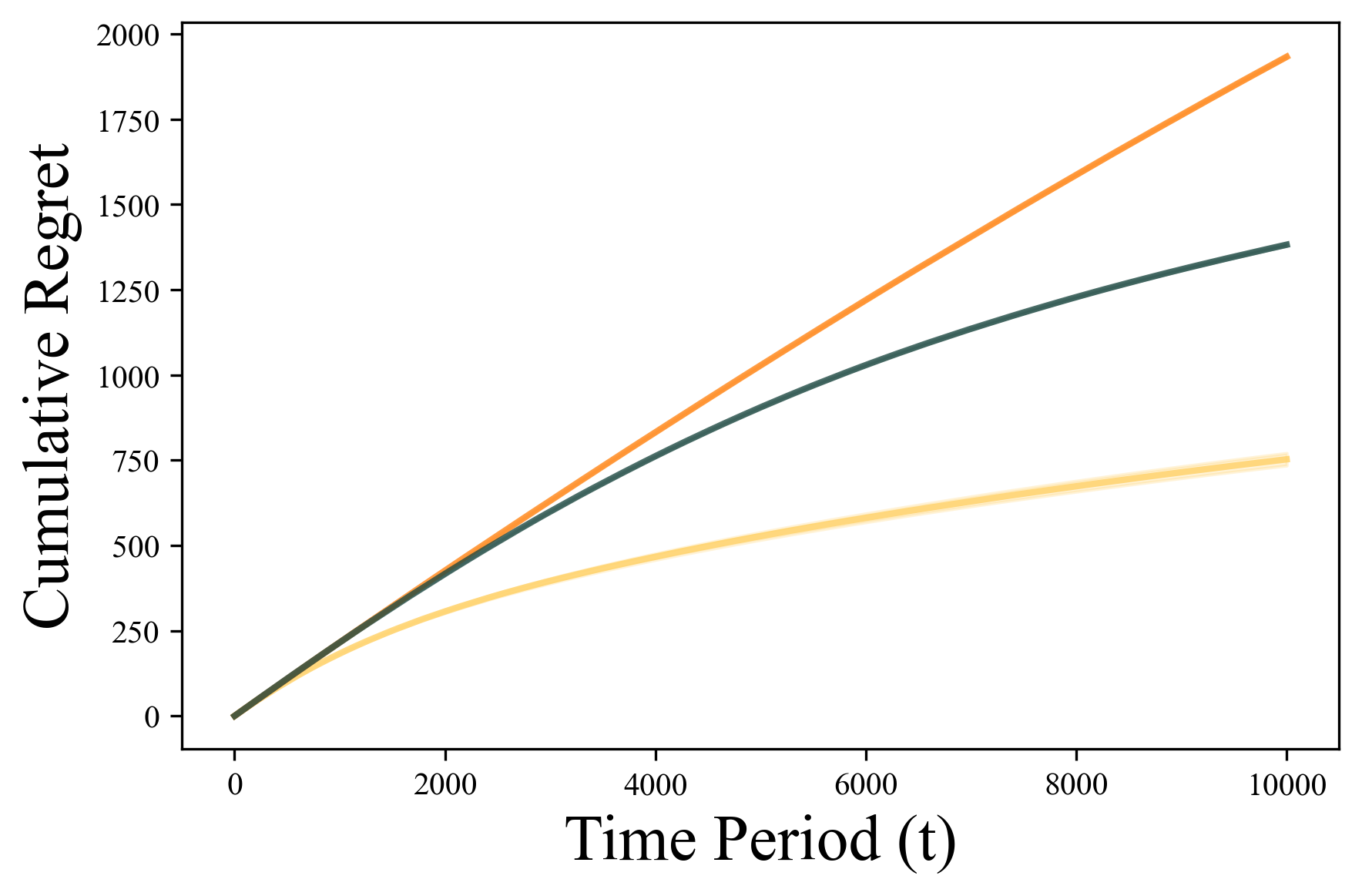}
        \caption{$\sigma_2$=0.71}
    \end{subfigure}
    \hfill
    \begin{subfigure}[t]{0.48\textwidth}
        \centering
        \includegraphics[width=\textwidth]{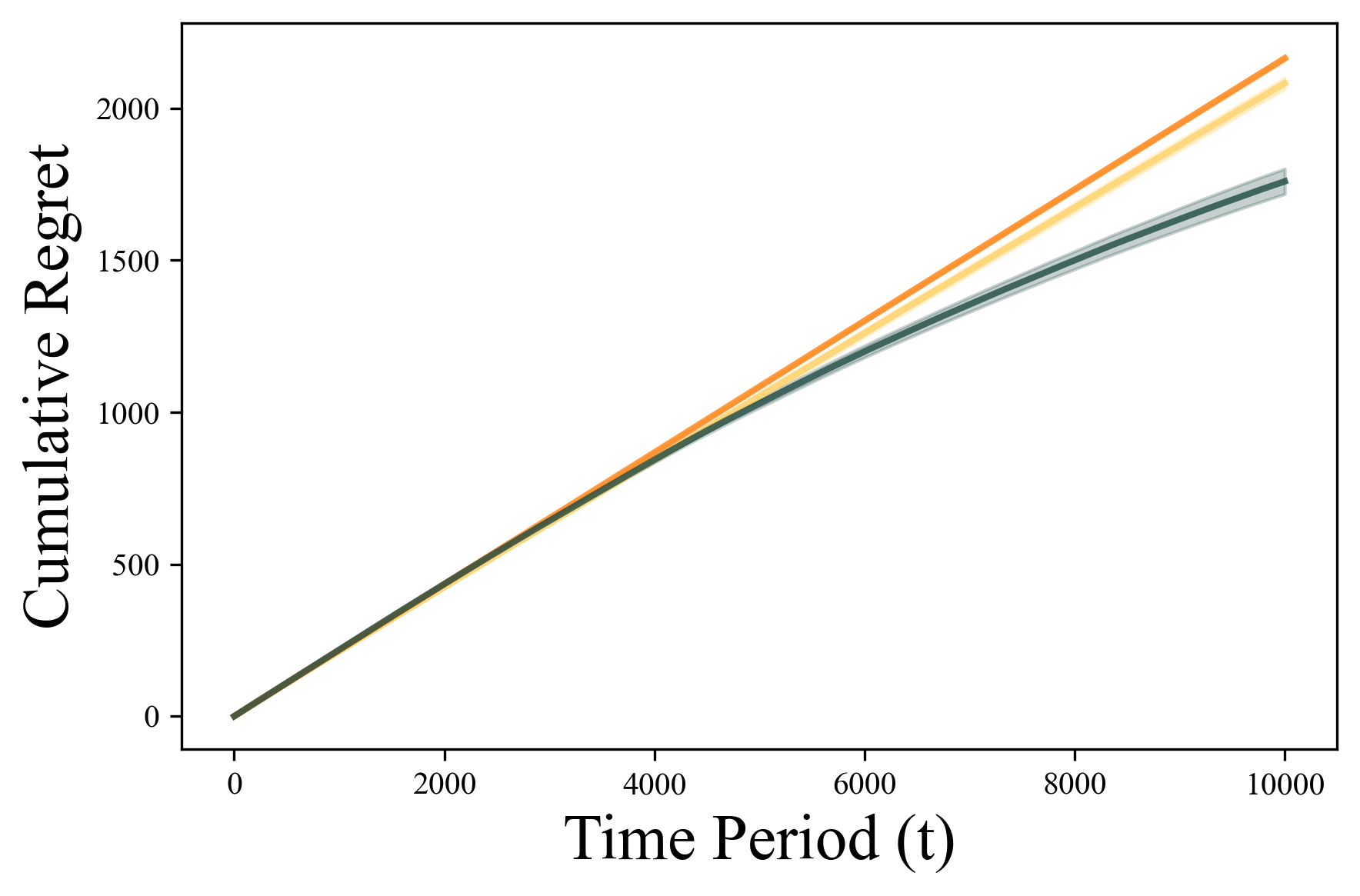}
        \caption{$\sigma_2$=12.64}
    \end{subfigure}

    \vspace{1em}
    
    \begin{subfigure}[t]{0.48\textwidth}
        \centering
        \includegraphics[width=\textwidth]{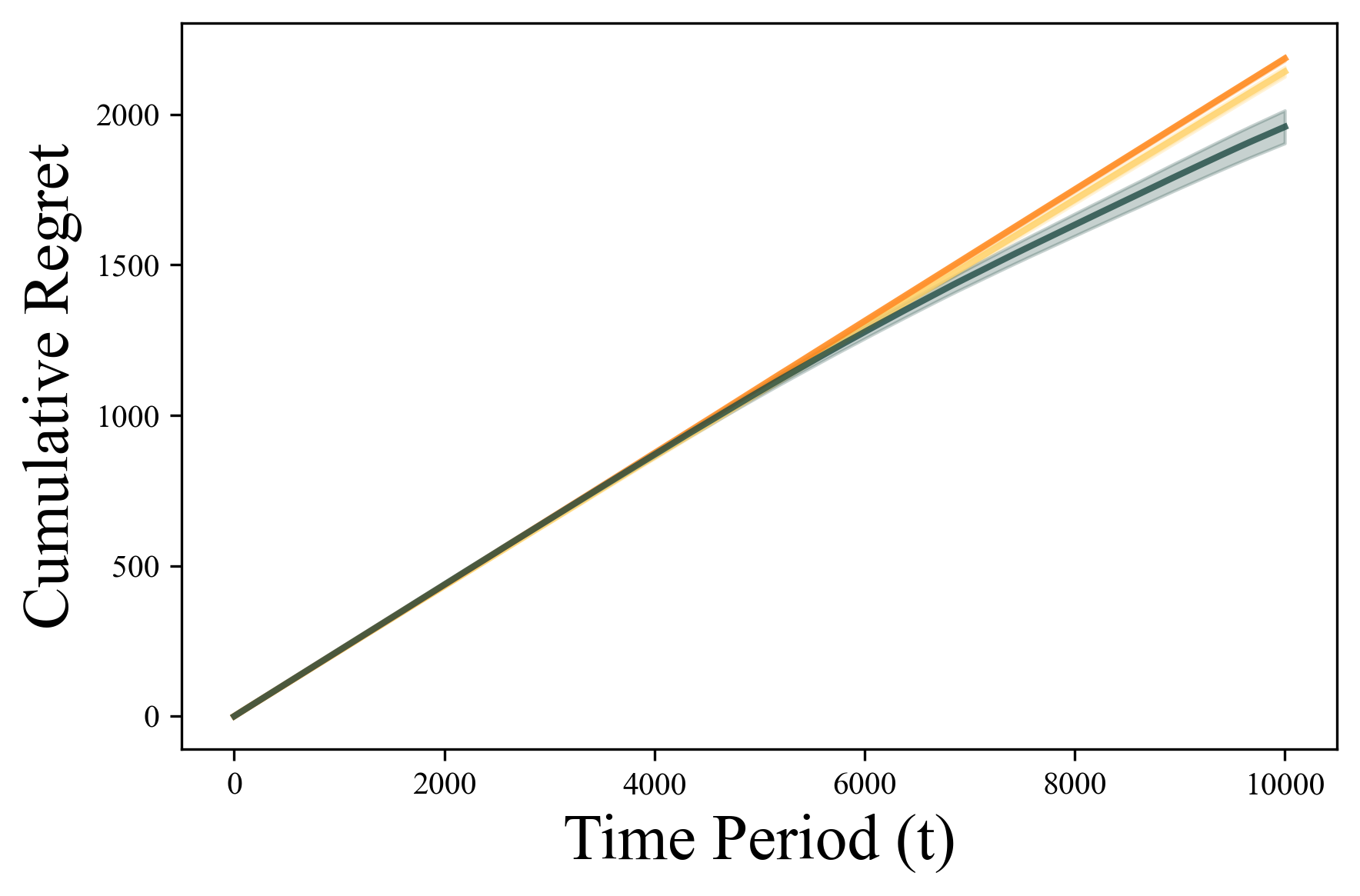}
        \caption{$\sigma_2$=24.57}
    \end{subfigure}
    \hfill
    \raisebox{4em}{
    \begin{subfigure}[t]{0.48\textwidth}
        \centering
        \includegraphics[width=.7\textwidth]{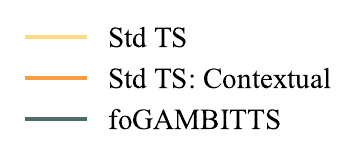}
    \end{subfigure}
    }
    \caption{Cumulative Regret for \foGAMBITTS{} with $K=40$ under Varying Reward Noise}
    \label{fig:sim:arms_foGAMBITTS}
\end{figure}

Figure~\ref{fig:sim:arms_foGAMBITTS} supports the results in Section~\ref{sec:theory}: \foGAMBITTS{} performs increasingly well relative to standard Thompson sampling as $\sigma_2$ grows. Moreover, even for small $\sigma_2$, \foGAMBITTS{} outperforms a contextual Thompson sampling agent. To better observe asymptotic behavior under high noise, we extend the time horizon to 10,000 rounds. Even so, cumulative regret does not appear to level off, suggesting that the agents are still learning the optimal policy. This persistent learning phase likely reflects the low signal-to-noise ratio estimated from the IHS study that generated the outcome parameters. Future research directions to address such scenarios could incorporate partial pooling methods (e.g., Huch et al. [2020]) to share reward model information across users and accelerate policy learning \cite{Huch2024}.

\section{Simulation Study Design}\label{app:sim_des}

To evaluate algorithm performance in a setting where ground truth is available, we needed full knowledge of the data-generating process. This ruled out querying a black-box language model during simulation runs, since doing so would obscure the underlying conditional expectations and make performance comparisons difficult to interpret. Instead, we simulated on-the-fly text generation by constructing a response database (referred to as \responseDB): for each prompt-context pair, we sampled 1,000 responses from an LLM\footnote{In particular, 
Llama 3.1 with 8.0B parameters, 131,072 context length, 2,095 embedding length, and Q4\_K\_M quantization, accessed via Ollama \cite{ollama033}.} and computed the style embeddings associated with each response, as detailed in Appendix~\ref{app:sim_des:txt_emb}. At runtime, we drew from this pool to mimic LLM-generated responses, as described in Algorithm~\ref{alg:sim:data_gen}.

We derived the outcome-generating model using data from the 2023 IHS study.\footnote{We used the square root of the number of steps taken by the user on the day following the message as the reward. This transformation, rather than using raw step counts, yielded approximately normally distributed errors.} For each outcome model structure (linear or neural network) and relevant style dimensions, we fit the corresponding model to the IHS data and used the estimated parameters to define the outcome surface. This provided a data-driven but controlled setting for comparing policies.\footnote{The one exception is our use of $\sigma_1 = \sigma_2$ in the main simulations. We observed $\sigma_1$ from the variability in \responseDB{} and explored sensitivity to larger values of $\sigma_2$ in Appendix~\ref{app:add_sims:var}.} 

Lastly, in our simulations, we included two context variables:
\begin{align*}
    \texttt{stepsprevday}\in & \set{\text{\quotes{0-4,999}, \quotes{5,000-9,999}, \quotes{10,000-15,000}, \quotes{more than 15,000}}}\text{ and }\\
    \texttt{currloc}\in & \set{\text{\quotes{home}, \quotes{work}, \quotes{a location other than home or work}}}.
\end{align*}
These context variables varied uniformly across time points and were sampled independently of the action.

\begin{algorithm}[H]
\caption{\texttt{Pseudocode for Outcome Generation}}\label{alg:sim:data_gen}
\begin{algorithmic}[1]
\Statex\textbf{Inputs}: \responseDB, style embedding functions $f_1\ddd f_D$, mean model structure $m_2\paren{z, x; \theta_2}$, data-generative outcome model parameters $\theta_2^{IHS}$, conditional variance $\sigma_2^2$.
\State\textbf{Given} a prompt $a$ and context $x$, randomly sample response $g$ from \responseDB, restricted to responses generated from $(a,x)$. 
\State\textbf{Compute} style embedding vector $z=\left[f_1(g) \dots f_D(g)\right]^\top$.
\State\textbf{Simulate} outcome $y$ by drawing $\varepsilon\sim\N{0}{\sigma_2}$ and setting $y=m_2\paren{z, x; \theta_2^{IHS}}+\varepsilon$.
\end{algorithmic}
\end{algorithm}

\subsection{Prompt Specifications}\label{app:sim_des:prompt_specs}
To initialize the LLM environment for crafting intervention text, we provided the following system prompt:
\begin{center}
\begin{minipage}{0.85\textwidth}
    You are tasked with writing a single message to an individual. The message should be two to three sentences long and will be delivered to the individual through a mobile app. The intent of the message is to encourage the individual to take more steps, with the ultimate goal of improving their physical health. The app collects the following information on its users: whether they took 0-4,999, 5,000-9,999, 10,000-15,000, or more than 15,000 steps the previous day, and whether they are currently at home, at work, or at another location.
\end{minipage}
\end{center}

To generate the intervention text used in our simulations, we constructed prompts that explicitly manipulated specific stylistic dimensions. Each prompt was designed to target a particular user context and asked the LLM to produce a message in which a given dimension (e.g., optimism, formality) was either high or low. This approach yielded a total of 40 prompts (one high and one low version for each of the 20 dimensions) attempting to promote alignment among the variation in the generated text with the interpretable axes used in the outcome model.

To generate simulation messages (for given contexts \texttt{stepsprevday} and \texttt{currloc}), we used the following prompt structure:
\begin{center}
\begin{minipage}{0.85\textwidth}
    The app provides the following context about the individual: they took \texttt{stepsprevday} steps the previous day and are currently at \texttt{currloc}. This context may be included in the message but does not need to dominate it. Please make the message mildly \texttt{<style polarity>}. \texttt{<dimension description>}. Please do not include anything in your response other than the text of the message.
\end{minipage}
\end{center}
Here, \texttt{<style polarity>} refers to the high/low level of the target stylistic dimension (e.g., \quotes{optimistic}/\quotes{pessimistic} for the optimism dimension). \texttt{<dimension description>} is a descriptor of the stylistic quality, intended to guide the LLM toward the appropriate tone.\footnote{For example, the following is the description of the optimism dimension: \quotes{Here, optimism is defined as the degree of hopeful or confident language in the message. A lower optimism level may include a more matter-of-fact or pessimistic tone, while a higher optimism level may include uplifting and buoyant language.}} Unless otherwise noted (e.g., Section~\ref{sec:sim:arms}), all experiments were run using five prompts available to the agent, chosen to be evenly distributed by expected outcome.\footnote{E.g., if an experiment included only three prompts, we selected the highest, lowest, and median prompt based on true expected value under the data-generating process.}

\subsection{Agent Specifications}\label{app:sim_des:agent_specs}
We evaluate several bandit agents in Section~\ref{sec:sim} and Appendix~\ref{app:add_sims}. Unless otherwise noted, each agent type followed the base specifications described in the corresponding subsection below. Furthermore, all GAMBITTS-based algorithms used a correctly specified mean outcome model (unless otherwise noted; e.g., Section~\ref{sec:sim:mis}). When possible, we used data from the 2022 IHS study to inform prior specification. For example, the average square root of daily steps was approximately 77, which guided our choice of prior for intercept terms.

\subsubsection{Standard Thompson Samplers}
The standard Thompson sampling agents modeled the reward for each action using a normal likelihood, with a prior of the form $\mu_a\sim\N{77}{\sigma^2_a}$, where $\sigma^2_a\sim \text{Inverse-Gamma}(1,10)$. The contextual Thompson sampling agents used the same model and priors, but indexed them by each context–action pair $(x,a)$.

\subsubsection{\poGAMBITTS{} Agent}\label{app:sim_des:poGAMBITTS}
Linear \poGAMBITTS{} agents modeled the reward as $Y \sim \mathcal{N}(\beta_0+Z^\top\beta', \Sigma)$ for $Z, \beta'\in \mathbb{R}^d$ and $\Sigma\in \mathbb{R}^{d\times d}$. Letting $\beta:=\left[\beta_0\s\s\s \beta'\right]^\top\in\mathbb{R}^{d+1}$, these agents placed the following priors on model parameters:
\begin{align*}
    \theta_2 & \sim \N{\mu_0}{B_0} \\
    \mu_0 & = \left[77\s\s 0\s \ldots\s 0\right]^\top \in \mathbb{R}^{d+1} \\
    B_0 &= \operatorname{diag}(10^{-2}, 1, \ldots, 1) \in \mathbb{R}^{(d+1) \times (d+1)}.
\end{align*}

\subsubsection{\enspoGAMBITTS{} Agent}
Our implementation of \enspoGAMBITTS{} uses PyTorch \cite{pytorch2}, with each neural network instantiated using PyTorch’s default initialization: weights and biases were drawn from a uniform distribution $\mathcal{U}\left( -\dfrac{1}{\sqrt{k}}, \dfrac{1}{\sqrt{k}} \right)$, where $k$ is the number of input features. The ensembles consisted of $M_{ens}=60$ networks. Each network was single-layer feedforward model with 64 hidden units and ReLU activation. Online training was performed with a batch size of 100 and a learning rate of 0.1. The ensembles maintained a replay buffer of size 1,024. To approximate the expectation in Algorithm~\ref{alg:ens_foGAMBITTS}, we used 100 Monte Carlo samples. In each experiment, \enspoGAMBITTS{} training began after a burn-in period of $t=100$ steps, during which the neural networks did not update, allowing sufficient data to accumulate for batch-based optimization. Lastly, the agents used the data-generative $\sigma_2$ as the perturbation noise standard deviation.

\subsubsection{\foGAMBITTS{} Agent}
\foGAMBITTS{} agents used the same prior $\pi_2$ on $\theta_2$ as defined for the \poGAMBITTS{} agent in Appendix~\ref{app:sim_des:poGAMBITTS} above. \foGAMBITTS{} agents used a normal likelihood for $Z \in \mathbb{R}^d$ with mean $\theta_1 \in \mathbb{R}^d$. The prior $\pi_1$ for $\theta_1$ is given below:
\begin{align*}
    \Sigma_1 &\sim \text{Inverse-Wishart}\left(\mathbf{I}_{d},1 \right) \\
    \theta_1 &\sim \mathcal{N}\left(\mathbf{0}_{d},\s \Sigma_1\right),
\end{align*}
where $\mathbf{I}_{d} \in \mathbb{R}^{d \times d}$, $\mathbf{0}_{d}\in\mathbb{R}^d$, and $d$ is the dimension of the latent treatment space $\mathcal{Z}$.

\subsection{Text Embedding Construction}\label{app:sim_des:txt_emb}
We aimed to quantify LLM-generated responses along five stylistic dimensions: optimism, encouragement, formality, clarity, and severity.\footnote{Along with the additional 15 dimensions for the experiments in Appendix~\ref{app:add_sims:d}.} For each dimension, $d$, we constructed a mapping $f_d:\mathcal{G}\to\mathbb{R}$, as described in Algorithm~\ref{alg:dim_emb} below.

We learned each $f_d$ by prompting an LLM to generate a set of JITAI-style messages (brief behavioral messages aimed at encouraging users to walk more, mirroring the setup of the IHS and HeartSteps studies) \cite{pHeartSteps, NeCamp2020}. Appendix~\ref{app:sim_des:txt_emb:prompt} below contains the exact prompt template used for this purpose. For each prompt, we asked the LLM to vary the degree to which its response reflected a particular stylistic dimension ($d$). The responses were intended to isolate variation along that dimension while holding other properties relatively constant. The goal was not to assign explicit scores or labels, but to induce embeddings in which each axis captured targeted semantic variation.

We then trained a variational autoencoder (VAE) on the resulting text set, constraining the model to use a single latent dimension. The output of this latent space was used as the embedding $f_d(g)$ for each message $g$.\footnote{We note that the messages used to train the VAE were distinct from those contained in the \responseDB{} database used for outcome simulation.} This process was repeated separately for each dimension. Algorithm~\ref{alg:dim_emb} below provides pseudocode for this process.

\begin{breakablealgorithm}
\caption{\texttt{Pseudocode for Style Dimension $d$ Embedding Construction}}\label{alg:dim_emb}
\begin{algorithmic}[1]
\Statex\textbf{Input}: Style dimension label $d$, LLM, high-dimensional embedding model, VAE architecture, training size $M_{train}^d$.
\State\label{algstep:start}\textbf{Generate} $M_{train}^d$ random prompts $\set{p^d_i}_{i=1}^{M_{train}^d}$ using the template presented in Appendix~\ref{app:sim_des:txt_emb:prompt}.
\State\label{algstep:llm}\textbf{For} each prompt, $p^d_i$, in Step~\ref{algstep:start}, query the LLM with the prompt to receive $r^d_i$
\State\textbf{For} each response, $r^d_i$ in Step~\ref{algstep:llm}, obtain a high-dimensional numerical embedding ($e^{d,h}_i$) by using BERT \cite{huggingface-transformers}.
\State\label{algstep:vae}\textbf{Given} embeddings $e^{d,h}_1\ddd e^{d,h}_{M_{train}^d}$, train a variational auto-encoder to obtain a one-dimensional latent representation of the response \cite{pythae2023}. Call the resulting mapping $f_d$.
\end{algorithmic}
\end{breakablealgorithm}
As with the outcome generative model, we used Llama 3.1 as the LLM for embedding construction. Additionally, we used BERT as our high-dimensional embedding model \cite{Devlin2018}, and set $M_{train}^d=22,000$.

\subsubsection{Embedding Generation Prompt Creation}\label{app:sim_des:txt_emb:prompt}
To initialize the LLM environment, we provided a system prompt describing the context and purpose of the message generation task (e.g., encouraging physical activity through behavioral messages). This global instruction was held fixed across all prompt constructions and consisted of the following:
\begin{center}
\begin{minipage}{0.85\textwidth}
You are tasked with writing eleven separate messages to an individual. Each message should be two to three sentences long and will be delivered to the individual through a mobile app. 
The messages should be independent and must not reference each other. The intent of each message is to encourage the individual to take more steps, with the ultimate goal of improving their physical health. 
The app collects the following information on its users: whether they took 0-4,999, 5,000-9,999, 10,000-15,000, or more than 15,000 steps the previous day, and whether they are currently at home, at work, or at another location.
\end{minipage}
\end{center}


To simulate messages that varied along a given stylistic axis, we used a template-based user prompt, shown below. The specific wording of the prompt varied by stylistic dimension; we present the version used for \textit{optimism} as an illustrative example.

\begin{center}
\begin{minipage}{0.85\textwidth}
The app provides the following context about the individual: they took \texttt{stepsprevday} steps the previous day and are currently at \texttt{currloc}. This context may be included in the messages but should not dominate them. 

The primary axis of variation in the messages should be optimism, with no intentional variation along other dimensions such as tone, length, or formality. For this task, create eleven messages where the optimism level varies as follows: Message 1 should represent the least optimistic tone (optimism level 0). Message 11 should represent the most optimistic tone (optimism level 10). Messages in between should gradually and evenly increase in optimism. Here, optimism is defined as the degree of hopeful or confident language in the message. A lower optimism level may include a more matter-of-fact or pessimistic tone, while a higher optimism level may include uplifting and buoyant language. Remember, low levels of optimism should have a pessimistic tone.

Please write each message on a separate line, and remember that the target length is two to three sentences for each message. Do not include anything in your response other than the text of the messages. Start every new message with its number. The optimism of the message should be independent of the user specific information.
\end{minipage}
\end{center}

For this purpose, we randomly drew context variables \texttt{stepsprevday} and \texttt{currloc} uniformly from their possible values.\footnote{We used the same context variables for the embedding construction as for the outcome generation.} We varied the user context across prompts to encourage the VAE to learn stylistic variation independently of context. For each style dimension $d$, we generated 2,000 responses (each consisting of eleven JITAI messages spanning the stylistic axis) yielding a total of 22,000 messages per dimension.

\subsubsection{Text Embedding Results}\label{app:sim_des:txt_emb:res}
After running Algorithm~\ref{alg:dim_emb}, we compared the target input rating with the VAE output. Figure \ref{fig:VAE_optimism} shows this relationship.\footnote{Note that we removed two outliers with high VAE scores, one with rating ten and one with rating nine, which distorted the scale of the plot.}
\begin{figure}[H]
    \centering
    \includegraphics[width=10cm]{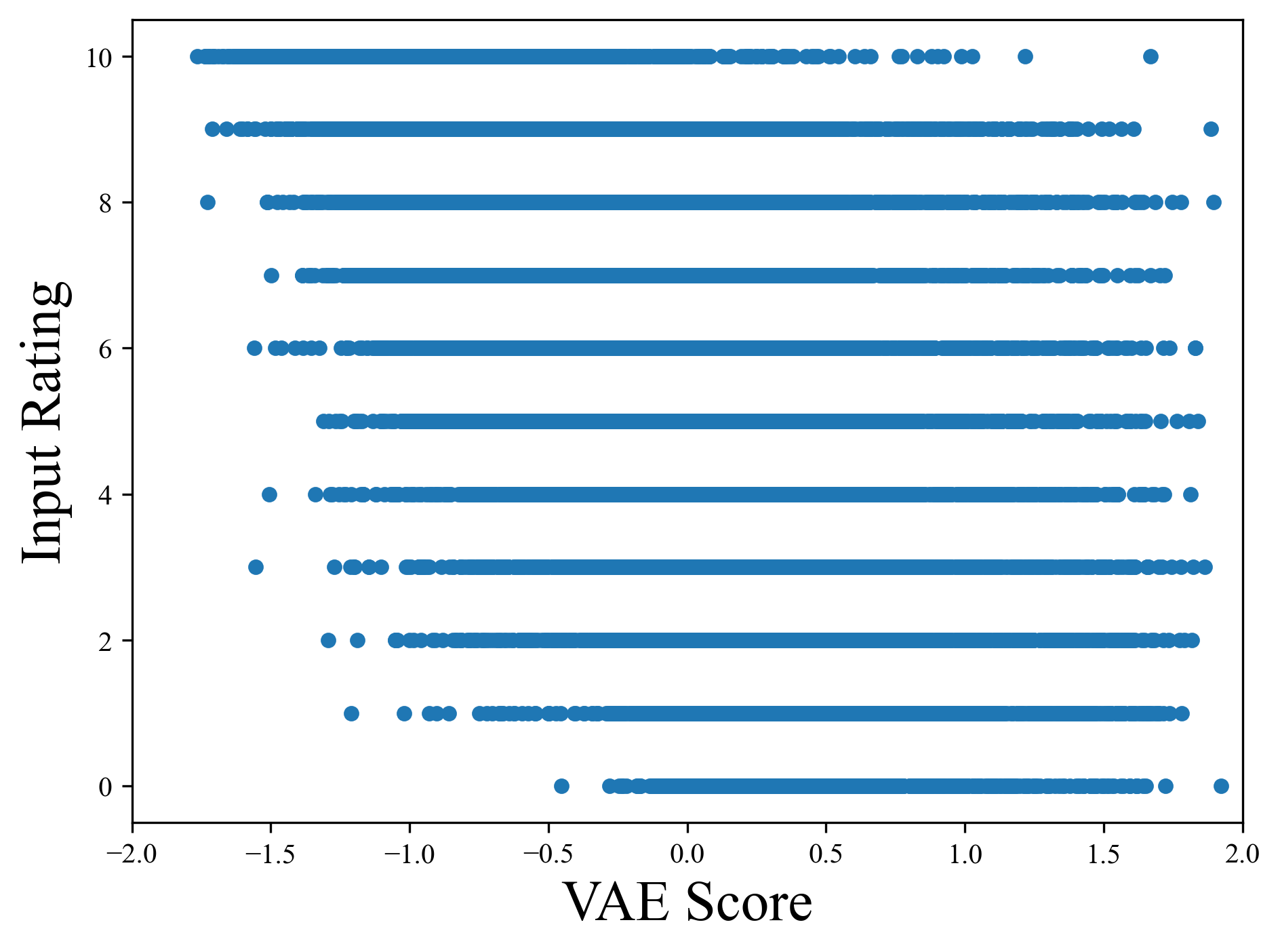}
    \caption{Optimism Scores versus Prompted Rating}
    \label{fig:VAE_optimism}
\end{figure}

We find that the VAE captured some signal related to the intended tone of the message. While the relationship in Figure~\ref{fig:VAE_optimism} is noisy, two caveats are worth noting. \textit{(i)} Precisely quantifying variation along a semantic dimension is likely a domain-specific challenge and, while important, it is outside the scope of this work. Our goal here is not to validate the embedding as a standalone construct; nonetheless, alignment between prompted tone and latent score adds interpretability to the simulation setup. \textit{(ii)} Some of the observed noise reflects inconsistencies in LLM behavior: the model occasionally generates responses that do not align with the requested tone. For example, the following message was produced in response to a prompt with optimism rating 1, but appears highly optimistic: \quotes{You're on a roll after yesterday's success - keep up the pace and see where it takes you!} Larger LLMs or more carefully tuned unsupervised methods may improve the correspondence between prompted tone and latent score.

\subsubsection{Relationship between Style Embeddings}\label{app:sim_des:txt_emb:rel}
Table~\ref{tab:style_corr} below shows the empirical correlation (within the \responseDB{} dataset constructed for outcome generation) among the five stylistic embeddings discussed in Section~\ref{sec:sim:mis}.

\begin{table}[ht]
\centering
\caption{Style Dimension Correlation Table}
\begin{tabular}{lccccc}
\toprule
 & Optimism & Formality & Severity & Clarity & Encouragement \\ 
\midrule
Optimism       &  1.000 & -0.095 &  0.787 &  0.184 & -0.960 \\
Formality      & -0.095 &  1.000 & -0.568 & -0.248 & -0.121 \\
Severity       &  0.787 & -0.568 &  1.000 &  0.066 & -0.644 \\
Clarity        &  0.184 & -0.248 &  0.066 &  1.000 & -0.201 \\
Encouragement  & -0.960 & -0.121 & -0.644 & -0.201 &  1.000 \\
\bottomrule
\end{tabular}
\end{table}\label{tab:style_corr}

Furthermore, we analyzed the structure of the relationship between the different style dimensions. 
Figure~\ref{fig:dim_pair_5} shows pairwise scatterplots of style embeddings for the five primary dimensions. Recall that, in addition to the five primary dimensions, we created 15 more for the experiments in Appendix~\ref{app:add_sims:d}. The pairwise scatterplots for all 20 dimensions are shown in Figure~\ref{fig:dim_pair_all}. These figures provide a qualitative view of the dependencies among dimensions. However, the direction of the embeddings is not directly comparable across dimensions; e.g., a high encouragement score corresponds to an encouraging tone, while a high optimism score reflects greater pessimism.

    
\begin{figure}[H]
    \centering
    \includegraphics[width=10cm]{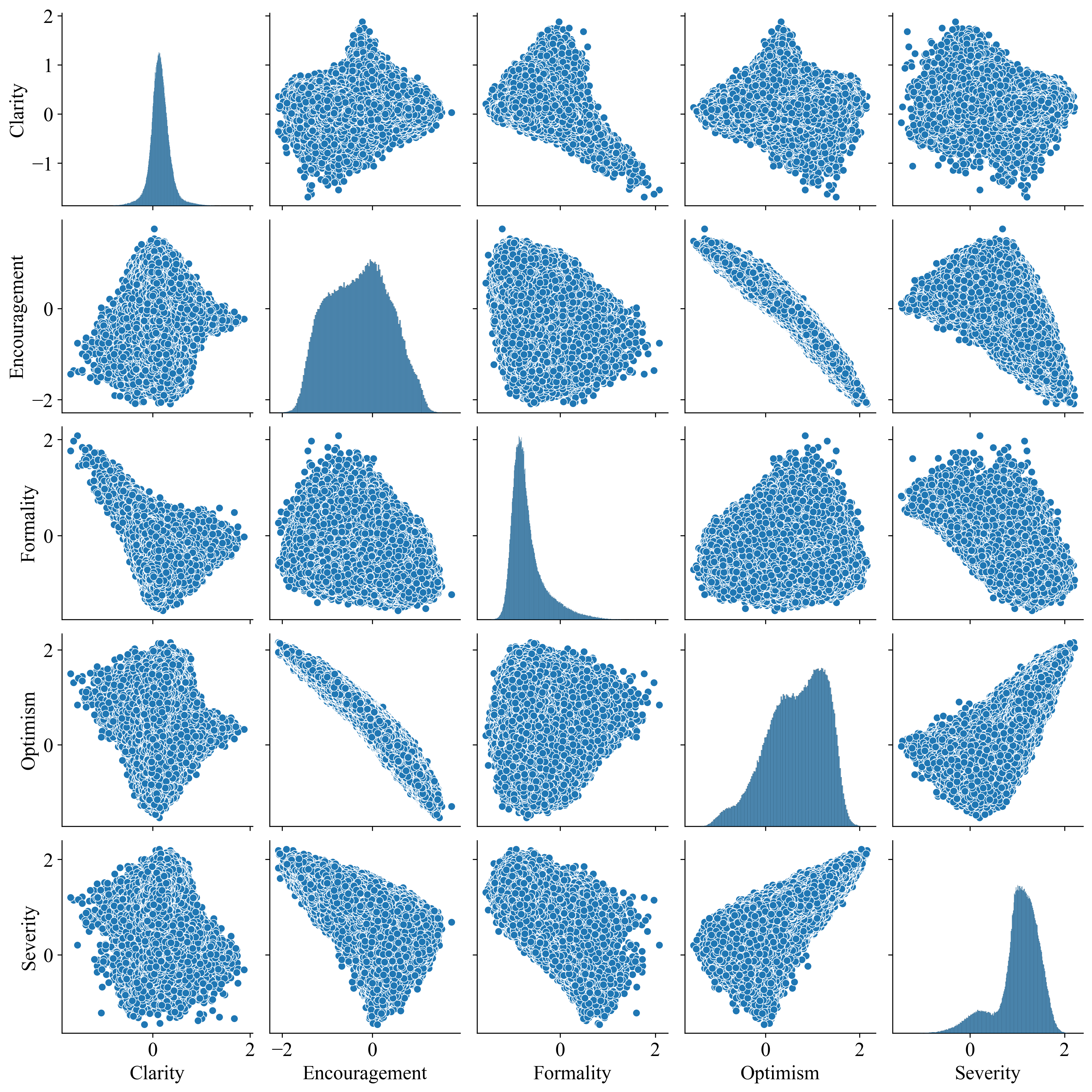}
    \caption{Pairwise Scatterplot of Five Primary Style Dimensions}
    \label{fig:dim_pair_5}
\end{figure}

\begin{figure}[p]
    \centering
    \includegraphics[width=\textwidth,height=\textheight,keepaspectratio]{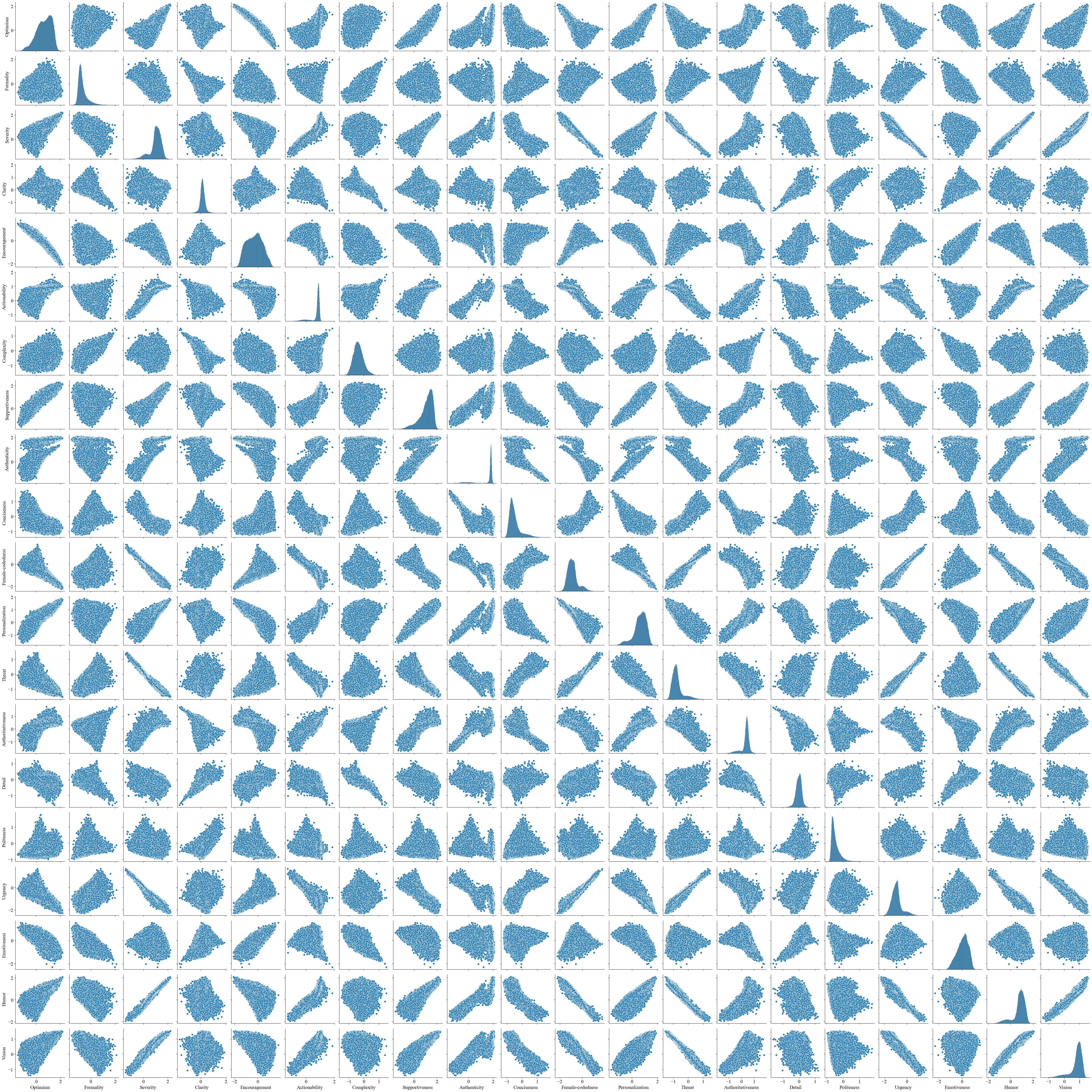}
    \caption{Pairwise Scatterplot of All Style Dimensions}
    \label{fig:dim_pair_all}
\end{figure}
\clearpage


\subsection{Computing Resources}
The simulation of text-based treatments generated by Llama 3.1 was implemented on a high-performance compute cluster. Each node in the cluster included two 2.9 GHz Intel Xeon Gold 6226R processors, 8 GB of allocated RAM, and a single NVIDIA A40 GPU with 48 GB of memory. VAE training was performed on a similar setup, except each node was allocated 16 GB of RAM. The simulation studies were conducted on nodes equipped with two 3.0 GHz Intel Xeon Gold 6154 processors, 16 CPU cores, and between 16–32 GB of allocated RAM. Under these specifications, each experiment presented in Section~\ref{sec:sim} and Appendix~\ref{app:add_sims} required approximately 50-70 minutes to complete.

\section{Broader Impact}\label{app:broader_impact}
As discussed in Section~\ref{sec:intro}, text-based sequential interventions are used across domains such as mobile health and education to support behavioral change and improve outcomes. We view the GAMBIT framework and associated algorithms as a step toward more personalized decision-making in such settings. At the same time, personalization introduces risks (particularly in sensitive domains) where automatically generated content may influence individuals in subtle or unintended ways. In some applications, it may be inappropriate to deliver GenAI-generated content without human oversight or safety review. We discuss these concerns and potential adaptations to the framework in Section~\ref{sec:conc}, with the hope that variants of GAMBIT can be developed to address such challenges directly.

\end{document}